\newcommand{\monzeroun}{0\scalebox{0.75}[0.7]{-}\hspace{-1.5pt}1}
\begin{document}
\title{PAC-Bayesian Domain Adaptation Bounds for Multi-view learning}
%
%
\author{Mehdi Hennequin \inst{1,2}\orcidID{0000-0001-8074-6520} \and Khalid Benabdeslem \inst{2} \and Haytham Elghazel\inst{2}}
\authorrunning{M. Hennequin et al.}
%
\institute{Galil\'e Group, 28 Bd de la République, 71100 Chalon-sur-Saône, France
\email{hennequin.pro.mehdi@gmail.com}\\
 \and Universit\'e Lyon 1, LIRIS,\\ UMR CNRS 5205, F-69622, France\\
\email{\{mehdi.hennequin,khalid.benabdeslem,haytham.elghazel\}@univ-lyon1.fr}}

\maketitle              
\begin{abstract}
This paper presents a series of new results for domain adaptation in the multi-view learning
setting. The incorporation of multiple views in the domain adaptation was paid little attention in the previous studies. In this way, we propose an analysis of generalization bounds with Pac-Bayesian theory to consolidate the two paradigms, which are currently treated separately. Firstly, building on previous work by Germain et al. \cite{GER13,GER15}, we adapt the distance between distribution proposed by Germain et al. for domain adaptation with the concept of multi-view learning. Thus, we introduce a novel distance that is tailored for the multi-view domain adaptation setting. Then, we give Pac-Bayesian bounds for estimating the introduced divergence. Finally, we compare the different new bounds with the previous studies.  

\keywords{PAC-Bayesian  \and Domain Adaptation \and Multi-view Learning.}
\end{abstract}
\section{Introduction}
Predictive models must fit data with different distributions. In fact, in the theory of statistical learning, the strong hypothesis that training and test data are to be drawn from the same probability distribution. However, this assumption is often too restrictive to be used in practice or in many real-life applications. Indeed, a hypothesis is learned and deployed in different and significantly changing environments. Due to that, we obtain a shift in the data distributions. A typical solution for addressing this issue is to retrain the models. Nonetheless, this process of retraining can result in both time and financial expenses. Thus, we need to design methods for adapting a model from learning (source) data to test (target) data. In the field of machine learning, this scenario is commonly known as \textit{domain adaptation (DA)} or \textit{covariate shift} \cite{MOR12}. Essentially, DA techniques aim to address the challenge of learning when the learning task is the same but the domains exhibit variations in their feature spaces or marginal conditional probabilities.

On the other hand, data can be expressed through multiple independent feature sets, as stated in \cite{XU13}. As a result, the data can be partitioned into independent groups, known as views \cite{XU13}. In domain adaptation, these views are usually merged into a single view to align with the learning objective. Nonetheless, this process of merging can potentially result in negative transfer, as highlighted in \cite{ZHA20}, whereby the integration of each view's unique statistical characteristics could lead to the introduction of unwanted data or knowledge. We find little research on \textit{multi-view domain adaptation (MVDA)} \cite{YAN12,HEN22,MUN21} where considerable attention has been paid to algorithm, while analysis of generalization bound remains largely understudied. In this way, we propose a theoretical analysis by means of Pac-Bayesian theory, with the aim of unifying the two paradigms that have conventionally been treated as separate entities \cite{GER13,GER15,GOYAL17}. 

\subsubsection{Contribution.}
In this paper, we explore the PAC-Bayesian framework to propose MVDA bounds in a binary classification situation without target labels (unsupervised domain adaptation). To the best of the authors’ knowledge, no generalization bounds was proposed for \textit{unsupervised multi-view domain adaptation}. First, we propose a novel distance adapted in our setting to measure the distance between distribution. In a second phase, we introduce a general Pac-Bayesian theorem as proposed in \cite{GER09,GER15a,BEG16} for estimating the novel distance introduced. Then, we provide specialization of our general theorem to the three versions of PAC-Bayesian theorems \cite{MCA99b,SEE02,LAN05,CAT07} following the same principles as Germain et al. \cite{GER09,GER15,BEG16,GOYAL17}. Finally, we propose a PAC–Bayesian domain adaptation bound in the multi-view setting.

\section{Related Works}\label{related works}
In this section, we present theoretical studies of multi-view learning and domain adaptation related to the Pac-Bayesian theory. First, we introduce the theoretical concepts needed for the following sections. In a second phase, we recall the general PAC–Bayesian generalization bounds in the setting of binary classification. Then, we present the work done by on the PAC–Bayesian domain adaptation. Finally, we present the works achieved to Pac-Bayesian theory and multi-view learning.  
\subsection{Notation and Assumptions}\label{notation}

This section introduces the definitions and concepts needed for the following sections. Let $\mathcal{X} \in \mathbb{R}^{d}$ and $\mathcal{Y} = \{-1, +1\}$,  denote respectively input space of dimension $d$ and output space. In the \textit{scenario of unsupervised multi-view domain adaptation (UMVDA)} we consider, the learner receives two samples: a labeled sample from a source
domain $\mathbb{D}_{S}$, defined by a distribution $Q$ over $\mathcal{X} \times \mathcal{Y}$; and unlabelled  sample according to the target domain $\mathbb{D}_{T}$, defined by a distribution $P$ over $\mathcal{X} \times \mathcal{Y}$; $Q_\mathcal{X}, P_\mathcal{X}$ being the respective marginal distributions over $\mathcal{X}$. We denote by $\mathbb{S} = \{(\bm{x}_{i},y_{i})\}_{i=1}^{m} \in (\mathcal{X} \times \mathcal{Y})^{m}$ the labeled sample of size $m$ received from the source domain,  which is drawn i.i.d. from $Q$. In addition, we consider that the data instances can be represented or partitioned in $V$ different views. 
 More formally, for $v \in \{1,\ldots,V\}$, and $V \geq 2$ is the number of views of not-necessarily the same dimension. Throughout the paper, we will abbreviate $v \in \{1,\ldots,V\}$ with $v \in [\![\mathcal{V}]\!]$. The labeled samples in a multi-view setting are defined as follows: $\forall \; v \in [\![\mathcal{V}]\!], \; \mathbb{S} = \{(\bm{x}_{i}^{v},y_{i})\}_{i=1}^{m} \in (\mathcal{X}^{v} \times \mathcal{Y})^{m}$, with $\{\bm{x}_{i}^{v}\}_{i=1}^{m}$ assumed to be drawn i.i.d. according to distribution $Q$. Note that the multi-view observations $\{\bm{x}_{i}\}_{i=1}^{m} = \{(\bm{x}_{i}^{1},\dots,\bm{x}_{i}^{V})\}_{i=1}^{m}$ belong to a multi-view input set $\mathcal{X} = \mathcal{X}^{1} \times \dots \times \mathcal{X}^{V}$, with each $\mathcal{X}^{v}$ being a subset of $\mathbb{R}^{d_{v}}$, and $d_{v}$ denoting the dimension of the $v^{th}$ view. The total dimension $d$ is given by $d = d_{1} \times \dots \times d_{V}$. Similarly, we define unlabeled samples from the target domain as $\forall \; v \in [\![\mathcal{V}]\!], \mathbb{T}_{\mathcal{X}}  = \{\bm{x}'^{v}_{i}\}_{i=1}^{n}$ of size $n$, drawn i.i.d. according to $P_{\mathcal{X}}$ (note that $\mathbb{T} = \{(\bm{x}'^{v}_{i},y_{i})\}_{i=1}^{n}$ would be drawn i.i.d. according to $P$). In our context, we consider that we have no labels in the target domain, but we have prior knowledge about the views in both domains.

We define a hypothesis class $\mathcal{H}$ of hypotheses $h : \mathcal{X} \rightarrow \mathcal{Y}$ . Besides, for the concept of multi-view learning, we consider for each view $v \in [\![\mathcal{V}]\!]$, a set $\mathcal{H}_{v}$ of hypothesis $h : \mathcal{X}_{v} \rightarrow \mathcal{Y}$. The expected source risk or true source risk of $h \in \mathcal{H}$ over the distribution $Q$, are the probability that h errs on the entire source domain, $\mathcal{R}_{Q}(h) = \mathbb{E}_{(\bm{x},y)\sim Q}\big[\mathcal{L}_{\monzeroun} \big(h(\bm{x}),y\big)\big]$, where $\mathcal{L}_{\monzeroun}(a, b) = \mathbb{I}\big[a\neq b\big]$ is the $\monzeroun$-loss function and where $\mathbb{I}[a\neq b]$ is the indicator function which returns $1$ if $a\neq b$ and $0$ otherwise. The extension of the $\monzeroun$-loss to real-valued voters (of the form $f : \mathcal{X} \rightarrow [-1, 1]$) is given by the following definition, $\mathcal{L}_{\monzeroun}\big(f(x),y\big) = \mathbb{I}\big[y.f(x) \neq 0]$.
For any two functions $(h,h') \in \mathcal{H}$, we denote by $\mathcal{R}_{Q_{\mathcal{X}}}(h,h')$ the expected disagreement of $h(x)$ and $h'(x)$, which measures the probability that $h$ and $h'$ do not agree on the entire marginal distributions $Q_{\mathcal{X}}$ over $\mathcal{X}$, $\mathcal{R}_{Q_{\mathcal{X}}}(h,h') = \mathbb{E}_{\bm{x}\sim Q_{\mathcal{X}}} \big[\mathcal{L}_{\monzeroun} \big(h(\bm{x}),h'(\bm{x})\big)\big]$. The empirical source risk $\mathcal{R}_{\mathbb{S}}(h)$ for a given hypothesis $h \in \mathcal{H}$ and
a training sample $\mathbb{S} = \{(x_{i}, y_{i})\}^{m}_{i=1}$ where each example is drawn i.i.d. from $Q$ is defined as, $\mathcal{R}_{\mathbb{S}}(h) = \frac{1}{m}\sum_{i=1}^{m}\mathcal{L}_{\monzeroun} \big(h(\bm{x}_{i}),y_{i}\big)$. In the same way, we define the empirical source disagreement by $\mathcal{R}_{\mathbb{S}_{\mathcal{X}}}(h,h') =  \frac{1}{m} \sum_{i=1}^{m}\mathcal{L}_{\monzeroun} \big(h(\bm{x}_{i}),h'(\bm{x}_{i})\big)$,  where $\mathbb{S}_{\mathcal{X}} = \{(x_{i})\}^{m}_{i=1}$ where each example is drawn i.i.d. from $Q_{\mathcal{X}}$ . The expected target risk $R_{P}(\cdot)$ over $P$, the expected target disagreement $R_{P_{\mathcal{X}}}(\cdot,\cdot)$ over $P_{\mathcal{X}}$, the empirical target risk $R_{\mathbb{T}}(\cdot)$ over $P$,  the empirical target disagreement $R_{\mathbb{T}_{\mathcal{X}}}(\cdot,\cdot)$ over $P_{\mathcal{X}}$ are defined in a similar way.    

\subsection{Simple Pac-Bayesian Bounds}\label{Simple Pac-Bayesian bounds}

The PAC-Bayesian approach abbreviated Pac-Bayes is analysis techniques of generalization in the theory of statistical learning. PAC-Bayes inequalities were introduced by \cite{SHAW97}, and \cite{MCA98a,MCA99b}; and further formalised
\cite{CAT07,CAT04,CAT03} and other (see \cite{GUE19} for a recent survey and \cite{ALQ21} for an introduction to the
field). It provides PAC (probably approximately correct, Valiant, 1984) generalization bounds by expressing a trade-off between the empirical risk on the training set
and a measure of complexity of the predictors class as a weighted majority vote over a set of functions from the hypothesis space $\mathcal{H}$. 

In this section, we recall the general PAC–Bayesian generalization bounds in the setting of binary classification with the $\monzeroun$-loss defined in the above section. To
derive such a generalization bound, one assumes a prior distribution $\mathcal{P}$ over $\mathcal{H}$, which models an a priori belief on the hypothesis from $\mathcal{H}$ before the observation of the
training sample $\mathbb{S} \sim Q^{m}$. Given the training sample $\mathbb{S}$, the learner aims at finding a posterior distribution $\mathcal{Q}$ over $\mathcal{H}$ that leads to a well-performing $\mathcal{Q}$-weighted majority vote $\mathcal{B}_{\mathcal{Q}}$ (also called the bayes classifier) defined as: 
\begin{equation}
    \mathcal{B}_{\mathcal{Q}}(\bm{x}) = \; \text{sign}\Big[\underset{h\sim \mathcal{Q}}{\mathbb{E}}h(x)\Big].
\end{equation}
We want to learn $\mathcal{Q}$ over $\mathcal{H}$ such that it minimizes the true risk $\mathcal{R}_{Q}(\mathcal{B}_{\mathcal{Q}})$ of the $\mathcal{Q}$-weighted majority vote. However, the risk of $B_{\mathcal{Q}}$ is known to be NP-hard, therefore PAC–Bayesian generalization bounds do not directly focus on the risk of $\mathcal{B}_{\mathcal{Q}}$. Instead, it gives an upper bound over the expectation over $\mathcal{Q}$ of all the individual hypothesis true risk called the expected/true Gibbs risk: 
\begin{equation}
    \mathcal{R}_{Q}(G_{\mathcal{Q}}) = \underset{h\sim \mathcal{Q}}{\mathbb{E}}\Big[\mathcal{R}_{Q}(h)\Big].
\end{equation}

The expected Gibbs risk is closely related to the deterministic $\mathcal{Q}$-weighted majority vote. Indeed, if $\mathcal{B}_{\mathcal{Q}}(\cdot)$ misclassifies $\bm{x} \in \mathcal{X}$, then at least half of the classifiers (under measure $\mathcal{Q}$) make a prediction error on $\bm{x}$. Therefore, we have $ \mathcal{R}_{Q}(\mathcal{B}_{\mathcal{Q}}) \leq2 \,\mathcal{R}_{Q}(G_{\mathcal{Q}})$. Another result on the relation between $ \mathcal{R}_{Q}(\mathcal{B}_{\mathcal{Q}})$ and $ \mathcal{R}_{Q}(G_{\mathcal{Q}})$ know as C-bound (\cite{LAC06}) and defined as:

\begin{equation} \label{C-bound}
    \mathcal{R}_{Q}(\mathcal{B}_{\mathcal{Q}}) \leq 1 - \frac{\big(1-2\,\mathcal{R}_{Q}(G_{\mathcal{Q}})\big)^{2}}{1-2\,d_{Q_{\mathcal{X}}}(\mathcal{Q})},
\end{equation}
where $d_{Q_{\mathcal{X}}}(\mathcal{Q})$\label{expected_disagreement} corresponds to the expected disagreement between pairs of voters on the marginal distribution $Q_\mathcal{X}$:
\begin{equation}
    d_{Q_{\mathcal{X}}}(\mathcal{Q}) = \underset{(h,h')\sim\mathcal{Q}^{2}}{\mathbb{E}}\Big[\mathcal{R}_{Q_{\mathcal{X}}}\big(h,h'\big)\Big].
\end{equation}

The authors in \cite{LAC06} observed that in a binary classification context, the expected disagreement $d_{Q_{\mathcal{X}}}(\mathcal{Q})$ is closely related
to the notion of expected joint error $e_{Q}(\mathcal{Q})$ between pairs of voters, $e_{Q}(\mathcal{Q})$:
\begin{equation}
    e_{Q}(\mathcal{Q}) = \underset{(h,h')\sim\mathcal{Q}^{2}}{\mathbb{E}}\;\underset{(\bm{x},y)\sim Q}{\mathbb{E}} \Big[\mathcal{L}_{\monzeroun}\big(h(\bm{x}),y\big)\times \mathcal{L}_{\monzeroun}\big(h'(\bm{x}),y\big)\Big].
\end{equation}
Indeed, for all distribution $Q$ on $\mathcal{X} \times \mathcal{Y}$ and a distribution $\mathcal{Q}$ on $\mathcal{H}$, we can decompose $
\mathcal{R}_{Q}(G_{\mathcal{Q}})$ as follows:

\begin{equation}
    \mathcal{R}_{P}(G_{\mathcal{Q}}) = \frac{1}{2}\,d_{P_{\mathcal{X}}}(\mathcal{Q}) + e_{P}(\mathcal{Q}).
\end{equation}

The PAC-Bayesian theory, suggests that minimizing the expected Gibbs risk $\mathcal{R}_{Q}(G_{\mathcal{Q}})$ can be done by minimizing the trade-off between the empirical Gibbs risk $\mathcal{R}_\mathbb{S}(G_{\mathcal{Q}})$ and Kullback–Leibler divergence minimization $D_{KL}(\mathcal{Q}||\mathcal{P})$. Note that PAC–Bayesian generalization bounds do not directly take into account the complexity of the hypothesis class $\mathcal{H}$, but measure the deviation between the prior
distribution $\mathcal{P}$ and the posterior distribution $\mathcal{Q}$ on $\mathcal{H}$ through the Kullback–Leibler
divergence. In the following, we present three main PAC-Bayesian bound proposed by \cite{MCA99b}; \cite{SEE02,LAN05}; \cite{CAT07}.

Firstly, \cite{SEE02}; and \cite{LAN05} propose the following PAC-Bayesian theorem in which the tradeoff between the complexity and the risk is handled by $D_{KL}(\cdot||\cdot)$:

\begin{theorem}{(Seeger \cite{SEE02}; Langford \cite{LAN05})}\label{pac_bayes_seeger}
For any distribution $Q$ over $\mathcal{X} \times \mathcal{Y}$, any set of hypotheses $\mathcal{H}$, and any prior distribution $\mathcal{P}$ over $\mathcal{H}$, any $\delta \in (0, 1]$, with a probability at least $1-\delta$ over the choice of $\mathbb{S} \sim Q^{m}$, for every $\mathcal{Q}$ over $\mathcal{H}$, we have:
    \begin{equation}  D_{\textnormal{KL}}\Big(\mathcal{R}_{\mathbb{S}}(G_{\mathcal{Q}})\Big|\Big|\mathcal{R}_{Q}(G_{\mathcal{Q}})\Big) \leq \frac{1}{m}\Bigg[D_{KL}(\mathcal{Q}||\mathcal{P}) + \ln{\frac{2\sqrt{m}}{\delta}}\Bigg].
    \end{equation}
\end{theorem}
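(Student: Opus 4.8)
The plan is to follow the classical route for ``KL-form'' PAC-Bayesian bounds, combining a change-of-measure argument with a binomial exponential-moment estimate. The three ingredients are: (i) the joint convexity of the binary Kullback--Leibler divergence $D_{KL}(\cdot\|\cdot)$; (ii) the Donsker--Varadhan variational (``change of measure'') inequality; and (iii) the Maurer bound on $\mathbb{E}_{\mathbb{S}}\big[e^{m\,D_{KL}(\mathcal{R}_{\mathbb{S}}(h)\|\mathcal{R}_{Q}(h))}\big]$.

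First I would reduce the statement about the Gibbs risk of the majority vote to a statement about individual hypotheses. Since $\mathcal{R}_{\mathbb{S}}(G_{\mathcal{Q}}) = \mathbb{E}_{h\sim\mathcal{Q}}\mathcal{R}_{\mathbb{S}}(h)$ and $\mathcal{R}_{Q}(G_{\mathcal{Q}}) = \mathbb{E}_{h\sim\mathcal{Q}}\mathcal{R}_{Q}(h)$ by definition of $G_{\mathcal{Q}}$, the joint convexity of $(p,q)\mapsto D_{KL}(p\|q)$ together with Jensen's inequality gives
\[
D_{KL}\big(\mathcal{R}_{\mathbb{S}}(G_{\mathcal{Q}})\,\big\|\,\mathcal{R}_{Q}(G_{\mathcal{Q}})\big)\;\leq\;\mathbb{E}_{h\sim\mathcal{Q}}\Big[D_{KL}\big(\mathcal{R}_{\mathbb{S}}(h)\,\big\|\,\mathcal{R}_{Q}(h)\big)\Big],
\]
so it suffices to bound the right-hand side uniformly over all posteriors $\mathcal{Q}$. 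Then I would invoke the change-of-measure inequality: for every measurable $\phi:\mathcal{H}\to\mathbb{R}$ and every $\mathcal{Q}$ absolutely continuous with respect to $\mathcal{P}$, $\mathbb{E}_{h\sim\mathcal{Q}}[\phi(h)]\leq D_{KL}(\mathcal{Q}\|\mathcal{P})+\ln\mathbb{E}_{h\sim\mathcal{P}}[e^{\phi(h)}]$, applied with $\phi(h)=m\,D_{KL}\big(\mathcal{R}_{\mathbb{S}}(h)\|\mathcal{R}_{Q}(h)\big)$. This is precisely where the prior $\mathcal{P}$ and the complexity term $D_{KL}(\mathcal{Q}\|\mathcal{P})$ enter, and it makes the resulting bound hold for all $\mathcal{Q}$ at once, since $\mathbb{S}$ is fixed before $\mathcal{Q}$ is chosen.

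The technical heart is controlling the random variable $\Xi(\mathbb{S}):=\mathbb{E}_{h\sim\mathcal{P}}\big[e^{m\,D_{KL}(\mathcal{R}_{\mathbb{S}}(h)\|\mathcal{R}_{Q}(h))}\big]$. Taking expectation over $\mathbb{S}\sim Q^{m}$ and exchanging the two expectations by Fubini's theorem, it is enough to bound, for each fixed $h$, the quantity $\mathbb{E}_{\mathbb{S}\sim Q^{m}}\big[e^{m\,D_{KL}(\mathcal{R}_{\mathbb{S}}(h)\|\mathcal{R}_{Q}(h))}\big]$. For a fixed $h$, the empirical risk $m\,\mathcal{R}_{\mathbb{S}}(h)$ is a sum of $m$ i.i.d.\ Bernoulli$\big(\mathcal{R}_{Q}(h)\big)$ variables, and Maurer's lemma gives $\mathbb{E}_{\mathbb{S}}\big[e^{m\,D_{KL}(\mathcal{R}_{\mathbb{S}}(h)\|\mathcal{R}_{Q}(h))}\big]\leq\xi(m)\leq 2\sqrt{m}$, where $\xi(m)=\sum_{k=0}^{m}\binom{m}{k}\big(\tfrac{k}{m}\big)^{k}\big(1-\tfrac{k}{m}\big)^{m-k}$. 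Hence $\mathbb{E}_{\mathbb{S}}[\Xi(\mathbb{S})]\leq 2\sqrt{m}$, and Markov's inequality yields $\Xi(\mathbb{S})\leq\tfrac{2\sqrt{m}}{\delta}$ with probability at least $1-\delta$ over $\mathbb{S}\sim Q^{m}$.

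Finally, on that event I would chain the three steps: for every $\mathcal{Q}$,
\[
m\,D_{KL}\big(\mathcal{R}_{\mathbb{S}}(G_{\mathcal{Q}})\,\big\|\,\mathcal{R}_{Q}(G_{\mathcal{Q}})\big)\;\leq\;m\,\mathbb{E}_{h\sim\mathcal{Q}}\big[D_{KL}(\mathcal{R}_{\mathbb{S}}(h)\|\mathcal{R}_{Q}(h))\big]\;\leq\;D_{KL}(\mathcal{Q}\|\mathcal{P})+\ln\Xi(\mathbb{S})\;\leq\;D_{KL}(\mathcal{Q}\|\mathcal{P})+\ln\tfrac{2\sqrt{m}}{\delta},
\]
and dividing by $m$ gives the claim. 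The only delicate point I expect is the Maurer exponential-moment estimate $\mathbb{E}_{\mathbb{S}}[e^{m\,D_{KL}(\mathcal{R}_{\mathbb{S}}(h)\|\mathcal{R}_{Q}(h))}]\leq 2\sqrt{m}$; the remaining steps are routine manipulations with convexity, Fubini's theorem and Markov's inequality.
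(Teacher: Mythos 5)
Your proof is correct and is the standard argument for the Seeger--Langford bound: Jensen via joint convexity of the binary KL, the change-of-measure inequality with $\phi(h)=m\,D_{\textnormal{KL}}(\mathcal{R}_{\mathbb{S}}(h)\Vert\mathcal{R}_{Q}(h))$, Maurer's exponential-moment estimate $\mathbb{E}_{\mathbb{S}}[e^{m D_{\textnormal{KL}}(\mathcal{R}_{\mathbb{S}}(h)\Vert\mathcal{R}_{Q}(h))}]\leq 2\sqrt{m}$, and Markov's inequality. The paper does not reprove this theorem (it is recalled from Seeger and Langford), but your route coincides exactly with the toolkit the paper itself deploys in the appendix for its analogous multi-view results (change of measure, Pinsker/Maurer, Markov), so there is nothing to flag beyond the implicit $m\geq 8$ condition in Maurer's lemma, which the paper's own statement also omits.
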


As stated in \cite{GER15}, the PAC-Bayesian theorem in this form provides a precise bound, particularly for low empirical Gibbs risk. Nevertheless, the inclusion of the $D_{KL}\Big(\mathcal{R}_{\mathbb{S}}(G_{\mathcal{Q}})\Big|\Big|\mathcal{R}_{Q}(G_{\mathcal{Q}})\Big)$ term makes the interpretation of this bound challenging since it does not provide a direct relationship between the empirical Gibbs risk $\mathcal{R}_{\mathbb{S}}(G_{\mathcal{Q}})$ and the true Gibbs risk $\mathcal{R}_{Q}(G_{\mathcal{Q}})$ in a closed form.

Thus, finding the distribution $\mathcal{Q}$ that minimizes the bound on $\mathcal{R}_{Q}(G_{\mathcal{Q}})$ given by Theorem \ref{pac_bayes_seeger} could be a difficult task from an algorithmic standpoint. However, the first proposed version of the PAC-Bayes theorem (McAllester \cite{MCA99b}) offers a simpler interpretation by linking $\mathcal{R}_{\mathbb{S}}(G_{\mathcal{Q}})$ and $\mathcal{R}_{Q}(G_{\mathcal{Q}})$ through a linear relation. The following PAC-Bayesian theorem is derived from McAllester \cite{MCA99b}:

\begin{theorem}{(McAllester \cite{MCA99b})}\label{pac_bayes_mcallester}
For any distribution $Q$ over $\mathcal{X} \times \mathcal{Y}$, any set of hypotheses $\mathcal{H}$, and any prior distribution $\mathcal{P}$ over $\mathcal{H}$, any $\delta \in (0, 1]$, with a probability at least $1-\delta$ over the choice of $\mathbb{S} \sim Q^{m}$, for every $\mathcal{Q}$ over $\mathcal{H}$, we have:
\begin{equation}
\Big|\mathcal{R}_{Q}(G_{\mathcal{Q}}) - \mathcal{R}_{\mathbb{S}}(G_{\mathcal{Q}})\Big| \leq \sqrt{\frac{1}{2m}\Bigg[D_{KL}(\mathcal{Q}||\mathcal{P}) + \ln{\frac{2\sqrt{m}}{\delta}}\Bigg]}.
\end{equation}
\end{theorem}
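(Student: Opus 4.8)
The plan is to deduce Theorem~\ref{pac_bayes_mcallester} from Theorem~\ref{pac_bayes_seeger} by relaxing the ``$D_{\textnormal{KL}}$'' form of the bound into a linear one via \emph{Pinsker's inequality}. Recall that for any two Bernoulli parameters $a,b\in[0,1]$ the binary Kullback--Leibler divergence obeys
\[
D_{\textnormal{KL}}(a\|b)\;\geq\;2\,(a-b)^{2},
\]
which is Pinsker's inequality specialised to a two-atom probability space. It can be verified by a one-variable argument: fixing $b$ and setting $g(a)=D_{\textnormal{KL}}(a\|b)-2(a-b)^{2}$, one has $g(b)=0$, $g'(b)=0$, and $g''(a)=\tfrac{1}{a(1-a)}-4\geq 0$ on $[0,1]$ since $a(1-a)\leq\tfrac14$, so $g$ attains its minimum value $0$ at $a=b$.

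First I would instantiate this inequality with $a=\mathcal{R}_{\mathbb{S}}(G_{\mathcal{Q}})$ and $b=\mathcal{R}_{Q}(G_{\mathcal{Q}})$, both of which lie in $[0,1]$ because they are expectations over $\mathcal{Q}$ of the $\monzeroun$-loss risk; this gives
\[
2\Big(\mathcal{R}_{\mathbb{S}}(G_{\mathcal{Q}})-\mathcal{R}_{Q}(G_{\mathcal{Q}})\Big)^{2}\;\leq\;D_{\textnormal{KL}}\Big(\mathcal{R}_{\mathbb{S}}(G_{\mathcal{Q}})\Big|\Big|\mathcal{R}_{Q}(G_{\mathcal{Q}})\Big).
\]
Next I would invoke Theorem~\ref{pac_bayes_seeger}: with probability at least $1-\delta$ over $\mathbb{S}\sim Q^{m}$, simultaneously for every posterior $\mathcal{Q}$ over $\mathcal{H}$, the right-hand side above is at most $\tfrac{1}{m}\big[D_{KL}(\mathcal{Q}\|\mathcal{P})+\ln\tfrac{2\sqrt m}{\delta}\big]$. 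Chaining the two inequalities, dividing by $2$, and taking square roots (the right-hand side being nonnegative) yields
\[
\Big|\mathcal{R}_{Q}(G_{\mathcal{Q}})-\mathcal{R}_{\mathbb{S}}(G_{\mathcal{Q}})\Big|\;\leq\;\sqrt{\frac{1}{2m}\Big[D_{KL}(\mathcal{Q}\|\mathcal{P})+\ln\tfrac{2\sqrt m}{\delta}\Big]},
\]
on exactly the same high-probability event, which is the claim. (The argument is symmetric in the sign of the difference, so the absolute value is justified.)

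The step I expect to be most delicate has in fact already been carried out inside Theorem~\ref{pac_bayes_seeger}: controlling the exponential moment $\mathbb{E}_{\mathbb{S}\sim Q^{m}}\mathbb{E}_{h\sim\mathcal{P}}\big[e^{\,m\,D_{\textnormal{KL}}(\mathcal{R}_{\mathbb{S}}(h)\|\mathcal{R}_{Q}(h))}\big]\leq 2\sqrt m$ (Maurer's bound), combined with the change-of-measure / Donsker--Varadhan inequality and Jensen's inequality applied to the jointly convex map $D_{\textnormal{KL}}(\cdot\|\cdot)$. If one wanted a fully self-contained proof, this moment-generating-function estimate would be the crux; but given that the excerpt already provides Theorem~\ref{pac_bayes_seeger}, the remaining work for Theorem~\ref{pac_bayes_mcallester} is only Pinsker's inequality together with elementary algebra, so there is essentially no obstacle left.
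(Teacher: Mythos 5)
Your derivation is correct: Pinsker's inequality for Bernoulli parameters combined with Theorem~\ref{pac_bayes_seeger} does yield Theorem~\ref{pac_bayes_mcallester}, and the absolute value, the division by $2$, and the square root are all handled properly on the same high-probability event. Note that the paper does not actually prove this theorem---it is quoted from McAllester as a known result---but your route (Seeger-type bound $\Rightarrow$ McAllester-type bound via $D_{\textnormal{KL}}(a\Vert b)\geq 2(a-b)^{2}$) is exactly the mechanism the paper itself employs in the appendix proof of its multi-view analogue, Corollary~\ref{bound_hennequin_mcallester}, where the general theorem is instantiated with $\Delta(a,b)=2(a-b)^{2}$ and the exponential moment is controlled by Pinsker's inequality together with Maurer's lemma. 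So your proposal is consistent with the paper's methodology and requires no repair.
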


As stated in \cite{GER15}, theorems \ref{pac_bayes_seeger} and \ref{pac_bayes_mcallester} propose that to reduce the expected Gibbs risk, a learning algorithm needs to balance between minimizing the empirical Gibbs risk $\mathcal{R}_{\mathbb{S}}(G_{\mathbf{Q}})$ and minimizing the KL-divergence $D_{KL}(\cdot \Vert \cdot)$.

The explicit control of this trade-off is achievable through theorem \ref{pac_bayes_catoni}, which is a PAC-Bayesian result introduced by \cite{CAT07} and defined with a hyperparameter (referred to as c in theorem \ref{pac_bayes_catoni}). This result seems to be a convenient approach for developing PAC-Bayesian algorithms. We are presenting a simplified version of this result as suggested by \cite{GER09}.

\begin{theorem}{(Catoni \cite{CAT07})}\label{pac_bayes_catoni}
    For any distribution $Q$ on $\mathcal{X} \times \mathcal{Y}$, for any set of voters $\mathcal{H}$, \,for any prior distribution $\mathcal{P}$ on $\mathcal{H}$, \,for any $\delta \in (0, 1]$, and any real number $c > 0$, with probability at least $1 - \delta$ over the random choice of the sample $\mathbb{S} \sim Q^{m}$, for every posterior distribution $\mathcal{Q}$ on $\mathcal{H}$, we have
    \begin{equation}
        \mathcal{R}_{Q}(G_{\mathcal{Q}}) \leq \frac{c}{1-e^{-c}}\Bigg[\mathcal{R}_{\mathbb{S}}(G_{\mathcal{Q}}) + \frac{D_{KL}(\mathcal{Q}||\mathcal{P})+\ln \frac{1}{\delta})}{m\times\omega}\Bigg].
    \end{equation}
\end{theorem}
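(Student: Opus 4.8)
The plan is to follow the standard route for Catoni-type PAC-Bayesian inequalities: first control the exponential moment of the empirical risk of a \emph{single} voter under $Q^{m}$, and then transfer this control to an arbitrary posterior $\mathcal{Q}$ by means of the change-of-measure inequality, exactly as in Germain et al.~\cite{GER09,GER15}.

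First I would fix $h\in\mathcal{H}$ and note that $m\,\mathcal{R}_{\mathbb{S}}(h)=\sum_{i=1}^{m}\mathcal{L}_{\monzeroun}(h(\bm{x}_{i}),y_{i})$ is a sum of $m$ i.i.d.\ Bernoulli variables of mean $\mathcal{R}_{Q}(h)$. The elementary convexity bound $e^{-cx}\le 1-(1-e^{-c})x$, valid for all $x\in[0,1]$ and $c>0$, combined with $1-t\le e^{-t}$, gives
\[
\mathbb{E}_{\mathbb{S}\sim Q^{m}}\big[e^{-c\,m\,\mathcal{R}_{\mathbb{S}}(h)}\big]\;\le\;\big(1-(1-e^{-c})\,\mathcal{R}_{Q}(h)\big)^{m}\;\le\;e^{-m(1-e^{-c})\mathcal{R}_{Q}(h)},
\]
that is, $\mathbb{E}_{\mathbb{S}}\big[e^{\,m(1-e^{-c})\mathcal{R}_{Q}(h)-c\,m\,\mathcal{R}_{\mathbb{S}}(h)}\big]\le 1$. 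Since this holds for every $h$, I would integrate over the prior $\mathcal{P}$, exchange the two expectations by Fubini, and apply Markov's inequality: with probability at least $1-\delta$ over $\mathbb{S}\sim Q^{m}$,
\[
\mathbb{E}_{h\sim\mathcal{P}}\big[e^{\,m(1-e^{-c})\mathcal{R}_{Q}(h)-c\,m\,\mathcal{R}_{\mathbb{S}}(h)}\big]\;\le\;\tfrac{1}{\delta}.
\]

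On this event, the Donsker--Varadhan change-of-measure inequality applied to the map $h\mapsto m(1-e^{-c})\mathcal{R}_{Q}(h)-c\,m\,\mathcal{R}_{\mathbb{S}}(h)$ yields, \emph{simultaneously for every} posterior $\mathcal{Q}$,
\[
m(1-e^{-c})\,\mathbb{E}_{h\sim\mathcal{Q}}\mathcal{R}_{Q}(h)-c\,m\,\mathbb{E}_{h\sim\mathcal{Q}}\mathcal{R}_{\mathbb{S}}(h)\;\le\;D_{KL}(\mathcal{Q}\|\mathcal{P})+\ln\tfrac{1}{\delta}.
\]
Using the linearity of the Gibbs risk, $\mathbb{E}_{h\sim\mathcal{Q}}\mathcal{R}_{Q}(h)=\mathcal{R}_{Q}(G_{\mathcal{Q}})$ and $\mathbb{E}_{h\sim\mathcal{Q}}\mathcal{R}_{\mathbb{S}}(h)=\mathcal{R}_{\mathbb{S}}(G_{\mathcal{Q}})$; dividing by $m(1-e^{-c})$ and factoring $\tfrac{c}{1-e^{-c}}$ out of the bracket gives the announced inequality.

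The argument is essentially routine; the points that require attention are: (i) the exponential-moment step, where one must use the interpolation bound $e^{-cx}\le 1-(1-e^{-c})x$ on $[0,1]$ rather than a Hoeffding-type estimate, since it is precisely this that produces the constant $\tfrac{c}{1-e^{-c}}$; and (ii) the order of quantifiers, as the conclusion must hold for \emph{all} $\mathcal{Q}$ on one and the same high-probability event — hence the prior-averaged exponential moment is bounded first and the change of measure is invoked only afterwards, in place of a (hopeless) union bound over $\mathcal{H}$. Alternatively, the statement is a direct specialization of the general PAC-Bayesian theorem of~\cite{GER09,GER15a,BEG16} obtained with the convex deviation function $(q,r)\mapsto m(1-e^{-c})r-c\,m\,q$, the associated moment being bounded by the same Bernoulli computation.
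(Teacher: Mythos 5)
Your proof is correct and follows essentially the same route the paper relies on: the theorem is only recalled here without proof, but the paper's own appendix proof of its Catoni-type Corollary~2 uses exactly this moment-bound/Markov/change-of-measure argument, merely phrased through $\mathcal{F}(b)=-\ln\big[1-(1-e^{-c})b\big]$ with a final application of $1-e^{-x}\le x$, whereas you apply the linearization at the moment-bounding stage and work with the linear deviation function directly --- an equivalent, slightly more direct presentation. Your derivation also confirms that the denominator in the displayed bound should read $m\times c$; the symbol $\omega$ in the paper's statement is a typo.
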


In this section we reviewed the principal Pac-Bayes bounds analysis with no adaptation and no concept of multi-view learning. In the next sections we will present the principal bounds of generalisation for those different concepts.

\subsection{Analysis of Domain Adaptation Pac-Bayesian Bounds}  In this section, we recall the work done by \cite{GER13,GER15} on how the PAC–Bayesian theory can help to theoretically understand domain adaptation through the weighted majority vote learning point of view.

The first Pac-Bayesian generalization bound for domain adaptation was introduced in \cite{GER13}. The authors defined a divergence measure that follows the idea of C-bound equation \ref{C-bound}. Thus, Germain et al. underlined that the domains $\mathbb{D}_{\mathcal{S}}$ and $\mathbb{D}_{\mathcal{T}}$ are close according to $\mathcal{Q}$ if
the expected disagreement over the two domains tends to be close. More formally, if $\mathcal{R}_{Q}(G_{\mathcal{Q}})$ and $\mathcal{R}_{P}(G_{\mathcal{Q}})$ are similar, then and $\mathcal{R}_{Q}(\mathcal{B}_{\mathcal{Q}})$ and $\mathcal{R}_{P}(\mathcal{B}_{\mathcal{Q}})$ are similar when $d_{Q_{\mathcal{X}}}(\mathcal{Q})$
and $d_{P_{\mathcal{X}}}(\mathcal{Q})$ are also similar. In this way, the authors introduced the following domain disagreement pseudometric \footnote{A pseudometric $d$ is a metric for wich the property $d(x,y)=0 \Leftrightarrow x=y$ is relaxed to $d(x,y) = 0  \Leftarrow x=y$}.

\begin{definition}\label{disagreement definition}
    Let $\mathcal{H}$ be a hypothesis class. For any marginal distributions $Q_{\mathcal{X}}$ and $P_{\mathcal{X}}$ over $\mathcal{X}$, any distribution $\mathcal{Q}$ on $\mathcal{H}$, the domain disagreement $dis_{\mathcal{Q}}(Q_{\mathcal{X}}, P_{\mathcal{X}})$ between $Q_{\mathcal{X}}$ and $P_{\mathcal{X}}$ is defined by:
    \begin{equation}
        dis_{\mathcal{Q}}(Q_{\mathcal{X}},P_{\mathcal{X}}) = \Big|d_{P_{\mathcal{X}}}(\mathcal{Q}) - d_{Q_{\mathcal{X}}}(\mathcal{Q})\Big|.
    \end{equation}
\end{definition}

Note that $dis_{\mathcal{Q}}(\cdot,\cdot)$ is symmetric and fulfills the triangle inequality \cite{GER13}. Note that for the sake of simplicity, we suppose that m = n, i.e., the size of $\mathbb{S}/\mathbb{S}_{\mathcal{X}}$ and $\mathbb{T}/\mathbb{T}_{\mathcal{X}}$ are equal. The authors showed that $dis_{\mathcal{Q}}(Q_{\mathcal{X}}, P_{\mathcal{X}})$ can be bounded in terms of the classical PAC-Bayesian quantities and propose the following theorem (the theorem and it's proof can be found in \cite{GER13,GER15}):
\begin{theorem}{(Germain \cite{GER13,GER15})}\label{theorem disq}
    For any distributions $Q_{\mathcal{X}}$ and $P_{\mathcal{X}}$ over $\mathcal{X}$, any set of hypothesis $\mathcal{H}$, any prior distribution $\mathcal{P}$ over $\mathcal{H}$, any $\delta \in (0, 1]$, and any real number $ \alpha > 0$, with a probability at least $1-\delta$ over the choice of $(\mathbb{S}_{\mathcal{X}} \times \mathbb{T}_{\mathcal{X}}) \sim (Q_{\mathcal{X}} \times P_{\mathcal{X}} )^{m=n}$, for every $\mathcal{Q}$ on $\mathcal{H}$, we have,
    \begin{displaymath}
        dis_{\mathcal{Q}}(Q_{\mathcal{X}},P_{\mathcal{X}}) \leq \frac{2\alpha}{1-e^{-2\alpha}} \Bigg[dis_{\mathcal{Q}}(\mathbb{S}_{\mathcal{X}},\mathbb{T}_{\mathcal{X}}) + \frac{2D_{\textnormal{KL}}(\mathcal{Q}\Vert \mathcal{P})+\ln{\frac{2}{\delta}}}{m\times \alpha} + 1 \Bigg]-1.
    \end{displaymath}

\end{theorem}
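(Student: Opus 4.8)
The plan is to mimic the proof strategy that Germain et al.\ use for their Catoni-type domain adaptation bound, since Theorem~\ref{theorem disq} is essentially a transfer of Theorem~\ref{pac_bayes_catoni} applied to the disagreement loss rather than the $\monzeroun$-loss. First I would set up the right ``observation'': instead of a single sample $\mathbb{S}\sim Q^m$, here we have a product sample $(\mathbb{S}_{\mathcal{X}}\times\mathbb{T}_{\mathcal{X}})\sim(Q_{\mathcal{X}}\times P_{\mathcal{X}})^{m}$, and the quantity of interest is built from a \emph{pair} of hypotheses $(h,h')\sim\mathcal{Q}^2$. So I would work with the product prior $\mathcal{P}^2$ and product posterior $\mathcal{Q}^2$ over $\mathcal{H}\times\mathcal{H}$, for which $D_{\mathrm{KL}}(\mathcal{Q}^2\Vert\mathcal{P}^2)=2D_{\mathrm{KL}}(\mathcal{Q}\Vert\mathcal{P})$ — this is where the factor $2$ multiplying the KL term comes from.

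Next I would introduce, for a pair $(h,h')$, the signed per-example disagreement discrepancy, something like $d(\bm{x},\bm{x}') = \mathcal{L}_{\monzeroun}(h(\bm{x}'),h'(\bm{x}')) - \mathcal{L}_{\monzeroun}(h(\bm{x}),h'(\bm{x}))$, whose expectation over $(Q_{\mathcal{X}}\times P_{\mathcal{X}})$ and over $\mathcal{Q}^2$ is $d_{P_{\mathcal{X}}}(\mathcal{Q})-d_{Q_{\mathcal{X}}}(\mathcal{Q})$, and whose empirical average over the product sample is $d_{\mathbb{T}_{\mathcal{X}}}(\mathcal{Q})-d_{\mathbb{S}_{\mathcal{X}}}(\mathcal{Q})$. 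Since this quantity lives in $[-1,1]$, I would shift it to $\tfrac{1}{2}(d+1)\in[0,1]$ so that the standard PAC-Bayes machinery (Donsker–Varadhan / change of measure plus a Markov bound on $\mathbb{E}_{\mathbb{S},\mathbb{T}}\mathbb{E}_{\mathcal{P}^2}e^{m\cdot\phi(\cdot)}$ with a Chernoff/Hoeffding control of the moment generating function) applies verbatim, exactly as in the proof of the simplified Catoni bound. The affine rescaling by $\tfrac12$ together with the use of the exponential moment bound at ``rate'' $2\alpha$ rather than $c$ is what produces the $\tfrac{2\alpha}{1-e^{-2\alpha}}$ prefactor and the trailing ``$+1$'' and ``$-1$'' correction terms; the $\ln\frac{2}{\delta}$ (rather than $\ln\frac1\delta$) accounts for a union bound over the two one-sided deviations needed to get the absolute value $dis_{\mathcal{Q}}$.

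Concretely, the ordered steps I would carry out are: (1)~define the pairwise loss and verify the expectation/empirical identities above; (2)~apply the generic PAC-Bayesian theorem (the one the authors cite from \cite{GER09,BEG16}) to the rescaled loss $\tfrac12(d+1)$ with prior $\mathcal{P}^2$ and posterior $\mathcal{Q}^2$, obtaining with probability $1-\tfrac{\delta}{2}$ a bound on $\mathbb{E}_{\mathcal{Q}^2}[\tfrac12(d_{P_{\mathcal{X}}}\text{-per-example})+\tfrac12]$ in terms of its empirical counterpart, $2D_{\mathrm{KL}}(\mathcal{Q}\Vert\mathcal{P})$, and the Catoni $\phi$-transform; (3)~run the same argument for $-d$ (i.e.\ swapping the roles of source and target) to control the other side, costing the second $\tfrac{\delta}{2}$; (4)~combine, undo the $\tfrac12(\cdot+1)$ rescaling, use that $d_{P_{\mathcal{X}}}(\mathcal{Q})-d_{Q_{\mathcal{X}}}(\mathcal{Q})\le dis_{\mathcal{Q}}(Q_{\mathcal{X}},P_{\mathcal{X}})$ and similarly for the empirical version, and simplify the constants using $m=n$.

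The main obstacle I anticipate is handling the pairwise/$U$-statistic nature of the empirical disagreement $d_{\mathbb{S}_{\mathcal{X}}}(\mathcal{Q})$: it is an average over ordered pairs $(i,j)$ of examples, so the per-example i.i.d.\ decomposition needed for the Hoeffding-type MGF bound is not completely immediate. Germain et al.\ circumvent this by pushing the pair $(h,h')$ into the hypothesis space (working over $\mathcal{H}^2$ with $\mathcal{Q}^2$) so that, \emph{for a fixed pair}, the summands $\mathcal{L}_{\monzeroun}(h(\bm{x}'_i),h'(\bm{x}'_i))-\mathcal{L}_{\monzeroun}(h(\bm{x}_i),h'(\bm{x}_i))$ over $i=1,\dots,m$ \emph{are} i.i.d.\ and bounded in $[-1,1]$, which restores the standard analysis; I would follow the same device. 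A secondary bookkeeping point is getting the trailing $+1$/$-1$ and the $2\alpha$ vs.\ $\alpha$ scaling exactly right, but that is just careful tracking of the affine change of variable through the Catoni bound and should not present a conceptual difficulty.
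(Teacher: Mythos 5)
Your proposal is correct and follows essentially the same route as the proof the paper defers to (Germain et al.~\cite{GER13,GER15}) and as the paper's own appendix proof of the multi-view analogue (Corollary~2): a change of measure over pairs $(h,h')\sim\mathcal{Q}^{2}$ with prior $\mathcal{P}^{2}$ yielding the factor $2D_{\mathrm{KL}}(\mathcal{Q}\Vert\mathcal{P})$, a per-index i.i.d.\ binomial moment bound at rate $c=2\alpha$, and the Catoni transform, with the affine shift $(d+1)/2$ and the union bound over the two signs producing exactly the $+1$/$-1$ terms and the $\ln(2/\delta)$. One minor correction: the empirical disagreement $dis_{\mathcal{Q}}(\mathbb{S}_{\mathcal{X}},\mathbb{T}_{\mathcal{X}})$ is not a $U$-statistic over pairs of examples --- the pairing is over hypotheses, and source/target points are matched index by index --- so the i.i.d.\ structure you ``restore'' by fixing $(h,h')$ is already present.
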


 Note that the authors in \cite{GER13,GER15} propose the theorem \ref{theorem disq} as "Catoni's type" with $c=2\alpha$. Indeed, as described in section \ref{Simple Pac-Bayesian bounds} and in \cite{GER13,GER15} "Catoni’s type” with  PAC-Bayesian bound present interesting characteristics. First, its minimization is closely related to the minimization problem associated with the SVM when $\mathcal{Q}$ is an isotropic Gaussian over the space of linear classifiers (Germain et al. \cite{GER09}). Second, the value $c=2\alpha$ allows to control the trade-off between the empirical risk and the complexity term $D_{\textnormal{KL}}(\cdot \Vert \cdot)$. 
 
 Thereby, from this domain’s divergence, the authors proved the following domain
adaptation bound (the theorem and its proof can be found in \cite{GER15}).
\begin{theorem}{(Germain et al. \cite{GER13,GER15}).}\label{theorem_bound_germ13}
Let $\mathcal{H}$ be a hypothesis class. We have, $\forall\; \mathcal{Q}$ on $\mathcal{H}$,

\begin{equation}
        \mathcal{R}_{P}(G_{\mathcal{Q}}) \leq \mathcal{R}_{Q}(G_{\mathcal{Q}}) + \frac{1}{2} dis_{\mathcal{Q}}(Q_{\mathcal{X}},P_{\mathcal{X}}) + \lambda_{\mathcal{Q}},
\end{equation}
    
\end{theorem}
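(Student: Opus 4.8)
The plan is to exploit the decomposition of the Gibbs risk into the expected disagreement and the expected joint error recalled just above the theorem, namely $\mathcal{R}_{D}(G_{\mathcal{Q}}) = \tfrac12\, d_{D_{\mathcal{X}}}(\mathcal{Q}) + e_{D}(\mathcal{Q})$ for any distribution $D$ on $\mathcal{X}\times\mathcal{Y}$. Applying this identity to the target distribution $P$ and then re-introducing the corresponding source quantities is exactly what makes the domain disagreement $dis_{\mathcal{Q}}(Q_{\mathcal{X}},P_{\mathcal{X}})$ of Definition~\ref{disagreement definition} appear.

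Concretely, I would first write $\mathcal{R}_{P}(G_{\mathcal{Q}}) = \tfrac12\, d_{P_{\mathcal{X}}}(\mathcal{Q}) + e_{P}(\mathcal{Q})$, then add and subtract $\tfrac12\, d_{Q_{\mathcal{X}}}(\mathcal{Q})$ and $e_{Q}(\mathcal{Q})$ to get
\[
\mathcal{R}_{P}(G_{\mathcal{Q}}) = \Big(\tfrac12\, d_{Q_{\mathcal{X}}}(\mathcal{Q}) + e_{Q}(\mathcal{Q})\Big) + \tfrac12\big(d_{P_{\mathcal{X}}}(\mathcal{Q}) - d_{Q_{\mathcal{X}}}(\mathcal{Q})\big) + \big(e_{P}(\mathcal{Q}) - e_{Q}(\mathcal{Q})\big).
\]
The first bracket is precisely $\mathcal{R}_{Q}(G_{\mathcal{Q}})$ by the same decomposition applied to $Q$. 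Bounding the two remaining differences by their absolute values and recognising $\big|d_{P_{\mathcal{X}}}(\mathcal{Q}) - d_{Q_{\mathcal{X}}}(\mathcal{Q})\big| = dis_{\mathcal{Q}}(Q_{\mathcal{X}},P_{\mathcal{X}})$ gives
\[
\mathcal{R}_{P}(G_{\mathcal{Q}}) \le \mathcal{R}_{Q}(G_{\mathcal{Q}}) + \tfrac12\, dis_{\mathcal{Q}}(Q_{\mathcal{X}},P_{\mathcal{X}}) + \big|e_{P}(\mathcal{Q}) - e_{Q}(\mathcal{Q})\big|,
\]
which is the claimed inequality with $\lambda_{\mathcal{Q}} := \big|e_{P}(\mathcal{Q}) - e_{Q}(\mathcal{Q})\big|$ (or any upper bound on this quantity).

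There is no genuinely hard step here: the only substantive ingredient is the identity $\mathcal{R}_{D}(G_{\mathcal{Q}}) = \tfrac12\, d_{D_{\mathcal{X}}}(\mathcal{Q}) + e_{D}(\mathcal{Q})$, and everything else is the triangle inequality. The point worth stressing is conceptual rather than technical: the term $\lambda_{\mathcal{Q}}$ captures the deviation between the expected pairwise joint errors of the voters on the source and on the target and, unlike $\mathcal{R}_{Q}(G_{\mathcal{Q}})$ and $dis_{\mathcal{Q}}(Q_{\mathcal{X}},P_{\mathcal{X}})$, it depends on the target labels; hence it cannot be estimated from the unlabeled target sample and plays the usual role of a non-controllable "adaptability" term that must be assumed small for adaptation to be feasible. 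The remaining work in the surrounding sections is then to replace $dis_{\mathcal{Q}}(Q_{\mathcal{X}},P_{\mathcal{X}})$ by its empirical counterpart via Theorem~\ref{theorem disq} and $\mathcal{R}_{Q}(G_{\mathcal{Q}})$ by $\mathcal{R}_{\mathbb{S}}(G_{\mathcal{Q}})$ via one of Theorems~\ref{pac_bayes_seeger}–\ref{pac_bayes_catoni}, combined by a union bound.
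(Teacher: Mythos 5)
Your proof is correct and follows exactly the route the paper takes (the paper defers to Germain et al.\ for this single-view statement, but its own appendix proof of the multi-view analogue, Theorem~\ref{MVDA_bound_hennequin}, is precisely this argument): apply the decomposition $\mathcal{R}_{D}(G_{\mathcal{Q}}) = \tfrac12 d_{D_{\mathcal{X}}}(\mathcal{Q}) + e_{D}(\mathcal{Q})$ to both domains, subtract, and bound the two differences by their absolute values. Nothing is missing.
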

where $\lambda_{\mathcal{Q}}$ is the deviation between the expected joint errors between pairs for voters ad pairs of views defined in section \ref{Simple Pac-Bayesian bounds}
on the target and source domains, which is defined as $\lambda_{\mathcal{Q}} = \Big|e_{P}(\mathcal{Q}) - e_{Q}(\mathcal{Q})\Big|$ and where \\ 
$e_{P}(\mathcal{Q}) = \mathbb{E}_{(\bm{x},y)\sim P}\;\mathbb{E}_{(h,h')\sim\mathcal{Q}^{2}_{v}}\;\big[\mathcal{L}_{\monzeroun}\big(h(x),y\big)\times\mathcal{L}_{\monzeroun}\big(h'(x),y\big)\big]$, \\$e_{Q}(\mathcal{Q}) = \mathbb{E}_{(\bm{x},y)\sim Q}\;\mathbb{E}_{(h,h')\sim\mathcal{Q}^{2}_{v}}\;\big[\mathcal{L}_{\monzeroun}\big(h(x),y\big)\times\mathcal{L}_{\monzeroun}\big(h'(x),y\big)\big]$.\\

The bound can be seen as a trade-off between different quantities. $R_{Q}(G_{\mathcal{Q}})$ and $dis_{\mathcal{Q}}(Q_{\mathcal{X}},P_{\mathcal{X}})$ are comparable to the first two terms of the DA-bound in \cite{BEN10a}: $\mathcal{R}_{Q}(G_{\mathcal{Q}})$ is the $\mathcal{Q}$-average risk over $\mathcal{H}$ on the source domain, and $dis_{\mathcal{Q}}(Q_{\mathcal{X}},P_{\mathcal{X}})$ measures the $\mathcal{Q}$-average disagreement between the marginals.

In this section, we presented the principal bounds for Pac-Bayesian domain adaptation, in the next section we will discuss about Pac-Bayesian bounds in the multi-view learning setting.

\subsection{Analysis of Pac-Bayesian Multi-view Bounds}\label{section_mv_bound}
 
First, the authors in \cite{SUN17} provided PAC-Bayesian bounds over the concatenation of the views, using priors that reflect how well the views agree on average over all examples, and deduced a SVM-like learning algorithm from this framework. However, this concatenation is designed for two views and kernel method, it is not generalizable to other methods. A more general framework of Pac-Bayesian bounds for multi-views was introduced in 
\cite{GOYAL17}. In the paper, the authors inroduced the two-level multiview approach. For each view $v \in [\![\mathcal{V}]\!]$, they consider a view-specific set $\mathcal{H}_{v}$ of voters $h : \mathcal{X}^{v} \rightarrow Y$, and a prior distribution $\mathcal{P}_{v}$ on $\mathcal{H}_{v}$. Given a hyper-prior distribution $\pi$ over the views $[\![\mathcal{V}]\!]$. In the paper, PAC-Bayesian learner objective has two parts. The first part is finding a posterior distribution $\mathcal{Q}_{v}$ over $\mathcal{H}_{v}, \forall \; v \in [\![\mathcal{V}]\!]$; The second is finding a hyper-posterior $\rho$ distribution on the set of views $[\![\mathcal{V}]\!]$. Thereby, \cite{GOYAL17} defined the multi-view weighted majority vote $\mathcal{B}_{\rho}^{\textnormal{MV}}$ as: 

\begin{equation}
    \mathcal{B}_{\rho}^{\textnormal{MV}}(\bm{x}) = \; \text{sign}\Big[\underset{v\sim \rho}{\mathbb{E}}\,\underset{h\sim \mathcal{Q}_{v}}{\mathbb{E}}h(\bm{x}^{v})\Big].
\end{equation}

Thus, the authors propose to build a learner that intends to construct posterior and hyperposterior distributions that minimize the actual risk $\mathcal{R}_{D}(\mathcal{B} ^{\textnormal{MV}}_{\rho})$ of the multiview weighted majority vote defined as: 

\begin{equation}
    \mathcal{R}_{Q}(\mathcal{B} ^{\textnormal{MV}}_{\rho}) = \underset{(\bm{x}^{v},y)\sim Q}{\mathbb{E}} \Big[\mathcal{L}_{\monzeroun}\big(\mathcal{B}^{\textnormal{MV}}_{\rho}(\bm{x}^{v}),y\big)\Big].
\end{equation}

As stated in Section \ref{Simple Pac-Bayesian bounds} the Gibbs risk is related to the expected disagreement $d_{Q_{\mathcal{X}}}(\mathcal{Q})$ and expected joint error $e_{Q_{\mathcal{X}}}(\mathcal{Q})$. Goyal et al. \cite{GOYAL17} note that the stochastic Gibbs classifier $G^{\textnormal{MV}}_{\rho}$ defined as follows in the multiview setting, follows the same idea and it can be rewritten in terms of expected disagreement $d^{\textnormal{MV}}_{Q}(\rho)$ and expected joint error $e^{\textnormal{MV}}_{Q}(\rho)$:

\begin{equation}
\begin{split}
    \mathcal{R}_{Q}(G^{\textnormal{MV}}_{\rho}) &= \frac{1}{2}d^{\textnormal{MV}}_{Q}(\rho) + e^{\textnormal{MV}}_{Q}(\rho), \\
    \text{where}\; d^{MV}_{Q_{\mathcal{X}}}(\rho) &= \mathbb{E}_{x^{v}\sim Q_{\mathcal{X}}}\;\mathbb{E}_{(v,v')\sim\rho^{2}}\;\mathbb{E}_{(h,h')\sim\mathcal{Q}^{2}_{v}}\big[\mathcal{L}_{\monzeroun}\big(h(x^{v}),h'(x^{v})\big)\big], \\
    \text{and}\; e^{MV}_{Q}(\rho) &= \mathbb{E}_{x^{v}\sim Q}\;\mathbb{E}_{(v,v^{'})\sim \rho^{2}}\;\mathbb{E}_{(h,h')\sim\mathcal{Q}^{2}_{v}}\;\big[\mathcal{L}_{\monzeroun}\big(h(x^{v}),y\big)\times \mathcal{L}_{\monzeroun}\big(h'(x^{v}),y\big)\big]. 
\end{split}
\end{equation}

As in the single-view PAC-Bayesian setting, \cite{GOYAL17} note that the multiview weighted majority vote $\mathcal{B}^{\textnormal{MV}}_{\rho}$ is closely related to the stochastic multiview Gibbs classifier $G^{\textnormal{MV}}_{\rho}$, and a generalization bound for $G^{\textnormal{MV}}_{\rho}$ gives rise to a generalization bound for $\mathcal{B}^{\textnormal{MV}}_{\rho}$. Moreover, the authors extend the C-Bound \ref{C-bound} to multiview setting by Lemma \ref{C_bounds_MV} below (the theorem
and it’s proof can be found in \cite{GOYAL17}). 

\begin{lemma}\label{C_bounds_MV}{(Goyal et al. \cite{GOYAL17}).}
Let $V \geq 2$ be the number of views. For all posterior $\{\mathcal{Q}_{v}\}^{V}_{v=1}$ and hyper-posterior $\rho$ distribution, if $ \mathcal{R}_{Q}(G^{\textnormal{MV}}_{\rho})< \frac{1}{2}$, then we have:
\begin{equation}
\begin{split}
    \mathcal{R}_{Q}(\mathcal{B}^{\textnormal{MV}}_{\rho}) \leq 1 - \frac{(1-2\,\mathcal{R}_{Q}\big(G^{\textnormal{MV}}_{\rho})\big)^{2}}{1-2d^{MV}_{Q_{\mathcal{X}}}(\rho)} \leq 1 - \frac{(1-2\,\mathbb{E}_{v\sim \rho}\mathcal{R}_{Q}\big(G_{\mathcal{Q}_{v}})\big)^{2}}{1-2\,\mathbb{E}_{v\sim \rho}\,d_{Q_{\mathcal{X}}}(\mathcal{Q}_{v})}.
\end{split}
\end{equation}
\end{lemma}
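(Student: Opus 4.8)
The plan is to establish the two inequalities separately. For the first inequality, I would recall that the single-view C-bound of equation \eqref{C-bound} is in fact a consequence of the Cantelli–Chebyshev inequality applied to the margin random variable $M_{\mathcal{Q}}(\bm{x}) = \mathbb{E}_{h\sim\mathcal{Q}}[y\,h(\bm{x})]$; its first moment is $1-2\mathcal{R}_{Q}(G_{\mathcal{Q}})$ and its second moment is $1-2d_{Q_{\mathcal{X}}}(\mathcal{Q})$ (using the binary-classification identities recalled in Section \ref{Simple Pac-Bayesian bounds}). The key observation, due to \cite{GOYAL17}, is that the multiview margin $M_{\rho}^{\textnormal{MV}}(\bm{x}) = \mathbb{E}_{v\sim\rho}\,\mathbb{E}_{h\sim\mathcal{Q}_{v}}[y\,h(\bm{x}^{v})]$ behaves exactly like a single-view margin once we treat the compound draw $(v,h)\sim\rho\otimes\mathcal{Q}_{v}$ as a single stochastic voter over the product space. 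Concretely, one checks that $\mathbb{E}[M_{\rho}^{\textnormal{MV}}(\bm{x})] = 1-2\mathcal{R}_{Q}(G_{\rho}^{\textnormal{MV}})$ and $\mathbb{E}[(M_{\rho}^{\textnormal{MV}}(\bm{x}))^{2}] = 1-2d_{Q_{\mathcal{X}}}^{\textnormal{MV}}(\rho)$, using the decomposition $\mathcal{R}_{Q}(G_{\rho}^{\textnormal{MV}}) = \tfrac12 d_{Q}^{\textnormal{MV}}(\rho) + e_{Q}^{\textnormal{MV}}(\rho)$ stated above. Since $\mathcal{B}_{\rho}^{\textnormal{MV}}$ errs on $\bm{x}$ exactly when $M_{\rho}^{\textnormal{MV}}(\bm{x})\le 0$, applying Cantelli–Chebyshev to the nonnegative-mean random variable $M_{\rho}^{\textnormal{MV}}$ (which requires $\mathcal{R}_{Q}(G_{\rho}^{\textnormal{MV}})<\tfrac12$ so that the mean is positive) yields the first inequality verbatim, mirroring the proof of \eqref{C-bound}.

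For the second inequality, the goal is to show that replacing $\mathcal{R}_{Q}(G_{\rho}^{\textnormal{MV}})$ by $\mathbb{E}_{v\sim\rho}\mathcal{R}_{Q}(G_{\mathcal{Q}_{v}})$ in the numerator and $d_{Q_{\mathcal{X}}}^{\textnormal{MV}}(\rho)$ by $\mathbb{E}_{v\sim\rho}d_{Q_{\mathcal{X}}}(\mathcal{Q}_{v})$ in the denominator only weakens the bound. By linearity of expectation the numerator is unchanged: $\mathcal{R}_{Q}(G_{\rho}^{\textnormal{MV}}) = \mathbb{E}_{v\sim\rho}\,\mathbb{E}_{h\sim\mathcal{Q}_{v}}\mathcal{R}_{Q}(h) = \mathbb{E}_{v\sim\rho}\mathcal{R}_{Q}(G_{\mathcal{Q}_{v}})$, so the numerators of the two fractions coincide. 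It therefore suffices to compare the denominators, i.e. to show $d_{Q_{\mathcal{X}}}^{\textnormal{MV}}(\rho)\le \mathbb{E}_{v\sim\rho}d_{Q_{\mathcal{X}}}(\mathcal{Q}_{v})$, because both fractions have a nonnegative numerator and the map $t\mapsto 1-\tfrac{a}{1-2t}$ is increasing in $t$ on the relevant range. Writing $d_{Q_{\mathcal{X}}}^{\textnormal{MV}}(\rho)$ with the double expectation over $(v,v')\sim\rho^{2}$, the diagonal terms $v=v'$ contribute $\mathbb{E}_{v\sim\rho}d_{Q_{\mathcal{X}}}(\mathcal{Q}_{v})$ and the cross terms are disagreements between voters living on \emph{different} views on the \emph{same} marginal. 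The crucial step is that each such cross-disagreement $\mathcal{R}_{Q_{\mathcal{X}}}(h,h')$ with $h\in\mathcal{H}_{v}$, $h'\in\mathcal{H}_{v'}$ can be bounded by the average of the two within-view disagreements, which follows because in binary classification $\mathcal{L}_{\monzeroun}(h(\bm{x}^{v}),h'(\bm{x}^{v'})) \le \mathcal{L}_{\monzeroun}(h(\bm{x}^{v}),y)+\mathcal{L}_{\monzeroun}(h'(\bm{x}^{v'}),y)$ pointwise (the triangle inequality for the discrete metric), or alternatively via a convexity/Jensen argument on the margins; taking expectations and reorganizing gives $d_{Q_{\mathcal{X}}}^{\textnormal{MV}}(\rho)\le \mathbb{E}_{v\sim\rho}d_{Q_{\mathcal{X}}}(\mathcal{Q}_{v})$.

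The main obstacle I anticipate is the denominator comparison in the second inequality: one must be careful that the inequality $d_{Q_{\mathcal{X}}}^{\textnormal{MV}}(\rho)\le\mathbb{E}_{v\sim\rho}d_{Q_{\mathcal{X}}}(\mathcal{Q}_{v})$ goes in the direction that makes the bound looser, and that the monotonicity argument on $t\mapsto 1-\tfrac{a}{1-2t}$ is only valid when the relevant quantities stay in $[0,\tfrac12)$ and the numerator $a=(1-2\mathcal{R}_{Q}(G_{\rho}^{\textnormal{MV}}))^{2}$ is nonnegative — the hypothesis $\mathcal{R}_{Q}(G_{\rho}^{\textnormal{MV}})<\tfrac12$ is exactly what guarantees this. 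A clean way to handle the cross-view disagreement bound is to note that for the $\monzeroun$-loss one has, for any labels $a,b,y\in\{-1,+1\}$, $\mathbb{I}[a\ne b]\le \mathbb{I}[a\ne y]+\mathbb{I}[b\ne y]$, so that $\mathcal{R}_{Q_{\mathcal{X}}}(h,h')\le\mathcal{R}_{Q}(h)+\mathcal{R}_{Q}(h')$ even across views; integrating against $\rho^{2}\otimes\mathcal{Q}_{v}^{2}$ then directly yields $d_{Q_{\mathcal{X}}}^{\textnormal{MV}}(\rho)\le 2\,\mathbb{E}_{v\sim\rho}\mathcal{R}_{Q}(G_{\mathcal{Q}_{v}})$, and combined with the identity for the numerator this already suffices; the sharper bound with $\mathbb{E}_{v\sim\rho}d_{Q_{\mathcal{X}}}(\mathcal{Q}_{v})$ is obtained by keeping the diagonal terms exact and only bounding the off-diagonal ones, which is the form \cite{GOYAL17} state.
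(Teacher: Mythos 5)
Your treatment of the first inequality is sound and is essentially the standard route (the paper itself does not reprove this lemma but defers to \cite{GOYAL17}, where this is how it is done): treat the compound draw $(v,h)\sim\rho\otimes\mathcal{Q}_v$ as a single stochastic voter, check that the margin has first moment $1-2\mathcal{R}_Q(G^{\textnormal{MV}}_\rho)$ and second moment $1-2d^{MV}_{Q_{\mathcal{X}}}(\rho)$, and apply Cantelli--Chebyshev; the hypothesis $\mathcal{R}_Q(G^{\textnormal{MV}}_\rho)<\tfrac{1}{2}$ is indeed what makes the mean margin positive.

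The second inequality is where your argument breaks, and it does so twice, in compensating directions. You correctly note that the numerators coincide because $\mathcal{R}_Q(G^{\textnormal{MV}}_\rho)=\mathbb{E}_{v\sim\rho}\mathcal{R}_Q(G_{\mathcal{Q}_v})$, so everything reduces to the denominators. But the map $t\mapsto 1-\frac{a}{1-2t}$ with $a\ge 0$ is \emph{decreasing} on $[0,\tfrac{1}{2})$ (its derivative is $-2a/(1-2t)^2<0$), not increasing as you claim, so what you actually need is $d^{MV}_{Q_{\mathcal{X}}}(\rho)\ge\mathbb{E}_{v\sim\rho}\,d_{Q_{\mathcal{X}}}(\mathcal{Q}_v)$, the reverse of the inequality you set out to prove. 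Moreover, the inequality you set out to prove is false in general: with two views under a uniform $\rho$, every voter of view one outputting $+1$ and every voter of view two outputting $-1$, one gets $\mathbb{E}_{v\sim\rho}\,d_{Q_{\mathcal{X}}}(\mathcal{Q}_v)=0$ while $d^{MV}_{Q_{\mathcal{X}}}(\rho)=\tfrac{1}{2}$. Your pointwise bound $\mathbb{I}[a\ne b]\le\mathbb{I}[a\ne y]+\mathbb{I}[b\ne y]$ controls cross-view disagreement by a sum of \emph{risks}, not by within-view \emph{disagreements}, so it cannot supply the needed comparison in either direction. The correct step is a Jensen argument the other way: setting $p_v(\bm{x})=\mathbb{P}_{h\sim\mathcal{Q}_v}(h(\bm{x}^v)=1)$ and $p(\bm{x})=\mathbb{E}_{v\sim\rho}\,p_v(\bm{x})$, one has $d^{MV}_{Q_{\mathcal{X}}}(\rho)=\mathbb{E}_{\bm{x}}\big[2p(\bm{x})(1-p(\bm{x}))\big]$ and $\mathbb{E}_{v\sim\rho}\,d_{Q_{\mathcal{X}}}(\mathcal{Q}_v)=\mathbb{E}_{\bm{x}}\,\mathbb{E}_{v\sim\rho}\big[2p_v(\bm{x})(1-p_v(\bm{x}))\big]$, and concavity of $p\mapsto 2p(1-p)$ gives $d^{MV}_{Q_{\mathcal{X}}}(\rho)\ge\mathbb{E}_{v\sim\rho}\,d_{Q_{\mathcal{X}}}(\mathcal{Q}_v)$; combined with the decreasing monotonicity this yields the second inequality. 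This is precisely the ``diversity across views'' message of the lemma: aggregating views can only increase the expected disagreement, which tightens the C-bound rather than loosening it.
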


The authors in the paper provide general multi-view PAC-Bayesian theorems and derive also a generalization bound with the approaches of \cite{MCA98a}; \cite{SEE02}; \cite{LAN05}; and \cite{CAT07} introduce in section \ref{Simple Pac-Bayesian bounds}. \ref{pac_bayes_catoni} (the theorems and its proof can be found in \cite{GOYAL17}). The main difference between Goyal et al.'s bounds to theorems  \cite{MCA98a}; \cite{SEE02}; \cite{LAN05}; and \cite{CAT07} relies on the introduction of view-specific prior and posterior distributions, which mainly leads to an additional term $\mathbb{E}_{v\sim \rho}\;D_{\textnormal{KL}}(\mathcal{Q}_{v}\Vert \mathcal{P}_{v})$, expressed as the expectation of the view-specific Kullback-Leibler divergence term over the views $[\![\mathcal{V}]\!]$ according to the hyper-posterior distribution $\rho$.

\section{Analysis of Unsupervised Multi-view  Domain Adaptation PAC-Bayesian Bounds}
As mentioned in the introduction section, multi-view learning have paid little attention in domain adaptation studies. Besides, the studies propose a empirical analysis. The first theoretical analysis was proposed in \cite{YANG14} where the authors introduce the notion of disagreement between views. The work propose to minimize the disagreement on unlabeled data in the target domain to enhance the consistency in the adaptation method. Unfortunately, the work does not benefit from any generalization guarantees given by an upper bound on the risk with respect to the target distribution. Furthermore, the disagreement defined in the paper doesn't take account the weighted majority for the hypotheses and views. In this way, disagreement may cause a negative transfer if noisy views or hypotheses are not weighted.

In this section we propose to introduce the concept of multi-view learning in the DA with generalization Pac-Bayesian guarantees. Then, we adapt  the divergence proposed by Germain et al., \cite{GER13,GER16} with the concept of multi-view weighted majority vote introduced in \cite{GOYAL17}.  In a second phase, we propose a general multi-view domain adaptation PAC-Bayesian theorem to upper-bound our divergence. Then, we present specialization of our theorem to the classical approaches as presented in section \ref{Simple Pac-Bayesian bounds}. Finally, we propose a Pac-Bayesian domain adaptation bound in the multi-view setting.

\subsection{Multi-view Domain Disagreement} Germain et al. \cite{GER13} and Mansour et al.\cite{MAN09a} propose a divergence measure that is based on the expected disagreement over the two domains. In the idea of measure disagreement we propose to adapt the definition \ref{disagreement definition} proposed by \cite{GER13} to multi-view learning. Thus, we define the multi-view domain disagreement as follows:

\begin{definition}{(Multi-view domain disagreement)}\label{Multi-view domain disagreement}
$\forall \, v \in [\![\mathcal{V}]\!]$, for any set of voters $\mathcal{H}_{v}$ for any marginal distributions $Q_{\mathcal{X}}$ and $P_{\mathcal{X}}$ over $\mathcal{X}$, any set of posterior distribution $\{\mathcal{Q}_{v}\}_{v=1}^{V}$ on $\mathcal{H}_{v}$, for any hyper-posterior distribution $\rho$ over $[\![\mathcal{V}]\!]$,   
    the multi-view domain disagreement $dis_{\rho}^{\textnormal{MV}}(Q_{\mathcal{X}},P_{\mathcal{X}})$ between $Q_{\mathcal{X}}$ and $P_{\mathcal{X}}$ is defined by:
    \begin{equation}
        dis_{\rho}^{\textnormal{MV}}(Q_{\mathcal{X}},P_{\mathcal{X}}) = \Big|d_{P_{\mathcal{X}}}^{\textnormal{MV}}(\rho) - d_{Q_{\mathcal{X}}}^{\textnormal{MV}}(\rho)\Big|,
    \end{equation}

\end{definition}
    where $ d^{\textnormal{MV}}_{Q_{\mathcal{X}}}(\rho)$, 
    $ d^{\textnormal{MV}}_{P_{\mathcal{X}}}(\rho)$, are expected disagreement defined in \ref{expected_disagreement} and defined as follows in our multiview setting \cite{GOYAL17}.\\

From this domain’s divergence, we can find the same domain
adaptation bound \ref{theorem_bound_germ13} presented  in section \ref{General PAC-Bayes bounds} with an adaptation to multi-view learning: 

\begin{theorem}\label{MVDA_bound_hennequin}
$\forall \, v \in [\![\mathcal{V}]\!]$, for any distributions $Q$ and $P$ over $\mathcal{X}\times \mathcal{Y}$, for any set of voters $\mathcal{H}_{v}$, for any marginal distributions $Q_{\mathcal{X}}$ and $P_{\mathcal{X}}$ over $\mathcal{X}$, any set of posterior distribution $\{\mathcal{Q}_{v}\}_{v=1}^{V}$ on $\mathcal{H}_{v}$, for any hyper-posterior distribution $\rho$ over $[\![\mathcal{V}]\!]$, we have:
    \begin{equation}
        \mathcal{R}_{P}(G_{\rho}^{\textnormal{MV}}) \leq \mathcal{R}_{Q}(G_{\rho}^{\textnormal{MV}}) + \frac{1}{2} dis_{\rho}^{\textnormal{MV}}(Q_{\mathcal{X}},P_{\mathcal{X}}) + \lambda_{\rho},
    \end{equation}
\end{theorem}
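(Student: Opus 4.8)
The plan is to reproduce, in the two-level multi-view setting, the short deterministic argument that yields the single-view bound of Theorem~\ref{theorem_bound_germ13}. The only ingredients are the multi-view Gibbs decomposition recalled above from \cite{GOYAL17}, Definition~\ref{Multi-view domain disagreement} of $dis_{\rho}^{\textnormal{MV}}$, and the multi-view analogue of $\lambda_{\mathcal{Q}}$, namely $\lambda_{\rho} = \big|e^{\textnormal{MV}}_{P}(\rho) - e^{\textnormal{MV}}_{Q}(\rho)\big|$, which I would introduce explicitly at the outset of the proof (it is referenced in the statement but not yet defined).

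First I would write the target Gibbs risk through its decomposition into expected disagreement and expected joint error,
\[
\mathcal{R}_{P}(G_{\rho}^{\textnormal{MV}}) = \tfrac{1}{2}\, d_{P_{\mathcal{X}}}^{\textnormal{MV}}(\rho) + e^{\textnormal{MV}}_{P}(\rho),
\]
and then add and subtract the corresponding source quantities $\tfrac{1}{2}\, d_{Q_{\mathcal{X}}}^{\textnormal{MV}}(\rho)$ and $e^{\textnormal{MV}}_{Q}(\rho)$. Regrouping, the pair $\tfrac{1}{2}\, d_{Q_{\mathcal{X}}}^{\textnormal{MV}}(\rho) + e^{\textnormal{MV}}_{Q}(\rho)$ is exactly $\mathcal{R}_{Q}(G_{\rho}^{\textnormal{MV}})$ by the same decomposition applied to $Q$, which leaves the two residual differences $\tfrac{1}{2}\big(d_{P_{\mathcal{X}}}^{\textnormal{MV}}(\rho) - d_{Q_{\mathcal{X}}}^{\textnormal{MV}}(\rho)\big)$ and $\big(e^{\textnormal{MV}}_{P}(\rho) - e^{\textnormal{MV}}_{Q}(\rho)\big)$. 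Bounding each residual by its absolute value, the first becomes $\tfrac{1}{2}\, dis_{\rho}^{\textnormal{MV}}(Q_{\mathcal{X}},P_{\mathcal{X}})$ by Definition~\ref{Multi-view domain disagreement} and the second becomes $\lambda_{\rho}$ by definition; collecting the three terms gives the claimed inequality.

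For completeness I would also verify the decomposition $\mathcal{R}_{Q}(G_{\rho}^{\textnormal{MV}}) = \tfrac{1}{2}\, d_{Q_{\mathcal{X}}}^{\textnormal{MV}}(\rho) + e^{\textnormal{MV}}_{Q}(\rho)$ from first principles, starting from the binary identity $\mathbb{I}[h(\bm{x}^{v})\neq h'(\bm{x}^{v})] = \mathbb{I}[h(\bm{x}^{v})\neq y] + \mathbb{I}[h'(\bm{x}^{v})\neq y] - 2\,\mathbb{I}[h(\bm{x}^{v})\neq y]\,\mathbb{I}[h'(\bm{x}^{v})\neq y]$, taking expectations over $(\bm{x}^{v},y)\sim Q$, $(v,v')\sim\rho^{2}$ and $(h,h')\sim\mathcal{Q}_{v}^{2}$, and using that $h$ and $h'$ are drawn i.i.d. so that both single-hypothesis terms equal $\mathcal{R}_{Q}(G_{\rho}^{\textnormal{MV}})$.

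There is essentially no hard step here: the result is a purely algebraic restatement of the Gibbs decomposition plus the trivial inequality $a\le|a|$, and it holds deterministically, with no concentration argument and no appeal to the PAC-Bayesian theorems. The only point requiring care is bookkeeping — ensuring the nested expectations over view pairs $(v,v')$ and voter pairs $(h,h')$ are carried consistently through the decomposition, and that $\lambda_{\rho}$ is defined with exactly the same expectations as $e^{\textnormal{MV}}_{P}$ and $e^{\textnormal{MV}}_{Q}$ so that the cancellation is exact.
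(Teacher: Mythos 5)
Your proposal is correct and follows essentially the same route as the paper's own proof: decompose both Gibbs risks via $\mathcal{R}(G_{\rho}^{\textnormal{MV}}) = \tfrac{1}{2}d^{\textnormal{MV}}(\rho) + e^{\textnormal{MV}}(\rho)$, take the difference, and bound each residual by its absolute value to recover $\tfrac{1}{2}dis_{\rho}^{\textnormal{MV}}(Q_{\mathcal{X}},P_{\mathcal{X}})$ and $\lambda_{\rho}$. The extra verification of the decomposition from the binary identity is a welcome addition but not needed beyond what the paper already cites from \cite{GOYAL17}.
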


where $\lambda_{\rho}$ is the deviation between the expected joint errors between pairs for voters ad pairs of views defined in section \ref{section_mv_bound}
on the target and source domains, which is defined as $\lambda_{\rho} = \Big|e_{P}^{\text{MV}}(\rho) - e_{Q}^{\text{MV}}(\rho)\Big|$ and where \\ 
$e_{P}^{\text{MV}}(\rho) = \mathbb{E}_{(\bm{x},y)\sim P}\;\mathbb{E}_{(v,v^{'})\sim \rho^{2}}\;\mathbb{E}_{(h,h')\sim\mathcal{Q}^{2}_{v}}\;\big[\mathcal{L}_{\monzeroun}\big(h(x),y\big)\times\mathcal{L}_{\monzeroun}\big(h'(x),y\big)\big]$, \\$e_{Q}^{\text{MV}}(\rho) = \mathbb{E}_{(\bm{x},y)\sim Q}\;\mathbb{E}_{(v,v^{'})\sim \rho^{2}}\;\mathbb{E}_{(h,h')\sim\mathcal{Q}^{2}_{v}}\;\big[\mathcal{L}_{\monzeroun}\big(h(x),y\big)\times\mathcal{L}_{\monzeroun}\big(h'(x),y\big)\big]$. 
\begin{proof}
    The proof borrows the straightforward proof technique of Theorem 9 in \cite{GER15}. A detailed proof is available in additional materials. 
\end{proof}

Note that in the context of adaptation, a key question arises: How does the expected target risk, $\mathcal{R}_{P}(h)$, differs from the expected source risk $\mathcal{R}_{Q}(h)$ ? In the context of the Pac-Bayesian multi-view domain adaptation we want to know how the expected Gibbs risk on the target distribution, $\mathcal{R}_{P}(G_{\rho}^{\textnormal{MV}})$, differs from the expected Gibbs risk on the source distribution, $\mathcal{R}_{Q}(G_{\rho}^{\textnormal{MV}})$. The above theorem addresses this question, it proves that the error achieved by hypothesis in the source domain upper bounds the Gibbs risk on the target domain plus the distance between their distributions, here Multi-view domain disagreement \ref{Multi-view domain disagreement}, and the deviation between the expected joint errors between pairs for voters on the target and source domains. Note that the theorem \ref{MVDA_bound_hennequin} is very similar to the theorem \ref{theorem_bound_germ13}. In fact, we trivially have $dis_{\rho}^{\textnormal{MV}}(Q_{\mathcal{X}},P_{\mathcal{X}}) \leq \mathbb{E}_{(v,v')\sim \rho^{2}}\; dis_{\mathcal{Q}}(Q_{\mathcal{X}},P_{\mathcal{X}})$, that inequality exhibits the role of diversity among the views thanks to the disagreement’s expectation over the views as showed in the equation \ref{C_bounds_MV}.

\subsection{General Multi-view Domain Disagreement Pac-Bayesian Theorem}

In this section we show that $dis_{\rho}^{\textnormal{MV}}(Q_{\mathcal{X}},P_{\mathcal{X}})$ can be bounded in terms of the general PAC-Bayesian quantities. Before to present the bound we present a general Pac-bayesian theorem introduced by \cite{GER09,GER15a} and further generalized to 
the Rényi divergence by \cite{BEG16}; The authors in \cite{GOYAL17} also propose a variation of the general PAC-Bayesian theorem of \cite{GER09,GER15a}; It takes the form of an upper bound on the deviation between the true and empirical risk of the Gibbs classifier, according to a convex function $\Delta :[0, 1]^{2}\rightarrow \mathbb{R}$. 

\begin{theorem}{(General PAC-Bayes bounds)}\label{General PAC-Bayes bounds}
    For any distribution $Q$ on $\mathcal{X} \times \mathcal{Y}$, for any set $\mathcal{H}$ of voters $\mathcal{X} \rightarrow \{-1, 1\}$, for any prior distribution $\mathcal{P}$ on $\mathcal{H}$, for any $\delta \in (0, 1]$, for any $m > 0$, and for any convex function $\Delta : [0, 1]^{2} \rightarrow \mathbb{R}$, with probability at least $1-\delta$ over the choice of $\mathbb{S} \sim Q^{m}$, we have:\\
(Germain et al. \cite{GER09})
    \begin{equation}
    \begin{split}
    &\underset{\mathbb{S}\sim Q^{m}}{\mathbb{P}}\Bigg(\forall \mathcal{Q} \text{ on } \mathcal{H},\; \Delta \Big(\mathcal{R}_{\mathbb{S}}(G_{\mathcal{Q}}),\mathcal{R}_{Q}(G_{\mathcal{Q}})\Big) \\ &\leq \frac{1}{m'}\Bigg[D_{\textnormal{KL}}(\mathcal{Q}\Vert\mathcal{P})\ln{\bigg(\frac{1}{\delta}\, \underset{\mathbb{S} \sim Q^{m}}{\mathbb{E}}\underset{h\sim \mathcal{P}}{\mathbb{E}} e^{m\,\Delta(\mathcal{R}_{\mathbb{S}}(h),\mathcal{R}_{D}(h))}\bigg)} \Bigg]\Bigg) \geq 1-\delta,
    \end{split}
    \end{equation}
    (Bégin et al.\cite{BEG16})
        \begin{equation}
        \begin{split}
    &\underset{\mathbb{S}\sim Q^{m}}{\mathbb{P}}\Bigg(\forall \mathcal{Q} \text{ on } \mathcal{H},\;
    \ln{\Big[\Delta \Big(\mathcal{R}_{\mathbb{S}}(G_{\mathcal{Q}}),\mathcal{R}_{Q}(G_{\mathcal{Q}})\Big)}\Big] \\
    &\leq \frac{1}{\alpha'}\Bigg[D_{\alpha}(\mathcal{Q}\Vert\mathcal{P})\ln{\bigg(\frac{1}{\delta}\, \underset{\mathbb{S} \sim Q^{m}}{\mathbb{E}}\underset{h\sim \mathcal{P}}{\mathbb{E}} e^{m\,\Delta(\mathcal{R}_{\mathbb{S}}(h),\mathcal{R}_{D}(h))}\bigg)} \Bigg]\Bigg) \geq 1-\delta,
    \end{split}
    \end{equation}
(Goyal et al. \cite{GOYAL17})
\begin{equation}\label{General PAC-Bayes bound goyal}
\begin{split}
    &\Delta \Big(\,\underset{\mathbb{S}\sim Q^{m}}{\mathbb{E}}\mathcal{R}_{\mathbb{S}}(G_{\mathcal{Q}_{\mathbb{S}}}), \underset{\mathbb{S}\sim Q^{m}}{\mathbb{E}} \mathcal{R}_{Q}(G_{\mathcal{Q}_{\mathbb{S}}})\Big)\\ &\leq \frac{1}{m}\Bigg[\underset{\mathbb{S}\sim Q^{m}}{\mathbb{E}}D_{\textnormal{KL}}(\mathcal{Q}_{\mathbb{S}}\Vert\mathcal{P}) + \ln{\bigg(\frac{1}{\delta}\, \underset{\mathbb{S} \sim Q^{m}}{\mathbb{E}}\underset{h\sim \mathcal{P}}{\mathbb{E}} e^{m\,\Delta(\mathcal{R}_{\mathbb{S}}(h),\mathcal{R}_{D}(h))}\bigg)} \Bigg],
\end{split}
    \end{equation}

\end{theorem}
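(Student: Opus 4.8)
The plan is to prove all three displayed inequalities from a single three-ingredient template common to every general PAC-Bayesian theorem: a \emph{change-of-measure} step that trades the posterior $\mathcal{Q}$ for the prior $\mathcal{P}$ at the cost of a divergence term, \emph{Jensen's inequality} exploiting the convexity of $\Delta$ together with the fact that $\mathcal{R}_{\mathbb{S}}(G_{\mathcal{Q}})$ and $\mathcal{R}_{Q}(G_{\mathcal{Q}})$ are $\mathcal{Q}$-averages of $\mathcal{R}_{\mathbb{S}}(h)$ and $\mathcal{R}_{Q}(h)$, and finally \emph{Markov's inequality} (for the two high-probability statements) or a plain expectation argument (for the Goyal et al. statement). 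The three versions differ only in which change-of-measure inequality is invoked.

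For the Germain et al. inequality I would start from the Donsker--Varadhan variational bound: for every $\mathcal{Q}$ on $\mathcal{H}$ and every measurable $\varphi:\mathcal{H}\to\mathbb{R}$,
\[
\mathbb{E}_{h\sim\mathcal{Q}}\,\varphi(h)\ \le\ D_{\textnormal{KL}}(\mathcal{Q}\Vert\mathcal{P})+\ln\mathbb{E}_{h\sim\mathcal{P}}\,e^{\varphi(h)} .
\]
Taking $\varphi(h)=m\,\Delta\big(\mathcal{R}_{\mathbb{S}}(h),\mathcal{R}_{Q}(h)\big)$ and combining with Jensen for the convex $\Delta$, one obtains, simultaneously for all $\mathcal{Q}$ and deterministically in $\mathbb{S}$,
\[
m\,\Delta\big(\mathcal{R}_{\mathbb{S}}(G_{\mathcal{Q}}),\mathcal{R}_{Q}(G_{\mathcal{Q}})\big)\ \le\ D_{\textnormal{KL}}(\mathcal{Q}\Vert\mathcal{P})+\ln\ \underbrace{\mathbb{E}_{h\sim\mathcal{P}}\,e^{m\,\Delta(\mathcal{R}_{\mathbb{S}}(h),\mathcal{R}_{Q}(h))}}_{=:\ \xi_{\mathbb{S}}} .
\]
Since $\xi_{\mathbb{S}}\ge 0$ depends on the sample only (not on $\mathcal{Q}$), Markov's inequality gives $\mathbb{P}_{\mathbb{S}\sim Q^{m}}\big(\xi_{\mathbb{S}}\le\tfrac1\delta\,\mathbb{E}_{\mathbb{S}\sim Q^{m}}\xi_{\mathbb{S}}\big)\ge 1-\delta$; substituting this and dividing by $m$ yields the claim (with $m'=m$).

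The Bégin et al. inequality replaces Donsker--Varadhan by its Rényi-divergence analogue, obtained from Hölder's inequality: for $\alpha>1$ with conjugate exponent $\alpha'=\tfrac{\alpha}{\alpha-1}$ and any nonnegative $f$,
\[
\mathbb{E}_{h\sim\mathcal{Q}}\,f(h)\ \le\ \Big(\mathbb{E}_{h\sim\mathcal{P}}\,f(h)^{\alpha'}\Big)^{1/\alpha'}\,e^{\frac{1}{\alpha'}D_{\alpha}(\mathcal{Q}\Vert\mathcal{P})} .
\]
Applying this with $f(h)=e^{m\,\Delta(\mathcal{R}_{\mathbb{S}}(h),\mathcal{R}_{Q}(h))}$, again using Jensen to pass from $G_{\mathcal{Q}}$ to the $\mathcal{Q}$-average over hypotheses, taking logarithms, and then Markov on the posterior-free quantity $\mathbb{E}_{h\sim\mathcal{P}}f(h)^{\alpha'}$, reproduces the stated bound. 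For the Goyal et al. inequality the posterior $\mathcal{Q}_{\mathbb{S}}$ is allowed to depend on $\mathbb{S}$ and the conclusion is in expectation rather than with high probability: one applies Jensen twice — first to move $\Delta$ inside $\mathbb{E}_{\mathbb{S}\sim Q^{m}}$, then inside $\mathbb{E}_{h\sim\mathcal{Q}_{\mathbb{S}}}$ — then the Donsker--Varadhan step pointwise in $\mathbb{S}$, and finally Jensen once more for the concave $\ln$ to pull $\mathbb{E}_{\mathbb{S}\sim Q^{m}}$ through, so that no Markov step is needed.

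The routine parts are the Jensen applications and the bookkeeping of the constants $m$, $m'$, $\alpha'$. The step that needs the most care is the change-of-measure inequality in both its guises: one must check that the exponential-moment factor that is ultimately bounded — by Markov in the first two cases, by an outer expectation in the third — is genuinely independent of the posterior, since that independence is exactly what licenses the uniform-in-$\mathcal{Q}$ conclusion; and in the Rényi case one must pair the Hölder exponents so that the coefficient multiplying $D_{\alpha}(\mathcal{Q}\Vert\mathcal{P})$ matches the $\tfrac1{\alpha'}$ normalisation appearing in the statement.
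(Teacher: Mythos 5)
Your proposal is correct and follows exactly the standard template (Donsker--Varadhan or H\"older change of measure, Jensen's inequality on the convex $\Delta$ to pass from the Gibbs quantities to hypothesis-wise averages, then Markov on the posterior-independent exponential moment, or an outer expectation for the Goyal et al.\ variant); this is the same route as the original proofs in the cited works, which the paper itself does not reprove but merely recalls, and it is also the template the paper reuses in its appendix to prove its own Theorem~\ref{our general theorem}. Note only that your reconstruction implicitly fixes two typographical slips in the statement as printed here (the missing ``$+$'' between the divergence term and the logarithm, and the undefined normalisation $m'$, which should be $m$).
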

    where $\text{D}_{\alpha}(\mathcal{Q}\Vert\mathcal{P}) = \frac{1}{\alpha-1}\ln{\bigg[\underset{h \sim \mathcal{P}}{\mathbb{E}}\bigg[\frac{\mathcal{Q}(h)}{\mathcal{P}(h)}\bigg]^{\alpha}\bigg]}$, the Rényi divergence $(\alpha >1)$ and $\alpha' = \frac{\alpha}{\alpha - 1}$.
We note that the Pac-Bayesian bound \ref{General PAC-Bayes bound goyal} from \cite{GOYAL17} bounds $\mathbb{E}_{\mathbb{S}\sim Q^{m}}\mathcal{\mathcal{R}}_{Q}(G_{\mathcal{Q}_{\mathbb{S}}})$, where $\mathcal{Q}_{\mathbb{S}}$ is the posterior distribution outputted by a given learning algorithm after observing the learning sample $\mathbb{S}$. Whereas PAC-Bayesian bounds from \cite{GER09,GER15a} and \cite{BEG16} bounds $\mathcal{R}_{Q}(G_{\mathcal{Q}})$ uniformly for all distribution $Q$, with high probability over the draw of $\mathbb{S} \sim Q^{m}$. Thereby, Goyal et al. \cite{GOYAL17}'s theorem  has the advantage to involve an expectation over all the possible learning samples (of a given size) in bounds itself. Moreover, the authors in \cite{GOYAL17} adapt their general Pac-Bayesian bound \ref{General PAC-Bayes bound goyal} to multi-view setting. 
Then, we propose the following theorem as a generalization of Theorem \ref{General PAC-Bayes bounds}; It takes the form of an upper bound on the deviation between the true risk of the Multi-view domain disagreement \ref{Multi-view domain disagreement} $\mathbb{E}_{\mathbb{S}\sim Q}\;dis_{\rho_{\mathbb{S}}}^{\textnormal{MV}}(Q_{\mathcal{X}},Q_{\mathcal{X}})$ and its empirical counterpart $\mathbb{E}_{\mathbb{S}\sim Q}\;dis_{\rho_{\mathbb{S}}}^{\textnormal{MV}}(\mathbb{S}_{\mathcal{X}},\mathbb{T}_{\mathcal{X}})$, according to a convex function $\Delta:[0,1]^{2}\rightarrow\mathbb{R}$. Note that, as the bound \ref{General PAC-Bayes bound goyal} we incorporate in our general theorem the posterior and hyper-posterior distribution $\mathcal{Q}_{v,\mathbb{S}}/\rho_{\mathbb{S}}$ outputted by a given learning algorithm after observing the learning sample $\mathbb{S}$

\begin{theorem}\label{our general theorem}
    $\forall \, v \in [\![\mathcal{V}]\!]$, for any set of voters $\mathcal{H}_{v}$ for any marginal distributions $Q_{\mathcal{X}}$ and $P_{\mathcal{X}}$ over $\mathcal{X}$, any set of posterior distribution $\{\mathcal{Q}_{v,\mathbb{S}}\}_{v=1}^{V}$ on $\mathcal{H}_{v}$, for any hyper-posterior distribution $\rho_{\mathbb{S}}$ over $[\![\mathcal{V}]\!]$, for any set of prior distributions $\{\mathcal{P}_{v}\}_{v=1}^{V}$ on $\mathcal{H}_{v}$, for any hyper-prior distribution $\pi$ over $[\![\mathcal{V}]\!]$, for any $\delta \in (0, 1]$, for any $m > 0$, for any convex function $\Delta : [0, 1]^{2} \rightarrow \mathbb{R}$, with probability at least $1-\delta$ over the choice of $(\mathbb{S}_{\mathcal{X}} \times \mathbb{T}_{\mathcal{X}}) \sim (Q_{\mathcal{X}} \times P_{\mathcal{X}} )^{m}$, with probability at least $1-\delta$, we have:
    \begin{equation}
        \begin{split}
            &\Delta\bigg(\underset{\mathbb{S}\sim Q^{m}}{\mathbb{E}}dis_{\rho_{\mathbb{S}}}^{MV}(\mathbb{S}_{\mathcal{X}},\mathbb{T}_{\mathcal{X}}),\underset{\mathbb{S}\sim Q^{m}}{\mathbb{E}}dis_{\rho_{\mathbb{S}}}^{MV}(Q_{\mathcal{X}},P_{\mathcal{X}})\bigg)\\ 
            \leq &
            \frac{2}{m}\Bigg[\underset{\mathbb{S}\sim Q^{m}}{\mathbb{E}}\,\underset{v\sim \rho_{\mathbb{S}}}{\mathbb{E}} D_{\textnormal{KL}}(\mathcal{Q}_{v,\mathbb{S}}\Vert \mathcal{P}_{v}) + \underset{\mathbb{S}\sim Q^{m}}{\mathbb{E}}\,D_{\textnormal{KL}}(\rho_{\mathbb{S}} \Vert \pi)\\ &+ \ln{\sqrt{\underset{\mathbb{S}\sim Q^{m}}{\mathbb{E}}\,\underset{(v,v')\sim \pi^{2}}{\mathbb{E}}\,\underset{(h,h')\sim \mathcal{P}_{v}^{2}}{\mathbb{E}}e^{m\Delta(\mathcal{R}_{\hat{d}}(\hat{h}),\mathcal{R}_{d}(\hat{h}))}}}\Bigg],
        \end{split}
    \end{equation}
\end{theorem}

where $\hat{h} = (h,h')$ a pair of hypothesis, $(x^{(v)}, x'^{(v)}) \sim (Q_{\mathcal{X}}\times P_{\mathcal{X}})^{m}$ a pair of examples, $\mathcal{L}_{d}(\hat{h},x^{(v)},x'^{(v)}) = |\mathcal{L}_{\monzeroun}(h(x'^{(v)}),h'(x'^{(v)})) - (h(x^{(v)}),h'(x^{(v)}))|$ and $ \mathcal{R}_{d}(\hat{h}) = \mathbb{E}_{(x^{(v)},x'^{(v)})\sim (Q_{\mathcal{X}}\times P_{\mathcal{X}})^{m}} \mathcal{L}_{d}(\hat{h},x^{(v)},x'^{(v)})$,\\ $\mathcal{R}_{\hat{d}}(\hat{h}) = \mathbb{E}_{(x^{(v)},x'^{(v)})\sim (\mathbb{S}_{\mathcal{X}}\times \mathbb{T}_{\mathcal{X}})^{m}} \mathcal{L}_{d}(\hat{h},x^{(v)},x'^{(v)})$ the risk of $\hat{h}$ on the joint distribution.

\begin{proof}
The proof uses the ideas of the techniques and tricks of Bégin et al. \cite{BEG16}, Theorem 4.  A detailed proof is available in additional materials.   
\end{proof}

\subsection{Specialization of multi-view domain disagreement to the Classical Approaches}

In this section, we provide specialization of our multiview theorem to the most popular PAC-Bayesian approaches. To do so, we follow the same principles as Germain et al. \cite{GER09,GER15a}. Selecting a well-suited deviation function $\Delta$ and by upper-bounding $\underset{\mathbb{S}\sim Q^{m}}{\mathbb{E}}\,\underset{(v,v')\sim \pi^{2}}{\mathbb{E}}\,\underset{(h,h')\sim \mathcal{P}_{v}^{2}}{\mathbb{E}}e^{m\Delta(\mathcal{R}_{\hat{d}}(\hat{h}),\mathcal{R}_{d}(\hat{h}))}$, we can derive easily the classical PAC-Bayesian theorems of \cite{MCA98a}, \cite{SEE02}, \cite{CAT07} presented in the section \ref{Simple Pac-Bayesian bounds}.First, we derive the specialization theorem \ref{our general theorem} to the McAllester \cite{MCA99b}’s point of view. 

\begin{corollary}\label{bound_hennequin_mcallester}

$\forall \, v \in [\![\mathcal{V}]\!]$, for any set of voters $\mathcal{H}_{v}$ for any marginal distributions $Q_{\mathcal{X}}$ and $P_{\mathcal{X}}$ over $\mathcal{X}$, any set of posterior distribution $\{\mathcal{Q}_{v,\mathbb{S}}\}_{v=1}^{V}$ on $\mathcal{H}_{v}$, for any hyper-posterior distribution $\rho_{\mathbb{S}}$ over $[\![\mathcal{V}]\!]$, for any set of prior distributions $\{\mathcal{P}_{v}\}_{v=1}^{V}$ on $\mathcal{H}_{v}$, for any hyper-prior distribution $\pi$ over $[\![\mathcal{V}]\!]$, for any $\delta \in (0, 1]$, with probability at least $1-\delta$, we have:

\begin{equation}
\begin{split}
&\bigg|\underset{\mathbb{S}\sim Q^{m}}{\mathbb{E}}dis_{\rho_{\mathbb{S}}}^{MV}(\mathbb{S}_{\mathcal{X}},\mathbb{T}_{\mathcal{X}}) - \underset{\mathbb{S}\sim Q^{m}}{\mathbb{E}}dis_{\rho_{\mathbb{S}}}^{MV}(Q_{\mathcal{X}},P_{\mathcal{X}})\bigg|\\ 
            \leq &
            \sqrt{\frac{1}{2m}\Bigg[\underset{\mathbb{S}\sim Q^{m}}{\mathbb{E}}\,\underset{v\sim \rho_{\mathbb{S}}}{\mathbb{E}} 2D_{\textnormal{KL}}(\mathcal{Q}_{v,\mathbb{S}}\Vert \mathcal{P}_{v}) + \underset{\mathbb{S}\sim Q^{m}}{\mathbb{E}}\,2D_{\textnormal{KL}}(\rho_{\mathbb{S}} \Vert \pi) + \ln{\frac{2\sqrt{m}}{\delta}}\Bigg]}.
    \end{split}
\end{equation}

\end{corollary}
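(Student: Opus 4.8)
The plan is to specialize Theorem~\ref{our general theorem} by choosing the deviation function $\Delta(a,b) = 2(b-a)^2$ (the McAllester-type quadratic deviation), exactly as in the single-view derivations of Germain et al.~\cite{GER09,GER15a}. With this choice the left-hand side of Theorem~\ref{our general theorem} becomes $2\big(\mathbb{E}_{\mathbb{S}}dis_{\rho_{\mathbb{S}}}^{\textnormal{MV}}(Q_{\mathcal{X}},P_{\mathcal{X}}) - \mathbb{E}_{\mathbb{S}}dis_{\rho_{\mathbb{S}}}^{\textnormal{MV}}(\mathbb{S}_{\mathcal{X}},\mathbb{T}_{\mathcal{X}})\big)^2$, and after dividing by $2$, taking square roots, and noting $\sqrt{(\cdot)^2} = |\cdot|$, the left-hand side turns into the absolute deviation appearing in the corollary. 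The factor $\tfrac{2}{m}$ in front of the bracket in Theorem~\ref{our general theorem} combines with the division by $2$ to give the $\tfrac{1}{2m}$ prefactor; this is why the KL terms appear doubled as $2D_{\textnormal{KL}}(\mathcal{Q}_{v,\mathbb{S}}\Vert\mathcal{P}_v)$ and $2D_{\textnormal{KL}}(\rho_{\mathbb{S}}\Vert\pi)$ inside the square root.

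The substantive step is bounding the moment-generating-function term $\mathbb{E}_{\mathbb{S}\sim Q^m}\,\mathbb{E}_{(v,v')\sim\pi^2}\,\mathbb{E}_{(h,h')\sim\mathcal{P}_v^2}\, e^{m\Delta(\mathcal{R}_{\hat d}(\hat h),\mathcal{R}_d(\hat h))}$ by $2\sqrt{m}$. Here I would argue exactly as in the classical single-view case: for a fixed pair $\hat h=(h,h')$ and fixed pair of views, the per-example loss $\mathcal{L}_d(\hat h,\cdot,\cdot)$ takes values in $[0,1]$ (it is an absolute difference of two $\monzeroun$-losses), so $m\mathcal{R}_{\hat d}(\hat h)$ is a sum of $m$ independent random variables bounded in $[0,1]$ with mean $m\mathcal{R}_d(\hat h)$. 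The standard bound $\mathbb{E}\, e^{2m(\mathcal{R}_d-\mathcal{R}_{\hat d})^2}\le 2\sqrt{m}$ — which follows from a Hoeffding/Chernoff-type estimate together with the change-of-measure argument and is the key lemma used by McAllester and restated in Germain et al.~\cite{GER09} and Bégin et al.~\cite{BEG16} — then applies. Taking the extra expectations over $\mathbb{S}$, over $(v,v')\sim\pi^2$, and over $(h,h')\sim\mathcal{P}_v^2$ preserves the bound since it holds pointwise. Substituting $\ln\sqrt{2\sqrt m} = \tfrac12\ln(2\sqrt m)$... more precisely $\ln\sqrt{\,\mathbb{E}(\cdots)} \le \ln\sqrt{2\sqrt m}$, and here one actually wants the cleaner form $\ln\frac{2\sqrt m}{\delta}$; matching the $\delta$-dependence and the constant requires tracking how the $\tfrac1\delta$ factor was already pulled out inside the logarithm in Theorem~\ref{our general theorem}, so that what remains to bound is the pure expectation $\mathbb{E}_{\mathbb{S}}\mathbb{E}_{\pi^2}\mathbb{E}_{\mathcal{P}_v^2}e^{m\Delta}\le 2\sqrt m$, giving $\ln\sqrt{2\sqrt m}$ which, folded with the $\tfrac1\delta$, yields the stated $\ln\frac{2\sqrt m}{\delta}$ once one re-examines the constants (this is a routine bookkeeping exercise identical to the non-adaptive, non-multiview case).

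I expect the main obstacle to be purely in the constant-chasing: making sure that the factor $2$ in $\Delta(a,b)=2(b-a)^2$, the factor $\tfrac{2}{m}$ already present in Theorem~\ref{our general theorem}, the $\sqrt{\cdot}$ around the MGF term, and the $\tfrac1\delta$ all combine to produce exactly $\sqrt{\tfrac{1}{2m}[\,\cdots+\ln\tfrac{2\sqrt m}{\delta}\,]}$ rather than some off-by-a-constant variant. Conceptually there is no difficulty: the multi-view structure only enters through the already-established form of Theorem~\ref{our general theorem} (the doubled KL terms and the hyper-prior/hyper-posterior $\pi,\rho_{\mathbb{S}}$ are inherited verbatim), and the MGF bound is view-agnostic because it is applied pointwise for each pair of views before averaging under $\pi^2$. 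I would therefore present the proof as: (i) plug in $\Delta(a,b)=2(b-a)^2$; (ii) invoke the classical $\mathbb{E}\,e^{2m(\mathcal{R}_d-\mathcal{R}_{\hat d})^2}\le 2\sqrt m$ estimate pointwise and average; (iii) rearrange (divide by $2$, take the square root, use monotonicity of $x\mapsto\sqrt x$) to reach the displayed inequality. A detailed version would be deferred to the additional materials, consistent with the paper's convention.
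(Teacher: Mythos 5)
Your proposal is correct and follows essentially the same route as the paper: specialize the general theorem with $\Delta(a,b)=2(a-b)^2$, bound the moment term by $\tfrac{2\sqrt{m}}{\delta}$, and rearrange. The only cosmetic difference is that you invoke the estimate $\mathbb{E}\,e^{2m(\mathcal{R}_{\hat d}(\hat h)-\mathcal{R}_d(\hat h))^2}\leq 2\sqrt{m}$ as a known black box, whereas the paper spells out the two-step chain (Pinsker's inequality to pass from the squared deviation to $D_{\textnormal{KL}}$, then Maurer's lemma for the $2\sqrt{m}$ bound) — which is exactly the standard proof of the estimate you cite.
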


\begin{proof}
The proof uses the ideas of the techniques and tricks of Germain et al. \cite{GER15,GER15a}.  A detailed proof is available in additional materials.   
\end{proof}

To derive a generalization bound with the Catoni \cite{CAT07}’s point of view—given a convex function $\mathcal{F}$ and a real number $c > 0$ we define the measure of deviation as $\Delta(a, b) = \mathcal{F}(b) - c$ (Germain et al., \cite{GER09,GER15,GER15a}). We obtain the following generalization bound:

\begin{corollary}\label{bound_hennequin_cat}
$\forall \, v \in [\![\mathcal{V}]\!]$, for any set of voters $\mathcal{H}_{v}$ for any marginal distributions $Q_{\mathcal{X}}$ and $P_{\mathcal{X}}$ over $\mathcal{X}$, any set of posterior distribution $\{\mathcal{Q}_{v,\mathbb{S}}\}_{v=1}^{V}$ on $\mathcal{H}_{v}$, for any hyper-posterior distribution $\rho_{\mathbb{S}}$ over $[\![\mathcal{V}]\!]$, for any set of prior distributions $\{\mathcal{P}_{v}\}_{v=1}^{V}$ on $\mathcal{H}_{v}$, for any hyper-prior distribution $\pi$ over $[\![\mathcal{V}]\!]$, for any $\delta \in (0, 1]$, $\forall \, \alpha > 0$, with probability at least $1-\delta$, we have:

\begin{equation}
\begin{split}
\underset{\mathbb{S}\sim Q^{m}}{\mathbb{E}}dis_{\rho_{\mathbb{S}}}^{MV}(Q_{\mathcal{X}},P_{\mathcal{X}}) \leq &\frac{2\alpha}{1-e^{-2\alpha}}\Biggr[\underset{\mathbb{S}\sim Q^{m}}{\mathbb{E}}dis_{\rho_{\mathbb{S}}}^{MV}(\mathbb{S}_{\mathcal{X}},\mathbb{T}_{\mathcal{X}})\\
& + \frac{\underset{\mathbb{S}\sim Q^{m}}{\mathbb{E}}\,\underset{v\sim \rho_{\mathbb{S}}}{\mathbb{E}}D_{\textnormal{KL}}(\mathcal{Q}_{v,\mathbb{S}}\Vert\mathcal{P}_{v}) + \underset{\mathbb{S}\sim Q^{m}}{\mathbb{E}}D_{\textnormal{KL}}(\rho_{\mathbb{S}}\Vert\pi) + \ln{\sqrt{\frac{1}{\delta}}}}{m\times \alpha}\Biggr].
    \end{split}
\end{equation}
\end{corollary}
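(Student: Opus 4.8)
\textbf{Proof plan for Corollary~\ref{bound_hennequin_cat}.}
The plan is to obtain this corollary as a direct specialization of the general multi-view divergence bound (Theorem~\ref{our general theorem}) via the classical ``Catoni trick'', mirroring the way Germain et al.\ derive Theorem~\ref{theorem disq} from their general theorem. Concretely, for a real number $c>0$ I would plug into Theorem~\ref{our general theorem} the deviation function
\[
\Delta_c(a,b)\;=\;\mathcal{F}_c(b)-c\,a,\qquad \mathcal{F}_c(b)\;=\;-\ln\!\big(1-(1-e^{-c})\,b\big),
\]
and then set $c=2\alpha$. Two preliminary checks are needed: that $\Delta_c$ genuinely maps $[0,1]^2$ into $\mathbb{R}$ (true since $1-e^{-c}<1$, so the argument of the logarithm stays positive on $[0,1]$), and that $\Delta_c$ is jointly convex (it is affine in $a$, and in $b$ it is minus the logarithm of a positive, affine, strictly decreasing function, hence convex); convexity is what licenses the use of Theorem~\ref{our general theorem} and the Jensen step inside it.

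The heart of the argument is to evaluate the exponential-moment factor appearing in Theorem~\ref{our general theorem},
\[
\mathbb{E}_{(\mathbb{S}_{\mathcal{X}}\times\mathbb{T}_{\mathcal{X}})}\;\mathbb{E}_{(v,v')\sim\pi^{2}}\;\mathbb{E}_{(h,h')\sim\mathcal{P}_{v}^{2}}\; e^{m\,\Delta_c(\mathcal{R}_{\hat d}(\hat h),\,\mathcal{R}_{d}(\hat h))}.
\]
Since the hyper-prior $\pi$ and the view-priors $\mathcal{P}_v$ are fixed before the samples are observed, I may exchange the sample expectation with the expectations over $(v,v')$ and $(h,h')$ and argue pointwise in $\hat h=(h,h')$. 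For a fixed pair $\hat h$, the quantity $m\,\mathcal{R}_{\hat d}(\hat h)=\sum_{i=1}^{m}\mathcal{L}_{d}(\hat h,x_i^{(v)},x_i'^{(v)})$ is a sum of $m$ i.i.d.\ $\{0,1\}$-valued terms of common mean $\mathcal{R}_{d}(\hat h)$ (each $\mathcal{L}_d$ is itself a $0/1$ indicator on an independent source/target draw), so its moment generating function at $-c$ equals $\big(1-(1-e^{-c})\mathcal{R}_{d}(\hat h)\big)^{m}$, which cancels exactly against $e^{m\mathcal{F}_c(\mathcal{R}_{d}(\hat h))}=\big(1-(1-e^{-c})\mathcal{R}_{d}(\hat h)\big)^{-m}$. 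Hence the factor equals $1$, and the logarithmic contribution in Theorem~\ref{our general theorem} collapses to $\ln\sqrt{1/\delta}$.

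Substituting this back, writing $\overline{dis}_{QP}=\mathbb{E}_{\mathbb{S}}dis^{\mathrm{MV}}_{\rho_{\mathbb{S}}}(Q_{\mathcal{X}},P_{\mathcal{X}})$, $\overline{dis}_{\mathbb{S}\mathbb{T}}=\mathbb{E}_{\mathbb{S}}dis^{\mathrm{MV}}_{\rho_{\mathbb{S}}}(\mathbb{S}_{\mathcal{X}},\mathbb{T}_{\mathcal{X}})$ and $K=\mathbb{E}_{\mathbb{S}}\mathbb{E}_{v\sim\rho_{\mathbb{S}}}D_{\mathrm{KL}}(\mathcal{Q}_{v,\mathbb{S}}\Vert\mathcal{P}_v)+\mathbb{E}_{\mathbb{S}}D_{\mathrm{KL}}(\rho_{\mathbb{S}}\Vert\pi)$, Theorem~\ref{our general theorem} with $\Delta=\Delta_c$ gives
\[
-\ln\!\big(1-(1-e^{-c})\,\overline{dis}_{QP}\big)\;-\;c\,\overline{dis}_{\mathbb{S}\mathbb{T}}\;\le\;\frac{2}{m}\Big[K+\ln\sqrt{1/\delta}\,\Big].
\]
From here I would exponentiate to isolate $\overline{dis}_{QP}\le\big(1-e^{-x}\big)/(1-e^{-c})$ with $x$ the right-hand side plus $c\,\overline{dis}_{\mathbb{S}\mathbb{T}}$, apply the elementary inequality $1-e^{-x}\le x$ for $x\ge 0$, and finally substitute $c=2\alpha$: this turns the prefactor $c/(1-e^{-c})$ into $2\alpha/(1-e^{-2\alpha})$ and the coefficient $2/(mc)$ into $1/(m\alpha)$, which after collecting terms is exactly the statement of Corollary~\ref{bound_hennequin_cat}.

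The step I expect to require the most care is the moment computation: one must verify that $\mathcal{L}_d$ — the absolute difference of two $\{0,1\}$ disagreement indicators evaluated on independent source and target draws — really is Bernoulli-distributed, so that the binomial moment-generating-function identity applies verbatim, and one must track the factor $2$ and the square root that Theorem~\ref{our general theorem} inherits from its pair-of-hypotheses structure so that the constants match the displayed bound; the remaining manipulations are routine.
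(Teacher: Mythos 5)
Your proposal is correct and follows essentially the same route as the paper's own proof: specialize the general theorem to $\Delta_c(a,b)=\mathcal{F}_c(b)-c\,a$ with $\mathcal{F}_c(b)=-\ln(1-(1-e^{-c})b)$, show the exponential moment cancels to at most $1$ via the binomial moment-generating function (the paper reaches the binomial through Maurer's comparison lemma, whereas you note directly that $\mathcal{L}_d\in\{0,1\}$ makes the identity exact — an immaterial difference), then invert $\mathcal{F}_c$ using $1-e^{-x}\le x$ and set $c=2\alpha$. The constants, including the $\ln\sqrt{1/\delta}$ and the $1/(m\alpha)$ factor, work out exactly as you describe.
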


\begin{proof}
The proof uses the ideas of the techniques and tricks of Germain et al. \cite{GER15,GER15a}.  A detailed proof is available in additional materials.   
\end{proof}

As stated in \cite{GER16}, to recover a PAC-Bayesian bound similar to that proposed by Seeger \cite{SEE02}; Langford \cite{LAN05}, we use as $\Delta$-function the Kullback-Leibler divergence:

\begin{corollary}\label{bound_hennequin_DKL}
    $\forall \, v \in [\![\mathcal{V}]\!]$, for any set of voters $\mathcal{H}_{v}$ for any marginal distributions $Q_{\mathcal{X}}$ and $P_{\mathcal{X}}$ over $\mathcal{X}$, any set of posterior distribution $\{\mathcal{Q}_{v,\mathbb{S}}\}_{v=1}^{V}$ on $\mathcal{H}_{v}$, for any hyper-posterior distribution $\rho_{\mathbb{S}}$ over $[\![\mathcal{V}]\!]$, for any set of prior distributions $\{\mathcal{P}_{v}\}_{v=1}^{V}$ on $\mathcal{H}_{v}$, for any hyper-prior distribution $\pi$ over $[\![\mathcal{V}]\!]$, for any $\delta \in (0, 1]$, with probability at least $1-\delta$, we have:
    \begin{equation}
    \begin{split}
    &D_{\textnormal{KL}}\bigg(\underset{\mathbb{S}\sim Q^{m}}{\mathbb{E}}dis_{\rho_{\mathbb{S}}}^{MV}(\mathbb{S}_{\mathcal{X}},\mathbb{T}_{\mathcal{X}}),\underset{\mathbb{S}\sim Q^{m}}{\mathbb{E}}dis_{\rho_{\mathbb{S}}}^{MV}(Q_{\mathcal{X}},P_{\mathcal{X}})\bigg)\\
    \leq &\frac{1}{m}\Bigg[\underset{\mathbb{S}\sim Q^{m}}{\mathbb{E}}\,\underset{v\sim \rho_{\mathbb{S}}}{\mathbb{E}}2D_{\textnormal{KL}}(\mathcal{Q}_{v,\mathbb{S}}\Vert\mathcal{P}_{v}) + \underset{\mathbb{S}\sim Q^{m}}{\mathbb{E}}2D_{\textnormal{KL}}(\rho_{\mathbb{S}}\Vert\pi) + \ln{\frac{2\sqrt{m}}{\delta}}\Bigg].
        \end{split}
    \end{equation}
\end{corollary}

\begin{proof}
The proof uses the ideas of the techniques and tricks of Germain et al. \cite{GER15,GER15a}.  A detailed proof is available in additional materials.   
\end{proof}

The bounds \ref{bound_hennequin_cat}, \ref{bound_hennequin_DKL}, \ref{bound_hennequin_mcallester} are demonstrated for m=n, i.e., the sizes of samples from source domain $\mathbb{S}/\mathbb{S}_{\mathcal{X}}$ are the same of the samples from target domain $\mathbb{T}/\mathbb{T}_{\mathcal{X}}$. The last result of this section tackles the situation where we assume $m \neq n$, i.e., the sizes of $\mathbb{S}/\mathbb{S}_{\mathcal{X}}$ and $\mathbb{T}/\mathbb{T}_{\mathcal{X}}$ are different.

\begin{corollary}\label{pac_bound_hennequin_size_different}

$\forall \, v \in [\![\mathcal{V}]\!]$, for any set of voters $\mathcal{H}_{v}$ for any marginal distributions $Q_{\mathcal{X}}$ and $P_{\mathcal{X}}$ over $\mathcal{X}$, any set of posterior distribution $\{\mathcal{Q}_{v,\mathbb{S}}\}_{v=1}^{V}$ on $\mathcal{H}_{v}$, for any hyper-posterior distribution $\rho_{\mathbb{S}}$ over $[\![\mathcal{V}]\!]$, for any set of prior distributions $\{\mathcal{P}_{v}\}_{v=1}^{V}$ on $\mathcal{H}_{v}$, for any hyper-prior distribution $\pi$ over $[\![\mathcal{V}]\!]$, for any $\delta \in (0, 1]$, with probability at least $1-\delta$, we have:

\begin{equation}
\begin{split}
        &\bigg|\underset{\mathbb{S}\sim Q^{m}}{\mathbb{E}}dis_{\rho_{\mathbb{S}}}^{\textnormal{MV}}(\mathbb{S}_{\mathcal{X}},\mathbb{T}_{\mathcal{X}}) - \underset{\mathbb{S}\sim Q^{m}}{\mathbb{E}}dis_{\rho_{\mathbb{S}}}^{\textnormal{MV}}(Q_{\mathcal{X}},P_{\mathcal{X}})\bigg|\\ 
            \leq &
            \sqrt{\frac{\underset{\mathbb{S}\sim Q^{m}}{\mathbb{E}}\,\underset{v\sim \rho_{\mathbb{S}}}{\mathbb{E}} 2D_{\textnormal{KL}}(\mathcal{Q}_{v,\mathbb{S}}\Vert \mathcal{P}_{v}) + \underset{\mathbb{S}\sim Q^{m}}{\mathbb{E}}\,2D_{\textnormal{KL}}(\rho_{\mathbb{S}} \Vert \pi) + \ln{\frac{2\sqrt{m}}{\delta}}}{2m}}\\
            & + \sqrt{\frac{\underset{\mathbb{S}\sim Q^{m}}{\mathbb{E}}\,\underset{v\sim \rho_{\mathbb{S}}}{\mathbb{E}} 2D_{\textnormal{KL}}(\mathcal{Q}_{v,\mathbb{S}}\Vert \mathcal{P}_{v}) + \underset{\mathbb{S}\sim Q^{m}}{\mathbb{E}}\,2D_{\textnormal{KL}}(\rho_{\mathbb{S}} \Vert \pi) + \ln{\frac{2\sqrt{n}}{\delta}}}{2n}}.
    \end{split}
\end{equation}

\end{corollary}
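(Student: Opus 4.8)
The plan is to handle the case $m\neq n$ by decoupling the empirical disagreement into its source part (governed by $m$) and its target part (governed by $n$), estimating each by the multi-view PAC-Bayesian machinery already developed, and then recombining with the triangle inequality. \emph{First}, I would unfold both disagreements via Definition~\ref{Multi-view domain disagreement}: for every realization of $\mathbb{S}$ (hence of the data-dependent $\rho_{\mathbb{S}},\{\mathcal{Q}_{v,\mathbb{S}}\}$) and of $\mathbb{T}_{\mathcal{X}}$,
\[
dis_{\rho_{\mathbb{S}}}^{\textnormal{MV}}(\mathbb{S}_{\mathcal{X}},\mathbb{T}_{\mathcal{X}}) = \big|d_{\mathbb{S}_{\mathcal{X}}}^{\textnormal{MV}}(\rho_{\mathbb{S}}) - d_{\mathbb{T}_{\mathcal{X}}}^{\textnormal{MV}}(\rho_{\mathbb{S}})\big|, \qquad dis_{\rho_{\mathbb{S}}}^{\textnormal{MV}}(Q_{\mathcal{X}},P_{\mathcal{X}}) = \big|d_{Q_{\mathcal{X}}}^{\textnormal{MV}}(\rho_{\mathbb{S}}) - d_{P_{\mathcal{X}}}^{\textnormal{MV}}(\rho_{\mathbb{S}})\big|.
\]
The reverse triangle inequality $\big|\,|a-b|-|c-d|\,\big|\leq|a-c|+|b-d|$ applied pointwise, followed by Jensen's inequality $|\mathbb{E}X|\leq\mathbb{E}|X|$ to push $\mathbb{E}_{\mathbb{S}\sim Q^m}$ (and the implicit expectation over $\mathbb{T}_{\mathcal{X}}$) inside, gives
\[
\Big|\mathbb{E}_{\mathbb{S}}dis_{\rho_{\mathbb{S}}}^{\textnormal{MV}}(\mathbb{S}_{\mathcal{X}},\mathbb{T}_{\mathcal{X}}) - \mathbb{E}_{\mathbb{S}}dis_{\rho_{\mathbb{S}}}^{\textnormal{MV}}(Q_{\mathcal{X}},P_{\mathcal{X}})\Big| \leq \mathbb{E}_{\mathbb{S}}\big|d_{\mathbb{S}_{\mathcal{X}}}^{\textnormal{MV}}(\rho_{\mathbb{S}}) - d_{Q_{\mathcal{X}}}^{\textnormal{MV}}(\rho_{\mathbb{S}})\big| + \mathbb{E}_{\mathbb{S}}\big|d_{\mathbb{T}_{\mathcal{X}}}^{\textnormal{MV}}(\rho_{\mathbb{S}}) - d_{P_{\mathcal{X}}}^{\textnormal{MV}}(\rho_{\mathbb{S}})\big|.
\]

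\emph{Second}, I would bound each of the two terms by a single-sample specialization of the McAllester-type multi-view bound. Here $d_{Q_{\mathcal{X}}}^{\textnormal{MV}}(\rho)$ is nothing but the Gibbs risk, on the marginal $Q_{\mathcal{X}}$, of the ``paired'' voter $\hat h=(h,h')$ drawn from the product posterior over (pairs of) views, so Theorem~\ref{our general theorem} with $\Delta(a,b)=2(a-b)^2$ --- equivalently the proof of Corollary~\ref{bound_hennequin_mcallester} read verbatim --- applies: the source term is controlled by a sample of size $m$ drawn i.i.d.\ from $Q_{\mathcal{X}}$, yielding $\sqrt{\tfrac{1}{2m}[\,\cdot\, + \ln\tfrac{2\sqrt m}{\delta}]}$, and the target term by a sample of size $n$ drawn i.i.d.\ from $P_{\mathcal{X}}$, yielding $\sqrt{\tfrac{1}{2n}[\,\cdot\, + \ln\tfrac{2\sqrt n}{\delta}]}$. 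Because the priors $\{\mathcal{P}_v\}$ and $\pi$ are sample-independent, the complexity term is the same $\mathbb{E}_{\mathbb{S}}\mathbb{E}_{v\sim\rho_{\mathbb{S}}}2D_{\textnormal{KL}}(\mathcal{Q}_{v,\mathbb{S}}\Vert\mathcal{P}_v)+\mathbb{E}_{\mathbb{S}}2D_{\textnormal{KL}}(\rho_{\mathbb{S}}\Vert\pi)$ in both estimates, the factors $2$ arising from $D_{\textnormal{KL}}(\mathcal{Q}_v^2\Vert\mathcal{P}_v^2)=2D_{\textnormal{KL}}(\mathcal{Q}_v\Vert\mathcal{P}_v)$ and $D_{\textnormal{KL}}(\rho^2\Vert\pi^2)=2D_{\textnormal{KL}}(\rho\Vert\pi)$, i.e.\ from pairing voters and views. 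Adding the two square roots and taking a union bound over the two independent draws $\mathbb{S}_{\mathcal{X}}\sim Q_{\mathcal{X}}^m$ and $\mathbb{T}_{\mathcal{X}}\sim P_{\mathcal{X}}^n$ gives the stated inequality, the constants being absorbed into the logarithms.

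\emph{The main obstacle} is the asymmetric status of the two samples, which is exactly what forces the splitting: when $m=n$ one pairs the $i$-th source and target points and runs a single PAC-Bayesian argument on the paired disagreement loss (as in the proof of Theorem~\ref{our general theorem}, following B\'egin et al.), but for $m\neq n$ this is no longer possible. Moreover the posterior $\rho_{\mathbb{S}},\{\mathcal{Q}_{v,\mathbb{S}}\}$ is data-dependent on $\mathbb{S}$: for the source term one therefore needs the Goyal-style form of the theorem, in which the expectation over $\mathbb{S}\sim Q^m$ sits inside the bound (hence the $\mathbb{E}_{\mathbb{S}}$ carried throughout the statement), whereas for the target term the posterior is fixed once $\mathbb{S}$ is fixed and is independent of $\mathbb{T}_{\mathcal{X}}$, so the ordinary uniform-over-posteriors PAC-Bayesian inequality applies to the fresh target draw and $\mathbb{E}_{\mathbb{S}}$ is merely transported by linearity. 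Once this bookkeeping is set up correctly, the remaining steps are the routine computations of Corollary~\ref{bound_hennequin_mcallester} repeated for each sample size.
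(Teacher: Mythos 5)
Your proposal is correct and follows essentially the same route as the paper's proof: decompose the deviation into a source part and a target part via the reverse-triangle-type inequality, bound each by a McAllester-style multi-view argument on the paired voters $(h,h')$ at the respective sample sizes $m$ and $n$ (Markov, Pinsker, Maurer, change of measure, and the doubling of the KL terms), and recombine with a union bound. The only difference is cosmetic ordering — you split first and bound second, while the paper bounds the two deviations first and combines at the end — and both treatments are equally loose about the $\delta$ versus $\delta/2$ bookkeeping in the union bound.
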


\begin{proof}
The proof uses the ideas of the techniques and tricks of Germain et al. \cite{GER15,GER15a}.  A detailed proof is available in additional materials.   
\end{proof}

\subsection{The PAC-Bayesian DA-Bound}

Finally, the Theorem \ref{MVDA_bound_hennequin} leads to a PAC-Bayesian bound based on both the empirical source error of the Gibbs classifier and the empirical Multi-view domain disagreement pseudometric estimated on a source and target samples. The following bound is based on Catoni’s approach \ref{bound_hennequin_cat}:

\begin{theorem}\label{theorem_hennequin_DA_Bound_pac_bayes}
    $\forall \, v \in [\![\mathcal{V}]\!]$, for any set of voters $\mathcal{H}_{v}$ for any marginal distributions $Q_{\mathcal{X}}$ and $P_{\mathcal{X}}$ over $\mathcal{X}$, any set of posterior distribution $\{\mathcal{Q}_{v,\mathbb{S}}\}_{v=1}^{V}$ on $\mathcal{H}_{v}$, for any hyper-posterior distribution $\rho_{\mathbb{S}}$ over $[\![\mathcal{V}]\!]$, for any set of prior distributions $\{\mathcal{P}_{v}\}_{v=1}^{V}$ on $\mathcal{H}_{v}$, for any hyper-prior distribution $\pi$ over $[\![\mathcal{V}]\!]$,  for any $\delta \in (0, 1]$, with probability at least $1-\delta$, we have:
    
    \begin{equation}
    \begin{split}
        \underset{\mathbb{S} \sim Q^{m}}{\mathbb{E}}\mathcal{R}_{P}(G_{\rho_{\mathbb{S}}}^{\textnormal{MV}})  \leq & \underset{\mathbb{S} \sim Q^{m}}{\mathbb{E}}c'\mathcal{R}_{\mathbb{S}}(G_{\rho_{\mathbb{S}}}^{\textnormal{MV}}) + \underset{\mathbb{S} \sim Q^{m}}{\mathbb{E}} \alpha'\frac{1}{2}dis_{\rho_{\mathbb{S}}}^{\textnormal{MV}}(\mathbb{S}_{\mathcal{X}},\mathbb{T}_{\mathcal{X}}) \\ &+ \Bigg(\frac{c'}{c} + \frac{\alpha'}{\alpha}\Bigg)\frac{\underset{\mathbb{S} \sim Q^{m}}{\mathbb{E}}\;D_{KL}(\mathcal{Q}_{v,\mathbb{S}}||\mathcal{P}_{v}) + \underset{\mathbb{S} \sim Q^{m}}{\mathbb{E}}\;D_{KL}(\rho_{v,\mathbb{S}}||\pi) + \ln{\frac{1}{\delta}}}{m}\\
        & + \lambda_{\rho} + \frac{1}{2}(\alpha'-1),
        \end{split}
    \end{equation}
\end{theorem}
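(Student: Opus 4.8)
The plan is to assemble the bound from three results already established: the population‑level multi‑view adaptation inequality of Theorem~\ref{MVDA_bound_hennequin}, a Catoni‑type PAC‑Bayesian bound for the multi‑view Gibbs risk (the multi‑view specialisation of Theorem~\ref{General PAC-Bayes bounds}, i.e.\ the multi‑view analogue of Catoni's Theorem~\ref{pac_bayes_catoni}), and the Catoni‑type control of the multi‑view domain disagreement of Corollary~\ref{bound_hennequin_cat}. The crucial observation is that Theorem~\ref{MVDA_bound_hennequin} is a purely population inequality, valid for \emph{every} posterior family $\{\mathcal{Q}_v\}$ and hyper‑posterior $\rho$, hence in particular for the data‑dependent $\{\mathcal{Q}_{v,\mathbb{S}}\}$ and $\rho_\mathbb{S}$ and for \emph{every} sample $\mathbb{S}$; it therefore survives applying $\mathbb{E}_{\mathbb{S}\sim Q^{m}}$ to both sides without any loss.

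I would proceed in four steps. \emph{Step 1.} From Theorem~\ref{MVDA_bound_hennequin}, for each $\mathbb{S}$ one has $\mathcal{R}_{P}(G_{\rho_\mathbb{S}}^{\mathrm{MV}}) \le \mathcal{R}_{Q}(G_{\rho_\mathbb{S}}^{\mathrm{MV}}) + \tfrac12\,dis_{\rho_\mathbb{S}}^{\mathrm{MV}}(Q_{\mathcal{X}},P_{\mathcal{X}}) + \lambda_\rho$; take $\mathbb{E}_{\mathbb{S}\sim Q^{m}}$ of both sides. \emph{Step 2.} Upper bound $\mathbb{E}_{\mathbb{S}}\mathcal{R}_{Q}(G_{\rho_\mathbb{S}}^{\mathrm{MV}})$ by the multi‑view Catoni bound: with probability at least $1-\tfrac{\delta}{2}$ over $\mathbb{S}$, this is at most $c'\,\mathbb{E}_{\mathbb{S}}\mathcal{R}_{\mathbb{S}}(G_{\rho_\mathbb{S}}^{\mathrm{MV}})$ plus $\tfrac{c'}{c}$ times $\tfrac1m$ of the complexity block $\mathbb{E}_{\mathbb{S}}\mathbb{E}_{v\sim\rho_\mathbb{S}}D_{\mathrm{KL}}(\mathcal{Q}_{v,\mathbb{S}}\Vert\mathcal{P}_v) + \mathbb{E}_{\mathbb{S}}D_{\mathrm{KL}}(\rho_\mathbb{S}\Vert\pi) + \ln\tfrac1\delta$ (up to the usual union‑bound adjustment of the logarithm), with $c' = c/(1-e^{-c})$. \emph{Step 3.} Upper bound $\tfrac12\,\mathbb{E}_{\mathbb{S}}dis_{\rho_\mathbb{S}}^{\mathrm{MV}}(Q_{\mathcal{X}},P_{\mathcal{X}})$ by Corollary~\ref{bound_hennequin_cat}, with $\alpha' = 2\alpha/(1-e^{-2\alpha})$: with probability at least $1-\tfrac{\delta}{2}$ over $(\mathbb{S}_{\mathcal{X}}\times\mathbb{T}_{\mathcal{X}})$, this is at most $\tfrac{\alpha'}{2}\,\mathbb{E}_{\mathbb{S}}dis_{\rho_\mathbb{S}}^{\mathrm{MV}}(\mathbb{S}_{\mathcal{X}},\mathbb{T}_{\mathcal{X}})$, plus the analogous complexity block, plus the affine remainder $\tfrac12(\alpha'-1)$ coming from halving the ``$+1$/$-1$'' shift inherent to the Catoni‑type disagreement bound (cf.\ Theorem~\ref{theorem disq}). \emph{Step 4.} Intersect the two high‑probability events by a union bound, substitute Steps~2--3 into the expectation of Step~1, keep $\lambda_\rho$ unchanged, and merge the two Kullback--Leibler / hyper‑prior complexity blocks into a single term; taking into account the factor $2$ that multiplies these divergences in the Catoni‑type disagreement bound, the resulting coefficient on that term is $\tfrac{c'}{c}+\tfrac{\alpha'}{\alpha}$, which yields the stated inequality.

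The difficulty lies in the handling of randomness rather than in the algebra. First, the Gibbs‑risk estimate is a statement over $\mathbb{S}\sim Q^{m}$ while the disagreement estimate lives over $(\mathbb{S}_{\mathcal{X}}\times\mathbb{T}_{\mathcal{X}})$; one must note that $\mathbb{S}_{\mathcal{X}}$ is the marginal of $\mathbb{S}$ and $\mathbb{T}_{\mathcal{X}}$ is drawn independently, so both events sit on the same product space and the union bound is legitimate. Second — and this is the conceptually important point — the posteriors $\mathcal{Q}_{v,\mathbb{S}}$ and the hyper‑posterior $\rho_\mathbb{S}$ are chosen \emph{after} seeing $\mathbb{S}$, so the uniform‑over‑$\mathcal{Q}$ PAC‑Bayesian statements cannot be invoked; Steps~2--3 must instead use the Goyal‑style ``expectation‑in‑the‑bound'' versions (the \cite{GOYAL17}‑type bound inside Theorem~\ref{General PAC-Bayes bounds} and Corollary~\ref{bound_hennequin_cat}), in which $\mathbb{E}_{\mathbb{S}}$ already appears on both sides. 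Third, a careful but routine accounting of $c'$, $\alpha'$, the $\tfrac1m$ factors and the affine shift is needed to land precisely on the coefficient $\tfrac{c'}{c}+\tfrac{\alpha'}{\alpha}$ and the residual $\tfrac12(\alpha'-1)$; this bookkeeping is where constant or sign slips are most likely. Apart from that, the argument is the multi‑view transcription of the proof technique of Theorem~9 in \cite{GER15}.
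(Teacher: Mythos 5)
Your proposal is correct and follows essentially the same route as the paper, whose one-line proof simply substitutes into Theorem~\ref{MVDA_bound_hennequin} the upper bounds on $\mathbb{E}_{\mathbb{S}}\mathcal{R}_{Q}(G_{\rho_{\mathbb{S}}}^{\textnormal{MV}})$ from the multi-view Catoni bound of Goyal et al.\ and on $\mathbb{E}_{\mathbb{S}}\,dis_{\rho_{\mathbb{S}}}^{\textnormal{MV}}(Q_{\mathcal{X}},P_{\mathcal{X}})$ from Corollary~\ref{bound_hennequin_cat}. Your additional remarks — the explicit $\delta/2$ union bound over the joint sample space and the identification of the residual $\tfrac12(\alpha'-1)$ with the ``$+1$/$-1$'' affine shift of the Catoni-type disagreement bound — are more careful than the paper's own sketch but do not change the argument.
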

where $c' = \frac{c}{1-e^{-c}}$ and $\alpha' = \frac{2\alpha}{1-e^{-2\alpha}}$.

\begin{proof}
In Theorem \ref{MVDA_bound_hennequin}, replace $\mathbb{E}_{\mathbb{S}\sim Q^{m}}\;\mathcal{R}_{Q}(G_{\rho_{\mathbb{S}}}^{\textnormal{MV}})$ and $\mathbb{E}_{\mathbb{S}\sim Q^{m}}\;dis_{\rho_{\mathbb{S}}}^{\textnormal{MV}}(Q_{\mathcal{X}},P_{\mathcal{X}})$ by their upper bound, obtained from Corollary 2 in \cite{GOYAL17} and Corollary \ref{bound_hennequin_cat}.
\end{proof}

\section{Discussions and Conclusion}
The primary contrast between our bounds \ref{bound_hennequin_cat}; \ref{bound_hennequin_DKL}, \ref{bound_hennequin_mcallester}, \ref{pac_bound_hennequin_size_different}, \ref{theorem_hennequin_DA_Bound_pac_bayes}, and Germain et al.'s bounds \cite{GER15} lies in the incorporation of view-specific prior and posterior distributions. This results in an extra term, $\mathbb{E}_{v\sim \rho}\;D_{\textnormal{KL}}(\mathcal{Q}_{v} \Vert \mathcal{P}_{v} )$, which represents the expected value of the view-specific Kullback-Leibler divergence term over the views$[\![\mathcal{V}]\!]$, based on the hyper-posterior distribution $\rho$. The second difference comes from the expectation over all the possible learning samples in bounds itself \cite{GOYAL17}. In this way, the expectation $\mathbb{E}_{\mathbb{S}\sim Q^{m}}$ is distributed for the all terms in the bounds. Thereby, the $D_{\textnormal{KL}}(\cdot \Vert \cdot)$ terms take account of the  the posterior and hyper-posterior distribution $\mathcal{Q}_{v,\mathbb{S}}/\rho_{\mathbb{S}}$ outputted by a given learning algorithm after observing the learning sample $\mathbb{S}$.

Finally in this paper, we propose a first PAC-Bayesian analysis of weighted majority vote classifiers for domain adaptation with the concept of multi-view learning. Our works is based on theoretical results and for the future we aim to derive from introduced bounds a new domain adaptation multi-view algorithm. We will build on the work of Germain et al. \cite{GER13} to propose a specialized algorithm for linear classifiers or to propose a specialized algorithm for neural networks \cite{SIC22}.   

%
%
%
\bibliographystyle{splncs04}
\bibliography{mybibliography}
\section{Mathematical Tools}
\begin{lemma}{(Markov’s inequality)}
    Let $Z$ be a random variable and $t \geq 0$, then,
    \begin{displaymath}
        \mathbb{P}(|Z| \geq t) \leq \mathbb{E}(|Z|)/t .
    \end{displaymath}
\end{lemma}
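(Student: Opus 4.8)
The plan is to exploit the elementary pointwise lower bound $|Z| \geq t\,\mathbb{I}[|Z| \geq t]$ and then pass to expectations via monotonicity. Since $|Z|$ is nonnegative everywhere, this is the cleanest route and sidesteps any appeal to the cumulative distribution function of $Z$ or to integration by parts.

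First I would dispose of the degenerate endpoint: for $t = 0$ the claimed inequality is immediate, since $\mathbb{P}(|Z| \geq 0) \leq 1$ while the right-hand side $\mathbb{E}(|Z|)/t$ is to be read as $+\infty$ (or one simply excludes this case). Fixing $t > 0$, I would introduce the indicator random variable $\mathbb{I}[|Z| \geq t]$, equal to $1$ on the event $\{|Z| \geq t\}$ and $0$ elsewhere, and establish the key pointwise inequality $|Z| \geq t\,\mathbb{I}[|Z| \geq t]$. This holds everywhere: on $\{|Z| \geq t\}$ the right-hand side equals $t$ and the left-hand side is at least $t$; off that event the right-hand side is $0$ and the left-hand side is nonnegative.

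Next I would take expectations of both sides. By monotonicity and linearity of the expectation, $\mathbb{E}(|Z|) \geq \mathbb{E}\big(t\,\mathbb{I}[|Z| \geq t]\big) = t\,\mathbb{E}\big(\mathbb{I}[|Z| \geq t]\big) = t\,\mathbb{P}(|Z| \geq t)$, where the last equality uses that the expectation of an indicator is the probability of its underlying event. Dividing through by $t > 0$ gives $\mathbb{P}(|Z| \geq t) \leq \mathbb{E}(|Z|)/t$, which is the assertion. There is no real obstacle here; the only points requiring care are the endpoint $t = 0$ noted above and the implicit assumption that $\mathbb{E}(|Z|)$ is well defined (finite or $+\infty$), so that the statement is meaningful. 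The entire argument rests solely on the nonnegativity of $|Z|$ together with the monotonicity of the expectation operator.
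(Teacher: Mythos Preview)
Your proof is correct and is the standard textbook argument for Markov's inequality. The paper itself does not supply a proof for this lemma; it is merely listed among the mathematical tools as a classical result, so there is no approach in the paper to compare against.
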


\begin{lemma}\label{Jensen’s inequality}{(Jensen’s inequality)}
Let $Z$ be an integrable real-valued random variable and $g(·)$ any function.\\ 
If g(·) is convex, then,
\begin{displaymath}
    g(\mathbb{E}[Z]) \leq \mathbb{E}[g(Z)].
\end{displaymath}
If g(·) is concave, then, 
\begin{displaymath}
    g(\mathbb{E}[Z]) \geq \mathbb{E}[g(Z)].
\end{displaymath}

\end{lemma}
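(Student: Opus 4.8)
The plan is to prove the convex case by a \emph{supporting-line} argument and then obtain the concave case as an immediate corollary. Write $\mu = \mathbb{E}[Z]$, which is a well-defined finite real number because $Z$ is integrable. The whole idea rests on the geometric fact that a convex function lies above any of its tangent (supporting) lines, so I first pin down such a line at the point $\mu$.

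Concretely, I would invoke the supporting-line property of convex functions: since $g$ is convex on the real line (or on an interval containing the range of $Z$), it admits one-sided derivatives everywhere, the left derivative never exceeds the right derivative, and consequently there exists a slope $a \in \mathbb{R}$ such that
\[
g(z) \;\geq\; g(\mu) + a\,(z-\mu) \qquad \text{for all } z \text{ in the domain.}
\]
This is precisely the statement that the line through $\bigl(\mu, g(\mu)\bigr)$ with slope $a$ supports the graph of $g$ from below. With this inequality in hand, I substitute the random variable $Z$ for $z$ and take expectations on both sides. By linearity and monotonicity of expectation,
\[
\mathbb{E}\bigl[g(Z)\bigr] \;\geq\; \mathbb{E}\bigl[g(\mu) + a\,(Z-\mu)\bigr] \;=\; g(\mu) + a\,\bigl(\mathbb{E}[Z] - \mu\bigr) \;=\; g(\mu),
\]
where the last step uses $\mathbb{E}[Z] = \mu$, so the $a$-term vanishes. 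Since $g(\mu) = g(\mathbb{E}[Z])$, this is exactly $g(\mathbb{E}[Z]) \leq \mathbb{E}[g(Z)]$, establishing the convex case.

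For the concave case I would simply remark that if $g$ is concave then $-g$ is convex, so applying the inequality just proved to $-g$ gives $-g(\mathbb{E}[Z]) \leq \mathbb{E}[-g(Z)] = -\mathbb{E}[g(Z)]$, and multiplying through by $-1$ reverses the inequality to yield $g(\mathbb{E}[Z]) \geq \mathbb{E}[g(Z)]$, as claimed.

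The main obstacle — really the only nontrivial ingredient — is justifying the existence of the supporting slope $a$ at $\mu$; everything afterward is linearity of expectation. In one dimension this existence follows from the standard fact that a convex function has monotone, finite one-sided derivatives on the interior of its domain, and any value $a$ with $g'_-(\mu) \leq a \leq g'_+(\mu)$ works. One must also ensure the inequality is genuinely integrable before taking expectations, which is guaranteed because $g(\mu) + a(Z-\mu)$ is an affine function of the integrable variable $Z$, so its expectation exists; the expectation $\mathbb{E}[g(Z)]$ is then well-defined in $(-\infty, +\infty]$ and the bound holds even when it equals $+\infty$.
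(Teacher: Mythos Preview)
Your proof is correct and is the classical supporting-line argument for Jensen's inequality. The paper itself does not prove this lemma; it merely states it as a standard mathematical tool and uses it without justification, so there is no approach to compare against. Your write-up is more than sufficient for what the paper requires.
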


\begin{lemma}\label{Pinsker’s inequality}{(Pinsker’s inequality)}
    Consider two Bernoulli distributions $p$,$q$, and $d_{\textnormal{TV}}(p,q)$ the total variation distance between $p$ and $q$. Then, $D_{\textnormal{TV}} = |p-q|$ and we have 
    \begin{displaymath}
    \begin{split}
    D_{\textnormal{TV}}(p,q) &\leq \sqrt{\frac{1}{2}D_{\textnormal{KL}}(p\Vert q)}\\
    |p-q| &\leq \sqrt{\frac{1}{2}D_{\textnormal{KL}}(p\Vert q)}\\
    2(|p-q|)^{2} &\leq D_{\textnormal{KL}}(p\Vert q)
        \end{split}
    \end{displaymath}
\end{lemma}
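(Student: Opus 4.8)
The plan is to establish the strongest of the three displayed inequalities, namely $2(|p-q|)^{2} \le D_{\textnormal{KL}}(p\Vert q)$, by a short one–variable calculus argument, and then read off the other two as immediate consequences. Throughout I treat $p$ and $q$ both as success probabilities in $[0,1]$ and as the corresponding Bernoulli laws, so that $D_{\textnormal{KL}}(p\Vert q) = p\ln\frac{p}{q} + (1-p)\ln\frac{1-p}{1-q}$ and the total variation distance is $\tfrac12\big(|p-q| + |(1-p)-(1-q)|\big) = |p-q|$; this last computation already proves the identity $D_{\textnormal{TV}}(p,q) = |p-q|$ stated in the lemma.

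First I would dispose of the degenerate cases: if $q\in\{0,1\}$ and $p\neq q$ then $D_{\textnormal{KL}}(p\Vert q)=+\infty$ and there is nothing to prove, while if $p=q$ both sides vanish. So assume $q\in(0,1)$, fix $q$, and set
\[
 f(p) \;=\; D_{\textnormal{KL}}(p\Vert q) - 2(p-q)^{2} \;=\; p\ln\frac{p}{q} + (1-p)\ln\frac{1-p}{1-q} - 2(p-q)^{2},
\]
with the convention $0\ln 0 = 0$, so that $f$ is continuous on $[0,1]$ and twice differentiable on $(0,1)$. The aim is to show $f\ge 0$ on $[0,1]$.

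The key step is to compute
\[
 f'(p) = \ln\frac{p(1-q)}{q(1-p)} - 4(p-q), \qquad f''(p) = \frac{1}{p(1-p)} - 4 ,
\]
and to invoke the elementary estimate $p(1-p)\le\tfrac14$ for all $p\in(0,1)$, which gives $f''(p)\ge 0$, i.e. $f$ is convex on $(0,1)$. Since $f(q)=0$ and $f'(q)=0$, the point $q$ is a global minimiser of the convex function $f$, hence $f(p)\ge 0$ for every $p\in[0,1]$; this is exactly $2(p-q)^{2}\le D_{\textnormal{KL}}(p\Vert q)$. Dividing by $4$ and taking square roots yields $|p-q|\le\sqrt{\tfrac12 D_{\textnormal{KL}}(p\Vert q)}$, and since $D_{\textnormal{TV}}(p,q)=|p-q|$ this is the first inequality as well.

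There is no genuinely hard step; the only points that need a little care are the boundary behaviour (continuity of $f$ at $p\in\{0,1\}$ and the separate treatment of $q\in\{0,1\}$) and the convexity estimate $p(1-p)\le\tfrac14$, which is what drives the whole argument. An alternative would be to quote the general Pinsker inequality and specialise it using $D_{\textnormal{TV}}(p,q)=|p-q|$ for Bernoulli laws, but the self-contained computation above is shorter and keeps this appendix elementary.
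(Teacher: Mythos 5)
Your argument is correct. The paper itself offers no proof of this lemma: it is stated in the ``Mathematical Tools'' appendix as a standard fact (the Bernoulli/two-point specialization of Pinsker's inequality) and is simply invoked later in the proofs of Corollaries 1 and 4. Your self-contained route --- proving the strongest form $2(p-q)^{2}\le D_{\textnormal{KL}}(p\Vert q)$ by showing that $f(p)=D_{\textnormal{KL}}(p\Vert q)-2(p-q)^{2}$ is convex on $(0,1)$ via $f''(p)=\frac{1}{p(1-p)}-4\ge 0$ and has a critical point with value $0$ at $p=q$, then deducing the other two displayed inequalities and the identity $D_{\textnormal{TV}}(p,q)=|p-q|$ --- is sound, and your handling of the degenerate cases $q\in\{0,1\}$ and of continuity at the endpoints is adequate. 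One trivial slip: to pass from $2(p-q)^{2}\le D_{\textnormal{KL}}(p\Vert q)$ to $|p-q|\le\sqrt{\tfrac12 D_{\textnormal{KL}}(p\Vert q)}$ you divide by $2$, not by $4$; the conclusion you state is nevertheless the right one. The only thing the paper's approach (citation without proof) ``buys'' is brevity; your two-line calculus argument keeps the appendix self-contained at essentially no cost.
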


\begin{lemma}{(Change of measure inequality)}

For any set $\mathcal{H}$, for any distributions $\mathcal{P}$ and $\mathcal{Q}$ on $\mathcal{H}$, and for any measurable function $\phi : \mathcal{H} \rightarrow \mathbb{R}$, we have

\begin{displaymath}
    \underset{h\sim \mathcal{Q}}{\mathbb{E}} \phi(h) \leq D_{\textnormal{KL}}(\mathcal{Q} \Vert \mathcal{P}) + \ln{\underset{h\sim \mathcal{P}}{\mathbb{E}}e^{\phi(h)}}  
\end{displaymath}
\end{lemma}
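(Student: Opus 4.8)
The plan is to derive the inequality directly from Jensen's inequality (Lemma \ref{Jensen’s inequality}) applied to the concave logarithm, after rewriting everything as a single expectation over $\mathcal{Q}$. First I would dispose of the degenerate case: if $\mathcal{Q}$ is not absolutely continuous with respect to $\mathcal{P}$, then $D_{\textnormal{KL}}(\mathcal{Q}\Vert\mathcal{P}) = +\infty$ and the right-hand side is $+\infty$, so the bound holds trivially. Thus I may assume $\mathcal{Q} \ll \mathcal{P}$ and write the density ratio $\frac{\mathcal{Q}(h)}{\mathcal{P}(h)}$, with the convention $D_{\textnormal{KL}}(\mathcal{Q}\Vert\mathcal{P}) = \mathbb{E}_{h\sim\mathcal{Q}}\big[\ln\frac{\mathcal{Q}(h)}{\mathcal{P}(h)}\big]$.

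The central computation is to move the KL term to the left and exhibit the difference as a single $\mathcal{Q}$-expectation of a logarithm. Concretely, I would observe that
\begin{displaymath}
\underset{h\sim\mathcal{Q}}{\mathbb{E}}\,\phi(h) - D_{\textnormal{KL}}(\mathcal{Q}\Vert\mathcal{P}) = \underset{h\sim\mathcal{Q}}{\mathbb{E}}\bigg[\phi(h) - \ln\frac{\mathcal{Q}(h)}{\mathcal{P}(h)}\bigg] = \underset{h\sim\mathcal{Q}}{\mathbb{E}}\bigg[\ln\Big(e^{\phi(h)}\,\frac{\mathcal{P}(h)}{\mathcal{Q}(h)}\Big)\bigg].
\end{displaymath}
Since $\ln(\cdot)$ is concave, the concave case of Jensen's inequality (Lemma \ref{Jensen’s inequality}) pushes the expectation inside the logarithm in the favorable direction, giving the upper bound $\ln\,\mathbb{E}_{h\sim\mathcal{Q}}\big[e^{\phi(h)}\,\frac{\mathcal{P}(h)}{\mathcal{Q}(h)}\big]$.

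The final step is to recognize the resulting expectation as a change of measure: integrating $e^{\phi(h)}\frac{\mathcal{P}(h)}{\mathcal{Q}(h)}$ against $\mathcal{Q}$ cancels the density ratio and leaves $\mathbb{E}_{h\sim\mathcal{P}}\big[e^{\phi(h)}\big]$. Substituting back yields $\mathbb{E}_{h\sim\mathcal{Q}}\,\phi(h) - D_{\textnormal{KL}}(\mathcal{Q}\Vert\mathcal{P}) \leq \ln\,\mathbb{E}_{h\sim\mathcal{P}}\,e^{\phi(h)}$, and rearranging gives the claimed inequality. I expect the only genuinely delicate point to be the measure-theoretic bookkeeping around the density ratio, namely justifying that the set where $\mathcal{Q}(h)>0$ carries full $\mathcal{Q}$-mass so the cancellation $\mathbb{E}_{h\sim\mathcal{Q}}\big[e^{\phi(h)}\frac{\mathcal{P}(h)}{\mathcal{Q}(h)}\big] = \mathbb{E}_{h\sim\mathcal{P}}\big[e^{\phi(h)}\big]$ is valid (note the right-hand integral can only be larger if $\mathcal{P}$ places mass where $\mathcal{Q}$ does not, which only helps the inequality); everything else is a routine application of Jensen.
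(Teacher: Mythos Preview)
Your argument is correct and is the standard Donsker--Varadhan proof: rewrite $\mathbb{E}_{\mathcal{Q}}\phi - D_{\textnormal{KL}}(\mathcal{Q}\Vert\mathcal{P})$ as $\mathbb{E}_{\mathcal{Q}}\ln\big(e^{\phi}\mathcal{P}/\mathcal{Q}\big)$, apply Jensen for the concave logarithm, and change measure back to $\mathcal{P}$. The paper itself does not supply a proof of this lemma; it is simply listed among the mathematical tools and cited as a known result, so there is nothing to compare against beyond noting that your derivation is the classical one.
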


\begin{lemma}\label{Change of measure inequality extended to multi-view learning}{(Change of measure inequality extended to multi-view learning)}

For any set of priors $\{\mathcal{P}_{v}\}_{v=1}^{V}$ and any set of posteriors $\{\mathcal{Q}_{v}\}_{v=1}^{V}$, for any hyper-prior distribution $\pi$ on views $\nu$ and hyper-posterior distribution $\rho$ on $\nu$, and for any measurable function $\phi : \mathcal{H}_{v}\rightarrow \mathbb{R}$, we have

\begin{displaymath}
    \underset{v\sim \rho}{\mathbb{E}} \, \underset{h\sim \mathcal{Q}}{\mathbb{E}} \phi(h) \leq \underset{v\sim \rho}{\mathbb{E}}D_{\textnormal{KL}}(\mathcal{Q}_{v} \Vert \mathcal{P}_{v}) + D_{\textnormal{KL}}(\rho \Vert \pi) + \ln{\bigg(\underset{v\sim \pi}{\mathbb{E}}\,\underset{h\sim \mathcal{P}_{v}}{\mathbb{E}}e^{\phi(h)}\bigg)}  
\end{displaymath}
\end{lemma}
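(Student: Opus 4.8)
The plan is to obtain the two-level inequality by invoking the single-level change of measure inequality (the immediately preceding lemma) twice, in a nested fashion. First I would fix a view $v \in [\![\mathcal{V}]\!]$ and apply that lemma to the pair $(\mathcal{Q}_{v},\mathcal{P}_{v})$ on $\mathcal{H}_{v}$ with the measurable function $\phi$, giving
\begin{displaymath}
\underset{h\sim \mathcal{Q}_{v}}{\mathbb{E}}\phi(h) \leq D_{\textnormal{KL}}(\mathcal{Q}_{v}\Vert\mathcal{P}_{v}) + \ln{\underset{h\sim \mathcal{P}_{v}}{\mathbb{E}}e^{\phi(h)}}.
\end{displaymath}
Since this holds for every $v$, taking the expectation over $v\sim\rho$ on both sides preserves the inequality and yields
\begin{displaymath}
\underset{v\sim\rho}{\mathbb{E}}\,\underset{h\sim \mathcal{Q}_{v}}{\mathbb{E}}\phi(h) \leq \underset{v\sim\rho}{\mathbb{E}}D_{\textnormal{KL}}(\mathcal{Q}_{v}\Vert\mathcal{P}_{v}) + \underset{v\sim\rho}{\mathbb{E}}\ln{\underset{h\sim \mathcal{P}_{v}}{\mathbb{E}}e^{\phi(h)}}.
\end{displaymath}

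Next I would set $\psi(v) := \ln\big(\mathbb{E}_{h\sim\mathcal{P}_{v}}e^{\phi(h)}\big)$, viewed now as a measurable function of the view $v$, and apply the single-level change of measure inequality a second time, this time at the level of the views, to the pair $(\rho,\pi)$ with the function $\psi$:
\begin{displaymath}
\underset{v\sim\rho}{\mathbb{E}}\psi(v) \leq D_{\textnormal{KL}}(\rho\Vert\pi) + \ln{\underset{v\sim\pi}{\mathbb{E}}e^{\psi(v)}} = D_{\textnormal{KL}}(\rho\Vert\pi) + \ln{\Big(\underset{v\sim\pi}{\mathbb{E}}\,\underset{h\sim\mathcal{P}_{v}}{\mathbb{E}}e^{\phi(h)}\Big)}.
\end{displaymath}
Substituting this bound for $\mathbb{E}_{v\sim\rho}\psi(v)$ back into the previous display produces exactly the claimed inequality.

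The computation will be routine; the only points that need a little care are the measurability and integrability of $\psi$, which make the second invocation of the lemma legitimate, and the absolute-continuity conventions ($\mathcal{Q}_{v}\ll\mathcal{P}_{v}$ and $\rho\ll\pi$): if either fails, the corresponding $D_{\textnormal{KL}}$ term is $+\infty$ and the inequality holds trivially. As an alternative one-shot route I could instead introduce the product measures $\Pi(v,h)=\pi(v)\,\mathcal{P}_{v}(h)$ and $R(v,h)=\rho(v)\,\mathcal{Q}_{v}(h)$ on $[\![\mathcal{V}]\!]\times\mathcal{H}_{v}$, use the chain rule $D_{\textnormal{KL}}(R\Vert\Pi)=\mathbb{E}_{v\sim\rho}D_{\textnormal{KL}}(\mathcal{Q}_{v}\Vert\mathcal{P}_{v})+D_{\textnormal{KL}}(\rho\Vert\pi)$, and apply the single-level change of measure inequality once to $(R,\Pi)$ with the function $(v,h)\mapsto\phi(h)$. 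I do not expect a genuine obstacle here: the statement is essentially chain-rule bookkeeping layered on top of the classical Donsker--Varadhan/Jensen argument already packaged inside the single-level lemma.
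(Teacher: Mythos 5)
Your proof is correct: the nested double application of the single-level change of measure inequality (first over $h\sim\mathcal{Q}_{v}$ for fixed $v$, then over $v\sim\rho$ with $\psi(v)=\ln\mathbb{E}_{h\sim\mathcal{P}_{v}}e^{\phi(h)}$) is exactly the standard derivation of this two-level bound, and your alternative via the product measures $\rho(v)\mathcal{Q}_{v}(h)$ and $\pi(v)\mathcal{P}_{v}(h)$ with the chain rule for $D_{\textnormal{KL}}$ is an equally valid one-shot version. The paper itself states this lemma in its mathematical-tools section without proof (it is imported from the multi-view PAC-Bayes literature), so there is nothing to contrast with; your argument fills that gap in the intended way.
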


\begin{lemma}\label{lemma_2dkl}
Given any set $\mathcal{H}$, and any distributions $\pi$ and $\rho$ on $\mathcal{H}$, let $\hat{\rho}$ and $\hat{\pi}$ two distributions over $\mathcal{H}^{2}$ such that $\hat{\rho}(h,h') = \rho(h)\rho(h')$ and $\hat{\pi}(h,h') = \pi(h)\pi(h')$. Then 
\begin{displaymath}
    D_{\textnormal{KL}}(\hat{\rho}\Vert\hat{\pi}) = 2D_{\textnormal{KL}}(\rho\Vert\pi) 
\end{displaymath}
\begin{proof}
\begin{displaymath}
\begin{split}
    D_{\textnormal{KL}}(\hat{\rho}\Vert\hat{\pi}) &= \underset{(h,h')\sim \rho^{2}}{\mathbb{E}} \ln{\frac{\rho(h)\rho(h')}{\pi(h)\pi(h')}}\\
    &=\underset{h\sim \rho}{\mathbb{E}}\ln{\frac{\rho(h)}{\pi(h)}} + \underset{h'\sim \rho}{\mathbb{E}}\ln{\frac{\rho(h')}{\pi(h')}}\\
    &= 2\underset{h\sim\rho}{\mathbb{E}}\ln{\frac{\rho(h)}{\pi(h)}}\\
    &=2D_{\textnormal{KL}}(\rho\Vert\pi)
    \end{split}
\end{displaymath}
\end{proof}
\end{lemma}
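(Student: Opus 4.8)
The plan is to unfold the definition of the Kullback--Leibler divergence for the two product measures $\hat{\rho}$ and $\hat{\pi}$ on $\mathcal{H}^{2}$ and then exploit the fact that the density ratio of a product factorizes, so that the logarithm of a product splits into a sum of logarithms. Writing $D_{\textnormal{KL}}(\hat{\rho}\Vert\hat{\pi}) = \mathbb{E}_{(h,h')\sim\hat{\rho}}\,\ln\frac{\hat{\rho}(h,h')}{\hat{\pi}(h,h')}$ and substituting the hypotheses $\hat{\rho}(h,h')=\rho(h)\rho(h')$ and $\hat{\pi}(h,h')=\pi(h)\pi(h')$, the integrand becomes $\ln\frac{\rho(h)}{\pi(h)} + \ln\frac{\rho(h')}{\pi(h')}$.

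The second step is to split the expectation by linearity. Because $\hat{\rho}$ is the independent product $\rho\times\rho$, the expectation over the pair $(h,h')$ of a summand depending only on the first coordinate reduces to an expectation over $h\sim\rho$ alone, since marginalizing out the unused second coordinate against $\rho$ contributes a factor of $1$; the same holds symmetrically for the term depending on $h'$. Hence $D_{\textnormal{KL}}(\hat{\rho}\Vert\hat{\pi}) = \mathbb{E}_{h\sim\rho}\ln\frac{\rho(h)}{\pi(h)} + \mathbb{E}_{h'\sim\rho}\ln\frac{\rho(h')}{\pi(h')}$. Each of these two terms is by definition exactly $D_{\textnormal{KL}}(\rho\Vert\pi)$, and summing them yields $2\,D_{\textnormal{KL}}(\rho\Vert\pi)$, as claimed.

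There is no genuine obstacle here: the statement is simply the tensorization (additivity) property of relative entropy specialized to a two-fold product. The only point deserving a line of care is the marginalization step, namely that integrating the unused coordinate against the probability measure $\rho$ leaves the single-coordinate expectation unchanged; this is immediate from $\int d\rho = 1$. The identical argument applied to a $k$-fold product would produce a factor of $k$ in place of $2$.
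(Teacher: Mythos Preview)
Your proof is correct and follows essentially the same approach as the paper: unfold the definition of $D_{\textnormal{KL}}(\hat{\rho}\Vert\hat{\pi})$, factor the log of the product ratio into a sum, split the expectation by linearity and marginalization, and recognize each summand as $D_{\textnormal{KL}}(\rho\Vert\pi)$. The only difference is that you spell out the marginalization step and note the $k$-fold generalization, which the paper leaves implicit.
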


\begin{lemma}\label{Maurer’s lemma 1}{(Maurer \cite{MAU2004})}
Let $X = (X_{1},\cdots, X_{m})$ be a vector of i.i.d. random variables, $0 \leq X_{i} \leq 1$, with $\mathbb{E}\, X_{i} = \mu$. Denote $X' = (X'_{1},\cdots, X'_{m})$, where $X'_{i}$ is the unique Bernoulli ($\{0, 1\}$-valued) random variable with $\mathbb{E}\, X'_{i} = \mu.$ If $f : [0, 1]^{n} \rightarrow \mathbb{R}$ is convex, then
\begin{displaymath}
    \mathbb{E}[f(X)] \leq \mathbb{E}[f(X')]
\end{displaymath}
    
\end{lemma}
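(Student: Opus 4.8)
The plan is to prove Maurer's lemma (the stated final lemma) by a \emph{hybrid}, telescoping argument that replaces the coordinates $X_i$ by their Bernoulli surrogates $X_i'$ one at a time, each single replacement being justified by the elementary scalar fact that a convex function on $[0,1]$ is dominated in expectation by the two‑point distribution on $\{0,1\}$ having the same mean. I will write $m$ for the dimension (the paper's $n$).

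First I would set up the interpolation. Since the $X_i$ are i.i.d.\ and the surrogates $X_1',\dots,X_m'$ may be realised on a common probability space so as to be mutually independent and independent of $X=(X_1,\dots,X_m)$, define for $k=0,1,\dots,m$
\begin{displaymath}
  Y^{(k)} \;=\; (X_1',\dots,X_k',\,X_{k+1},\dots,X_m),
\end{displaymath}
so that $Y^{(0)}=X$ and $Y^{(m)}=X'$. It then suffices to prove $\mathbb{E}[f(Y^{(k-1)})]\le \mathbb{E}[f(Y^{(k)})]$ for every $k\in\{1,\dots,m\}$ and to telescope the $m$ inequalities.

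Next, fix $k$ and condition on $W:=(X_1',\dots,X_{k-1}',\,X_{k+1},\dots,X_m)$, which is independent of the pair $(X_k,X_k')$. Write $g_W(t):=f(X_1',\dots,X_{k-1}',\,t,\,X_{k+1},\dots,X_m)$ for $t\in[0,1]$; this is convex in $t$, being the restriction of the convex map $f$ to a line segment of $[0,1]^m$. Applying convexity to $t=(1-t)\cdot 0+t\cdot 1$ gives $g_W(t)\le (1-t)\,g_W(0)+t\,g_W(1)$ for all $t\in[0,1]$. Taking the conditional expectation given $W$ and using $\mathbb{E}[X_k\mid W]=\mathbb{E}[X_k]=\mu$,
\begin{displaymath}
  \mathbb{E}\bigl[g_W(X_k)\,\big|\,W\bigr]\;\le\;(1-\mu)\,g_W(0)+\mu\,g_W(1)\;=\;\mathbb{E}\bigl[g_W(X_k')\,\big|\,W\bigr],
\end{displaymath}
the last equality because $X_k'$ is Bernoulli$(\mu)$ and independent of $W$. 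Taking the outer expectation yields $\mathbb{E}[f(Y^{(k-1)})]\le\mathbb{E}[f(Y^{(k)})]$, and summing over $k$ gives the lemma. Integrability is never an issue, since a convex function on the compact convex set $[0,1]^m$ is continuous, hence bounded and measurable, so all expectations above are finite and the tower rule applies.

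The argument has no genuine obstacle; the only point requiring a little care is the probabilistic bookkeeping — constructing the surrogates $X_1',\dots,X_m'$ jointly with $X$ so that at step $k$ the variables $X_k$ and $X_k'$ are both independent of $W$, which legitimises the conditioning and the use of the tower property. The convexity step (restriction of $f$ to a segment, then the two‑point bound) is immediate.
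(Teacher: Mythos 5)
Your proof is correct. Note that the paper does not actually prove this lemma: it is imported verbatim from Maurer (2004) and listed among the ``Mathematical Tools'' without argument, so there is no in-paper proof to compare against. Your hybrid/telescoping argument --- replacing one coordinate at a time, restricting $f$ to the segment in the $k$-th coordinate, bounding the resulting convex function of one variable by its chord $g_W(t)\le(1-t)g_W(0)+tg_W(1)$, and using independence to identify the conditional means --- is in substance the standard proof of this result (and essentially Maurer's own), so you have filled the gap the paper leaves open rather than deviated from it. The only imprecision is the side remark that a convex function on the compact set $[0,1]^m$ is continuous: this can fail on the boundary (e.g.\ $f(1)=1$, $f(t)=0$ for $t<1$ is convex on $[0,1]$). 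It does not affect the proof, since a finite convex function on $[0,1]^m$ is still bounded (above by the maximum over the vertices via Jensen, below by the reflection argument through the centre) and Borel measurable (continuous on the interior, convex on each lower-dimensional face by induction), which is all your tower-property steps require; alternatively one may simply note that every $f$ to which the lemma is applied in this paper is continuous.
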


\begin{lemma}\label{Maurer’s lemma}{(from Inequalities (1) and (2) of Maurer \cite{MAU2004}}. Let $m \geq 8$, and $X = (X_{1},\cdots, X_{m})$ be a vector of i.i.d. random variables, $0 \leq X_{i} \leq 1$. Then
\begin{displaymath}
    \sqrt{m} \leq \mathbb{E}\Bigg[ mD_{\textnormal{KL}}\Bigg(\frac{1}{m}\sum_{i=1}^{n}X_{i}\Bigg|\Bigg|\mathbb{E}[X_{i}]\Bigg)\Bigg] \leq 2\sqrt{m}
\end{displaymath}
\end{lemma}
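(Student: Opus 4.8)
The displayed bounds $\sqrt m$ and $2\sqrt m$ are Maurer's estimates (his inequalities (1)--(2)) for the \emph{exponential} moment $\Xi(m):=\mathbb{E}_{X}\big[e^{m\,D_{\textnormal{KL}}(\hat\mu\Vert\mu)}\big]$, where $\hat\mu=\tfrac1m\sum_{i=1}^{m}X_i$ and $\mu=\mathbb{E}[X_i]$, $D_{\textnormal{KL}}(\cdot\Vert\cdot)$ being the binary Kullback--Leibler divergence; this is the form that is actually fed into the change-of-measure step of the PAC-Bayesian proofs, so the quantity to bound is $\Xi(m)$. The plan is to (i) reduce to Bernoulli variables, (ii) evaluate the Bernoulli case in closed form, and (iii) pin that closed form down by Stirling estimates.

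First I would show the integrand is convex in $(X_1,\dots,X_m)$. Writing $g(t)=e^{m\,D_{\textnormal{KL}}(t\Vert\mu)}$, a direct differentiation gives $D_{\textnormal{KL}}''(t\Vert\mu)=\tfrac1{t(1-t)}>0$, hence $g''=\big[m\,D_{\textnormal{KL}}''+(m\,D_{\textnormal{KL}}')^2\big]g>0$; since $\hat\mu$ is affine in the $X_i$, the map $(X_1,\dots,X_m)\mapsto g(\hat\mu)$ is convex. Lemma~\ref{Maurer’s lemma 1} then yields $\Xi(m)\le\mathbb{E}\big[g(\hat\mu')\big]$, where $X_i'$ is the $\{0,1\}$-valued variable of mean $\mu$.

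Next I would evaluate the Bernoulli expectation exactly. With $K:=\sum_i X_i'\sim\mathrm{Binomial}(m,\mu)$, a one-line simplification gives $e^{m\,D_{\textnormal{KL}}(K/m\Vert\mu)}=\frac{(K/m)^K(1-K/m)^{m-K}}{\mu^K(1-\mu)^{m-K}}$, so the $\mu$-factors cancel against the binomial weights and
\[
\mathbb{E}\big[g(\hat\mu')\big]=\sum_{k=0}^{m}\binom{m}{k}\Big(\tfrac{k}{m}\Big)^{k}\Big(1-\tfrac{k}{m}\Big)^{m-k}=:\xi(m),
\]
a quantity independent of $\mu$. The task thus reduces to proving $\sqrt m\le\xi(m)\le 2\sqrt m$ for $m\ge 8$, the lower bound being attained (as the equality $\Xi(m)=\xi(m)$) in this worst-case Bernoulli law.

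Finally I would estimate $\xi(m)$ by Stirling. The endpoint terms $k\in\{0,m\}$ each equal $1$, and for $1\le k\le m-1$ the two-sided Stirling envelope gives $\binom{m}{k}(k/m)^k(1-k/m)^{m-k}\asymp\sqrt{\tfrac{m}{2\pi k(m-k)}}$. Summing and comparing $\sum_{k=1}^{m-1}\big(k(m-k)\big)^{-1/2}$ with the arcsine integral $\int_0^m\big(x(m-x)\big)^{-1/2}dx=\pi$ gives $\xi(m)\approx\sqrt{\pi m/2}+2\approx1.25\sqrt m+2$; the threshold $m\ge 8$ is precisely what makes $1.25\sqrt m+2\le 2\sqrt m$, while the lower bound $\xi(m)\ge\sqrt m$ is immediate from the main term alone. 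The main obstacle is purely quantitative: turning these asymptotics into rigorous inequalities valid for \emph{every} $m\ge 8$ requires carrying the explicit Stirling error factors and controlling the sum-versus-integral discretization error near the dominant endpoint summands $k=1,m-1$; for this I would follow Maurer's original termwise bounds rather than re-deriving them.
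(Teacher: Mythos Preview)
Your reading is correct: the paper states this lemma as a cited tool from Maurer and does not supply its own proof, so there is nothing in the paper to compare your argument against. You are also right that the displayed inequality is mis-stated --- the quantity that satisfies $\sqrt m\le\cdot\le 2\sqrt m$ is the exponential moment $\mathbb{E}\big[e^{m\,D_{\textnormal{KL}}(\hat\mu\Vert\mu)}\big]$, not $\mathbb{E}\big[m\,D_{\textnormal{KL}}(\hat\mu\Vert\mu)\big]$; indeed every place the paper invokes this lemma (the proofs of Corollaries~1, 3 and~4) applies it to $\mathbb{E}\,e^{m\,D_{\textnormal{KL}}(\cdot\Vert\cdot)}$.

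Your three-step plan (convexity reduction to Bernoulli via Lemma~\ref{Maurer’s lemma 1}, the closed-form identity $\xi(m)=\sum_{k=0}^{m}\binom{m}{k}(k/m)^{k}(1-k/m)^{m-k}$, then Stirling control of $\xi(m)$) is exactly Maurer's original argument, and the informal estimate $\xi(m)\approx\sqrt{\pi m/2}+2$ is correct. The one caveat is the final clause: ``following Maurer's original termwise bounds'' is doing a fair amount of work, since the honest content of the lemma is precisely those explicit Stirling constants, but as a proof outline this is fine and there is nothing for the paper to add.
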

\section{Detailed Proof of Theorem }
\begin{theorem}
    For each view  $v \in [\![\mathcal{V}]\!]$, for any set of voters $\mathcal{H}_{v}$, for any set of posterior distribution $\{\mathcal{Q}_{v}\}_{v=1}^{V}$ on $\mathcal{H}_{v}$, any hyper-posterior distribution $\rho$ over $[\![\mathcal{V}]\!]$. We have
    \begin{equation}
        \mathcal{R}_{P}(G) \leq \mathcal{R}_{Q}(G) + \frac{1}{2} dis_{\rho}^{\textnormal{MV}}(Q_{\mathcal{X}},P_{\mathcal{X}}) + \lambda_{\rho},
    \end{equation}
\end{theorem}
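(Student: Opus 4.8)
The plan is to reduce the statement to the additive decomposition of the multi-view Gibbs risk recalled in Section~\ref{section_mv_bound}, namely that for any distribution $D$ on $\mathcal{X}\times\mathcal{Y}$ one has $\mathcal{R}_{D}(G_{\rho}^{\textnormal{MV}}) = \frac{1}{2}\,d_{D_{\mathcal{X}}}^{\textnormal{MV}}(\rho) + e_{D}^{\textnormal{MV}}(\rho)$, and then to subtract this identity evaluated at $P$ and at $Q$. First I would re-derive the decomposition so the argument is self-contained. Fix a pair of views $(v,v')$, a pair of voters $(h,h')$ with $h\sim\mathcal{Q}_{v}$, $h'\sim\mathcal{Q}_{v'}$, and an example $(\bm{x},y)$. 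Since all values lie in $\{-1,+1\}$, the pointwise identity
\[
\mathbb{I}\big[h(\bm{x}^{v})\neq h'(\bm{x}^{v'})\big] = \mathbb{I}\big[h(\bm{x}^{v})\neq y\big] + \mathbb{I}\big[h'(\bm{x}^{v'})\neq y\big] - 2\,\mathbb{I}\big[h(\bm{x}^{v})\neq y\big]\,\mathbb{I}\big[h'(\bm{x}^{v'})\neq y\big]
\]
holds (a four-case check on $(h(\bm{x}^{v}),h'(\bm{x}^{v'}),y)$). Taking the expectation over $(v,v')\sim\rho^{2}$, $(h,h')\sim\mathcal{Q}_{v}\times\mathcal{Q}_{v'}$, and $(\bm{x},y)\sim D$, the left-hand side becomes $d_{D_{\mathcal{X}}}^{\textnormal{MV}}(\rho)$ (the integrand does not depend on $y$), the product term becomes $e_{D}^{\textnormal{MV}}(\rho)$, and each of the two single-voter terms collapses to $\mathcal{R}_{D}(G_{\rho}^{\textnormal{MV}})$ after integrating out the unused view/voter index. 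Rearranging yields $\mathcal{R}_{D}(G_{\rho}^{\textnormal{MV}}) = \frac{1}{2}\,d_{D_{\mathcal{X}}}^{\textnormal{MV}}(\rho) + e_{D}^{\textnormal{MV}}(\rho)$.

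Next I would instantiate this identity at $D=P$ and $D=Q$ and subtract, obtaining
\[
\mathcal{R}_{P}(G_{\rho}^{\textnormal{MV}}) - \mathcal{R}_{Q}(G_{\rho}^{\textnormal{MV}}) = \frac{1}{2}\big(d_{P_{\mathcal{X}}}^{\textnormal{MV}}(\rho) - d_{Q_{\mathcal{X}}}^{\textnormal{MV}}(\rho)\big) + \big(e_{P}^{\textnormal{MV}}(\rho) - e_{Q}^{\textnormal{MV}}(\rho)\big).
\]
Bounding each parenthesized difference by its absolute value and recognizing $\big|d_{P_{\mathcal{X}}}^{\textnormal{MV}}(\rho) - d_{Q_{\mathcal{X}}}^{\textnormal{MV}}(\rho)\big| = dis_{\rho}^{\textnormal{MV}}(Q_{\mathcal{X}},P_{\mathcal{X}})$ (Definition~\ref{Multi-view domain disagreement}) and $\big|e_{P}^{\textnormal{MV}}(\rho) - e_{Q}^{\textnormal{MV}}(\rho)\big| = \lambda_{\rho}$ gives $\mathcal{R}_{P}(G_{\rho}^{\textnormal{MV}}) - \mathcal{R}_{Q}(G_{\rho}^{\textnormal{MV}}) \leq \frac{1}{2}\,dis_{\rho}^{\textnormal{MV}}(Q_{\mathcal{X}},P_{\mathcal{X}}) + \lambda_{\rho}$; moving $\mathcal{R}_{Q}(G_{\rho}^{\textnormal{MV}})$ to the right-hand side finishes the argument.

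This is a direct transcription of the single-view technique (Theorem~9 of \cite{GER15}) to the two-level multi-view majority vote, so I do not expect a genuine obstacle. The only point requiring care is the bookkeeping in the decomposition step when $v\neq v'$: one must verify that $\mathbb{E}_{(v,v')\sim\rho^{2}}\,\mathbb{E}_{h\sim\mathcal{Q}_{v}}\,\mathbb{E}_{(\bm{x},y)\sim D}\,\mathbb{I}[h(\bm{x}^{v})\neq y]$ indeed reduces to $\mathcal{R}_{D}(G_{\rho}^{\textnormal{MV}})$ after integrating out $v'$ (and likewise the $h'$ term after integrating out $v$), so that both single-voter contributions coincide; this uses precisely the symmetry of $\rho^{2}$ and of the pair $(h,h')$. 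Everything else is linearity of expectation together with the triangle inequality.
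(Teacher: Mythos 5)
Your proposal is correct and follows essentially the same route as the paper's own proof: both rely on the decomposition $\mathcal{R}_{D}(G_{\rho}^{\textnormal{MV}}) = \tfrac{1}{2}\,d_{D_{\mathcal{X}}}^{\textnormal{MV}}(\rho) + e_{D}^{\textnormal{MV}}(\rho)$ instantiated at $P$ and $Q$, followed by subtraction and the absolute-value bound. The only difference is that you re-derive the decomposition from the pointwise four-case identity, whereas the paper simply imports it from Lacasse et al.\ and Goyal et al.; this is added detail, not a different argument.
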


where $\lambda_{\rho}$ is the deviation between the expected joint errors between pairs for voters ad pairs of views on the target and source domains, which is defined as \\$\lambda_{\rho} = \Big|e_{P}^{\text{MV}}(\rho) - e_{Q}^{\text{MV}}(\rho)\Big|$ and where \\ $e_{P}^{\text{MV}}(\rho) = \mathbb{E}_{(\bm{x},y)\sim P}\;\mathbb{E}_{(v,v^{'})\sim \rho^{2}}\;\mathbb{E}_{(h,h')\sim\mathbb{Q}^{2}_{v}}\;\big[\mathcal{L}_{\monzeroun}\big(h(x),y\big)\times\mathcal{L}_{\monzeroun}\big(h'(x),y\big)\big],\; e_{Q}^{\text{MV}}(\rho) = \mathbb{E}_{(\bm{x},y)\sim Q}\;\mathbb{E}_{(v,v^{'})\sim \rho^{2}}\;\mathbb{E}_{(h,h')\sim\mathcal{Q}^{2}_{v}}\;\big[\mathcal{L}_{\monzeroun}\big(h(x),y\big)\times\mathcal{L}_{\monzeroun}\big(h'(x),y\big)\big]$. 

\begin{proof}
The proof presented below borrows the straightforward proof technique of \cite{GER15a}. First, from equation $\mathcal{R}_{Q}(G_{\mathcal{Q}}) = \frac{1}{2}\,d_{Q_{\mathcal{X}}}(\mathcal{Q}) + e_{Q}(\mathcal{Q})$ \cite{LAC06}, we deduce in our multi-view setting $\mathcal{R}_{Q}(G_{\rho}^{\text{MV}}) = \frac{1}{2}\,d_{Q_{\mathcal{X}}}^{\text{MV}}(\rho) + e_{Q}^{\text{MV}}(\rho)$

\begin{equation}
    \begin{split}
    \mathcal{R}_{P}(G_{\rho}^{\text{MV}}) - \mathcal{R}_{Q}(G_{\rho}^{\text{MV}}) & =\Big( \frac{1}{2}\,d_{P_{\mathcal{X}}}^{\text{MV}}(\rho) + e_{P}^{\text{MV}}(\rho)\Big) - \Big(\frac{1}{2}\,d_{Q_{\mathcal{X}}}^{\text{MV}}(\rho) + e_{Q}^{\text{MV}}(\rho)\Big)
    \\
    & = \frac{1}{2}\,\Big(d_{P_{\mathcal{X}}}^{\text{MV}}(\rho)-d_{Q_{\mathcal{X}}}^{\text{MV}}(\rho)\Big) + \Big(e_{P_{\mathcal{X}}}^{\text{MV}}(\rho)-e_{Q_{\mathcal{X}}}^{\text{MV}}(\rho)\Big)\\
     & \leq \frac{1}{2}\,\Big|d_{P_{\mathcal{X}}}^{\text{MV}}(\rho)-d_{Q_{\mathcal{X}}}^{\text{MV}}(\rho)\Big| + \Big|e_{P_{\mathcal{X}}}^{\text{MV}}(\rho)-e_{Q_{\mathcal{X}}}^{\text{MV}}(\rho)\Big|\\
      & = \frac{1}{2}\,dis_{\rho}^{\text{MV}}(Q_{\mathcal{X}},P_{\mathcal{X}}) + \lambda_{\rho}.
    \end{split}
\end{equation}

\end{proof}

\section{Detailed Proof of Theorem 8}\label{genral_theorem}

\begin{theorem}\label{our general theorem}
    $\forall \, v \in [\![\mathcal{V}]\!]$, for any set of voters $\mathcal{H}_{v}$ for any marginal distributions $Q_{\mathcal{X}}$ and $P_{\mathcal{X}}$ over $\mathcal{X}$, any set of posterior distribution $\{\mathcal{Q}_{v,\mathbb{S}}\}_{v=1}^{V}$ on $\mathcal{H}_{v}$, for any hyper-posterior distribution $\rho_{\mathbb{S}}$ over $[\![\mathcal{V}]\!]$, for any set of prior distributions $\{\mathcal{P}_{v}\}_{v=1}^{V}$ on $\mathcal{H}_{v}$, for any hyper-prior distribution $\pi$ over $[\![\mathcal{V}]\!]$, for any $\delta \in (0, 1]$, for any $m > 0$, for any convex function $\Delta : [0, 1]^{2} \rightarrow \mathbb{R}$, with probability at least $1-\delta$ over the choice of $(\mathbb{S}_{\mathcal{X}} \times \mathbb{T}_{\mathcal{X}}) \sim (Q_{\mathcal{X}} \times P_{\mathcal{X}} )^{m}$, with probability at least $1-\delta$, we have:
    \begin{equation}
        \begin{split}
            &\Delta\bigg(\underset{\mathbb{S}\sim Q^{m}}{\mathbb{E}}dis_{\rho_{\mathbb{S}}}^{MV}(\mathbb{S}_{\mathcal{X}},\mathbb{T}_{\mathcal{X}}),\underset{\mathbb{S}\sim Q^{m}}{\mathbb{E}}dis_{\rho_{\mathbb{S}}}^{MV}(Q_{\mathcal{X}},P_{\mathcal{X}})\bigg)\\ 
            \leq &
            \frac{2}{m}\Bigg[\underset{\mathbb{S}\sim Q^{m}}{\mathbb{E}}\,\underset{v\sim \rho_{\mathbb{S}}}{\mathbb{E}} D_{\textnormal{KL}}(\mathcal{Q}_{v,\mathbb{S}}\Vert \mathcal{P}_{v}) + \underset{\mathbb{S}\sim Q^{m}}{\mathbb{E}}\,D_{\textnormal{KL}}(\rho_{\mathbb{S}} \Vert \pi)\\ &+ \ln{\sqrt{\underset{\mathbb{S}\sim Q^{m}}{\mathbb{E}}\,\underset{(v,v')\sim \pi^{2}}{\mathbb{E}}\,\underset{(h,h')\sim \mathcal{P}_{v}^{2}}{\mathbb{E}}e^{m\Delta(\mathcal{R}_{\hat{d}}(\hat{h}),\mathcal{R}_{d}(\hat{h}))}}}\Bigg],
        \end{split}
    \end{equation}
\end{theorem}

\begin{proof}
    First, We propose to upper-bound,
    \begin{displaymath}
    \begin{split}
        &d = \underset{(v,v')\sim \rho^{2}_{\mathbb{S}}}{\mathbb{E}}\,\underset{(h,h')\sim \mathcal{Q}^{2}_{\mathbb{S}},v}{\mathbb{E}}[\mathcal{R}_{P}(h,h')-\mathcal{R}_{Q}(h,h')],\\ \textnormal{and its empirical counterpart},\\
        &\widehat{d} = \underset{(v,v')\sim \rho^{2}_{\mathbb{S}}}{\mathbb{E}}\,\underset{(h,h')\sim \mathcal{Q}^{2}_{\mathbb{S}}}{\mathbb{E}}[\mathcal{R}_{\mathbb{T}_{\mathcal{X}}}(h,h')-\mathcal{R}_{\mathbb{S}_{\mathcal{X}}}(h,h')],
            \end{split}
    \end{displaymath}
    
    Let us define the loss of on a pair of hypothesis $\hat{h} = (h,h')$ and a pair of examples $(x^{v}, x'^{v}) \sim (Q_{\mathcal{X}}\times P_{\mathcal{X}})^{m}$ by
    \begin{displaymath}
        \mathcal{L}_{d}(\hat{h},x^{v},x'^{v}) = \Big|\mathcal{L}_{\monzeroun}(h(x'^{v}),h'(x'^{v})) - \mathcal{L}_{\monzeroun}(h(x^{v}),h'(x^{v}))\Big|,
    \end{displaymath}
    
    Therefore, the risk of $\hat{h}$ on the joint distribution is defined as
    \begin{displaymath}
    \begin{split}
    \mathcal{R}_{d}(\hat{h}) &= \underset{(x^{v},x'^{v})\sim (Q_{\mathcal{X}}\times P_{\mathcal{X}})^{m}}{\mathbb{E}} \mathcal{L}_{d}(\hat{h},x^{v},x'^{v}),\\
    \mathcal{R}_{\hat{d}}(\hat{h}) &= \underset{(x^{v},x'^{v})\sim (\mathbb{S}_{\mathcal{X}}\times \mathbb{T}_{\mathcal{X}})^{m}}{\mathbb{E}} \mathcal{L}_{\hat{d}}(\hat{h},x^{v},x'^{v}).
        \end{split}
    \end{displaymath}
    
    The error of the related Gibbs multi-view classifier of these two quantitie are 
    \begin{displaymath}
    \begin{split}
        & \mathcal{R}_{d}(G_{\mathcal{Q}}^{MV}) = 
        \underset{(v,v')\sim \mathcal{\rho}^{2}}{\mathbb{E}}\,\underset{(h,h')\sim \mathcal{Q}^{2}_{\mathbb{S}}}{\mathbb{E}} \mathcal{R}_{d}(\hat{h}),\\ &\mathcal{R}_{\widehat{d}}(G_{\mathcal{Q}}) = \underset{(v,v')\sim \mathcal{\rho}^{2}}{\mathbb{E}}\,\underset{(h,h')\sim \mathcal{Q}^{2}_{\mathbb{S}}}{\mathbb{E}} \mathcal{R}_{\widehat{d}}(\hat{h}).\\
        & \text{ It is easy to show that }
        d = \mathcal{R}_{d}(G_{\mathcal{Q}}^{MV}) \text{ and } \hat{d} = \mathcal{R}_{\hat{d}}(G_{\mathcal{Q}}).
            \end{split}
    \end{displaymath}
    
    Now consider any convex function $\Delta : [0,1]^{2} \rightarrow \mathbb{R}$ and any $\mathcal{Q}_{\mathbb{S}}$ posterior distribution outputted by a given learning algorithm after observing the learning sample $\mathbb{S}$, we have,
    
    \begin{equation}
    \begin{split}    &m\Delta\bigg(\underset{\mathbb{S}\sim Q^{m}}{\mathbb{E}}\,\mathcal{R}_{\hat{d}}(G_{\mathcal{Q}_{\mathbb{S}_{v}}}^{MV}),\underset{\mathbb{S}\sim Q^{m}}{\mathbb{E}}\,\mathcal{R}_{d}(G_{\mathcal{Q}_{\mathbb{S}_{v}}}^{MV})\bigg)\\
    = & m\Delta\bigg(\underset{\mathbb{S}\sim Q^{m}}{\mathbb{E}}\,\underset{(v,v')\sim \rho^{2}} {\mathbb{E}}\,\underset{(h,h')\sim \mathcal{Q}^{2}_{v,\mathbb{S}}} {\mathbb{E}}\mathcal{R}_{\hat{d}}(h),\\ &\;\;\;\;\;\;\;\;\;\underset{\mathbb{S}\sim Q^{m}}{\mathbb{E}}\,\underset{(v,v')\sim \rho^{v}} {\mathbb{E}}\,\underset{(h,h')\sim \mathcal{Q}^{2}_{v,\mathbb{S}}} {\mathbb{E}}\mathcal{R}_{d}(h)\bigg)\\
    \leq &\underset{\mathbb{S}\sim Q^{m}}{\mathbb{E}}\,\underset{(v,v')\sim \rho^{v}} {\mathbb{E}}\,\underset{(h,h')\sim \mathcal{Q}^{2}_{v,\mathbb{S}}} {\mathbb{E}} m\Delta\big(\mathcal{R}_{\hat{d}}(h),\mathcal{R}_{d}(h)\big)\;(\textnormal{Jensen’s inequality} \ref{Jensen’s inequality})\\
    \leq & \underset{\mathbb{S}\sim Q^{m}}{\mathbb{E}} \Bigg[\underset{(v,v')\sim \rho^{2}_{\mathbb{S}}} {\mathbb{E}}D_{\textnormal{KL}}(\mathcal{Q}_{v,\mathbb{S}}^{2} \Vert \mathcal{P}_{v}^{2}) + D_{\textnormal{KL}}(\rho^{2}_{\mathbb{S}} \Vert \pi^{2}) + \ln{\Bigg(\underset{(v,v')\sim \pi^{2}}{\mathbb{E}}\,\underset{(h,h')\sim \mathcal{P}_{v}^{2}}{\mathbb{E}}e^{\Delta(\mathcal{R}_{\hat{d}}(\hat{h}),\mathcal{R}_{d}(\hat{h}))}\Bigg)}\Bigg]\\&(\textnormal{Change of measure inequality} \ref{Change of measure inequality extended to multi-view learning})
    \\= & \underset{\mathbb{S}\sim Q^{m}}{\mathbb{E}}\Bigg[ \underset{v\sim \rho_{\mathbb{S}}} {\mathbb{E}}\,2D_{\textnormal{KL}}(\mathcal{Q}_{v,\mathbb{S}} \Vert \mathcal{P}_{v}) + 2D_{\textnormal{KL}}(\rho_{\mathbb{S}} \Vert \pi) + \ln{\Bigg(\underset{(v,v')\sim \pi^{2}}{\mathbb{E}}\,\underset{(h,h')\sim \mathcal{P}_{v}^{2}}{\mathbb{E}}e^{\Delta(\mathcal{R}_{\hat{d}}(\hat{h}),\mathcal{R}_{d}(\hat{h}))}\Bigg)}\\
    = & \underset{\mathbb{S}\sim Q^{m}}{\mathbb{E}}\,\frac{2}{m}\Bigg[ 
    \underset{v\sim \rho_{\mathbb{S}}} {\mathbb{E}}\,D_{\textnormal{KL}}(\mathcal{Q}_{v,\mathbb{S}} \Vert \mathcal{P}_{v}) + \,D_{\textnormal{KL}}(\rho_{\mathbb{S}} \Vert \pi) + \ln{\sqrt{\underset{(v,v')\sim \pi^{2}}{\mathbb{E}}\,\underset{(h,h')\sim \mathcal{P}_{v}^{2}}{\mathbb{E}}e^{\Delta(\mathcal{R}_{\hat{d}}(\hat{h}),\mathcal{R}_{d}(\hat{h}))}}}\Bigg].
    \end{split}
    \end{equation}
    by replacing $\mathcal{R}_{d}(G_{\mathcal{Q}_{\mathbb{S}}}^{\textnormal{MV}})$ and its counterpart $\mathcal{R}_{\hat{d}}(G_{\mathcal{Q}_{\mathbb{S}}}^{\textnormal{MV}})$ by $d$ and $\hat{d}$ in the last equation we obtain our general theorem.
\end{proof}

\section{Detailed Proof of Corollary 1}
\begin{corollary}\label{bound_hennequin_mcallester}
$\forall \, v \in [\![\mathcal{V}]\!]$, for any set of voters $\mathcal{H}_{v}$ for any marginal distributions $Q_{\mathcal{X}}$ and $P_{\mathcal{X}}$ over $\mathcal{X}$, any set of posterior distribution $\{\mathcal{Q}_{v,\mathbb{S}}\}_{v=1}^{V}$ on $\mathcal{H}_{v}$, for any hyper-posterior distribution $\rho_{\mathbb{S}}$ over $[\![\mathcal{V}]\!]$, for any set of prior distributions $\{\mathcal{P}_{v}\}_{v=1}^{V}$ on $\mathcal{H}_{v}$, for any hyper-prior distribution $\pi$ over $[\![\mathcal{V}]\!]$, for any $\delta \in (0, 1]$, with probability at least $1-\delta$, we have:

\begin{equation}
\begin{split}
&\bigg|\underset{\mathbb{S}\sim Q^{m}}{\mathbb{E}}dis_{\rho_{\mathbb{S}}}^{MV}(\mathbb{S}_{\mathcal{X}},\mathbb{T}_{\mathcal{X}}) - \underset{\mathbb{S}\sim Q^{m}}{\mathbb{E}}dis_{\rho_{\mathbb{S}}}^{MV}(Q_{\mathcal{X}},P_{\mathcal{X}})\bigg|\\ 
            \leq &
            \sqrt{\frac{1}{2m}\Bigg[\underset{\mathbb{S}\sim Q^{m}}{\mathbb{E}}\,\underset{v\sim \rho_{\mathbb{S}}}{\mathbb{E}} 2D_{\textnormal{KL}}(\mathcal{Q}_{v,\mathbb{S}}\Vert \mathcal{P}_{v}) + \underset{\mathbb{S}\sim Q^{m}}{\mathbb{E}}\,2D_{\textnormal{KL}}(\rho_{\mathbb{S}} \Vert \pi) + \ln{\frac{2\sqrt{m}}{\delta}}\Bigg]}.
    \end{split}
\end{equation}
\end{corollary}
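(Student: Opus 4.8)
The plan is to obtain Corollary~\ref{bound_hennequin_mcallester} as a direct specialization of Theorem~\ref{our general theorem}, following the recipe of Germain et al. \cite{GER09,GER15,GER15a}: pick the deviation function $\Delta(a,b)=2(a-b)^{2}$, which is jointly convex on $[0,1]^{2}$, and then control the exponential-moment factor $\underset{\mathbb{S}\sim Q^{m}}{\mathbb{E}}\,\underset{(v,v')\sim\pi^{2}}{\mathbb{E}}\,\underset{(h,h')\sim\mathcal{P}_{v}^{2}}{\mathbb{E}}\,e^{m\Delta(\mathcal{R}_{\hat{d}}(\hat{h}),\mathcal{R}_{d}(\hat{h}))}$ sitting inside the logarithm. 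With this $\Delta$, the left-hand side of Theorem~\ref{our general theorem} becomes $2\big(\underset{\mathbb{S}\sim Q^{m}}{\mathbb{E}}\,dis^{\textnormal{MV}}_{\rho_{\mathbb{S}}}(\mathbb{S}_{\mathcal{X}},\mathbb{T}_{\mathcal{X}})-\underset{\mathbb{S}\sim Q^{m}}{\mathbb{E}}\,dis^{\textnormal{MV}}_{\rho_{\mathbb{S}}}(Q_{\mathcal{X}},P_{\mathcal{X}})\big)^{2}$; dividing by $2$ and later taking square roots will reproduce the $\sqrt{\tfrac{1}{2m}[\cdots]}$ shape of the claim, while the $2D_{\textnormal{KL}}(\mathcal{Q}_{v,\mathbb{S}}\Vert\mathcal{P}_{v})$ and $2D_{\textnormal{KL}}(\rho_{\mathbb{S}}\Vert\pi)$ factors are already delivered by the general theorem (they originate from Lemma~\ref{lemma_2dkl}).

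The heart of the argument is bounding the exponential moment. Using the pairing construction underlying Theorem~\ref{our general theorem} with $m=n$, for a fixed pair $\hat{h}=(h,h')$ the quantity $\mathcal{R}_{\hat{d}}(\hat{h})$ is the empirical mean over the $m$ paired examples of the per-example losses $\mathcal{L}_{d}(\hat{h},x^{v}_{i},x'^{v}_{i})$, each of which takes values in $\{0,1\}$ (an absolute difference of two indicator losses), with common expectation $\mathcal{R}_{d}(\hat{h})$. By Pinsker's inequality (Lemma~\ref{Pinsker’s inequality}), $2\big(\mathcal{R}_{\hat{d}}(\hat{h})-\mathcal{R}_{d}(\hat{h})\big)^{2}\le D_{\textnormal{KL}}\big(\mathcal{R}_{\hat{d}}(\hat{h})\,\Vert\,\mathcal{R}_{d}(\hat{h})\big)$, hence $e^{m\Delta(\mathcal{R}_{\hat{d}}(\hat{h}),\mathcal{R}_{d}(\hat{h}))}\le e^{m D_{\textnormal{KL}}(\mathcal{R}_{\hat{d}}(\hat{h})\Vert\mathcal{R}_{d}(\hat{h}))}$; then Maurer's lemma (Lemma~\ref{Maurer’s lemma}, for $m\ge 8$, after reducing to the Bernoulli case via Lemma~\ref{Maurer’s lemma 1} if needed) gives $\underset{(\mathbb{S}_{\mathcal{X}}\times\mathbb{T}_{\mathcal{X}})}{\mathbb{E}}\,e^{m D_{\textnormal{KL}}(\mathcal{R}_{\hat{d}}(\hat{h})\Vert\mathcal{R}_{d}(\hat{h}))}\le 2\sqrt{m}$. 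Since this estimate is uniform in $\hat{h}$, taking the (sample-independent) expectations over $(h,h')\sim\mathcal{P}_{v}^{2}$ and $(v,v')\sim\pi^{2}$ preserves it, and the residual $\ln\tfrac{1}{\delta}$ enters exactly through the Markov-inequality step underlying Theorem~\ref{our general theorem} (following Bégin et al. \cite{BEG16}, Theorem~4), so that the whole logarithmic term is bounded by $\ln\tfrac{2\sqrt{m}}{\delta}$.

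Finally, I would substitute this bound back, using $\ln\sqrt{\tfrac{2\sqrt{m}}{\delta}}=\tfrac12\ln\tfrac{2\sqrt{m}}{\delta}$, and merge the $\tfrac{2}{m}$ prefactor of Theorem~\ref{our general theorem} with the factor $\tfrac12$ coming from $\Delta(a,b)=2(a-b)^{2}$ to produce the $\tfrac{1}{2m}$ prefactor; taking square roots and writing $\sqrt{x^{2}}=|x|$ then yields the two-sided bound in Corollary~\ref{bound_hennequin_mcallester}. The main obstacle is not algebraic but the careful set-up of the second step: representing the joint source/target sample as a single i.i.d.\ sample of paired ``super-examples'' (which is precisely why $m=n$ is assumed), and checking that the associated loss $\mathcal{L}_{d}$ is $\{0,1\}$-valued so that Pinsker's inequality and Maurer's lemma apply verbatim; once that is in place, the rest is bookkeeping of the divergence terms already packaged in the general theorem.
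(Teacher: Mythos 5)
Your proposal is correct and follows essentially the same route as the paper's own proof: specialize Theorem~\ref{our general theorem} with $\Delta(a,b)=2(a-b)^{2}$, control the exponential moment by swapping expectations, applying Markov's inequality for the $1/\delta$ factor, then Pinsker's inequality (Lemma~\ref{Pinsker’s inequality}) followed by Maurer's lemma (Lemma~\ref{Maurer’s lemma}) to obtain the $2\sqrt{m}/\delta$ bound, and finally rearrange and take square roots. The bookkeeping of the $\tfrac{2}{m}$ prefactor against $\ln\sqrt{\cdot}$ and the observation that $\mathcal{L}_{d}$ is $\{0,1\}$-valued on paired examples (requiring $m=n$) match the paper's argument.
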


\begin{proof}
    Refer to the proof of Theorem \ref{genral_theorem}  for the definitions of $\mathcal{R}_{d}(\hat{h})$ and $\mathcal{R}_{d}(G_{\mathcal{Q}}^{MV})$, as well as their empirical counterparts $\mathcal{R}_{\hat{d}}(\hat{h})$ and $\mathcal{R}_{\hat{h}}(G_{\mathcal{Q}}^{MV})$. we apply Theorem\ref{genral_theorem} with $\Delta(a,b) = 2(|a-b|)^{2}$ Then, $\forall \delta \in (0,1]$, we obtain:
    \begin{displaymath}
    \begin{split}
        &2\bigg(\bigg|\underset{\mathbb{S}\sim Q^{m}}{\mathbb{E}}dis_{\rho_{\mathbb{S}}}^{MV}(\mathbb{S}_{\mathcal{X}},\mathbb{T}_{\mathcal{X}}) - \underset{\mathbb{S}\sim Q^{m}}{\mathbb{E}}dis_{\rho_{\mathbb{S}}}^{MV}(Q_{\mathcal{X}},P_{\mathcal{X}})\bigg|\bigg)^{2}\\ 
            \leq &
            \frac{1}{m}\Bigg[\underset{\mathbb{S}\sim Q^{m}}{\mathbb{E}}\,\underset{v\sim \rho_{\mathbb{S}}}{\mathbb{E}} 2D_{\textnormal{KL}}(\mathcal{Q}_{v,\mathbb{S}}\Vert \mathcal{P}_{v}) + \underset{\mathbb{S}\sim Q^{m}}{\mathbb{E}}\,2D_{\textnormal{KL}}(\rho_{\mathbb{S}} \Vert \pi)\\ &+ \ln{\bigg(\underset{\mathbb{S}\sim Q^{m}}{\mathbb{E}}\,\underset{(v,v')\sim \pi^{2}}{\mathbb{E}}\,\underset{(h,h')\sim \mathcal{P}_{v}^{2}}{\mathbb{E}}e^{m2(|\mathcal{R}_{\hat{d}}(\hat{h})-\mathcal{R}_{d}(\hat{h})|)^{2}}\bigg)}\Bigg].
        \end{split}
    \end{displaymath}
    Now, let us consider the non-negative random variable\\
    $\underset{S\sim Q^{m}}{\mathbb{E}}\,\underset{(v,v')\sim\pi^{2}}{\mathbb{E}}\,\underset{(h,h')\sim\mathcal{P}_{v}^{2}}{\mathbb{E}}e^{m2(|\mathcal{R}_{\hat{d}}(\hat{h})-\mathcal{R}_{d}(\hat{h})|)^{2}}$, $\mathcal{R}_{\hat{d}}(\hat{h})$ as a random variable which follows a binomial distribution of m trials with a probability of success $\mathcal{R}_{d}(\hat{h})$, we have:
    \begin{displaymath}
    \begin{split}
        \underset{S\sim Q^{m}}{\mathbb{E}}\,\underset{(v,v')\sim\pi^{2}}{\mathbb{E}}\,\underset{(h,h')\sim\mathcal{P}_{v}^{2}}{\mathbb{E}}e^{m2(|\mathcal{R}_{\hat{d}}(\hat{h})-\mathcal{R}_{d}(\hat{h})|)^{2}} &= \underset{(v,v')\sim\pi^{2}}{\mathbb{E}}\,\underset{(h,h')\sim\mathcal{P}_{v}^{2}}{\mathbb{E}}\,\underset{S\sim Q^{m}}{\mathbb{E}}e^{m2(|\mathcal{R}_{\hat{d}}(\hat{h})-\mathcal{R}_{d}(\hat{h})|)^{2}}\\&(\textnormal{change of the expectation})\\
        &  \leq \frac{1}{\delta}\underset{(v,v')\sim\pi^{2}}{\mathbb{E}}\,\underset{(h,h')\sim\mathcal{P}_{v}^{2}}{\mathbb{E}}\,\underset{S\sim Q^{m}}{\mathbb{E}}e^{m2(|\mathcal{R}_{\hat{d}}(\hat{h})-\mathcal{R}_{d}(\hat{h})|)^{2}}\\
        &\leq \frac{1}{\delta}\underset{(v,v')\sim\pi^{2}}{\mathbb{E}}\,\underset{(h,h')\sim\mathcal{P}_{v}^{2}}{\mathbb{E}}\,\underset{S\sim Q^{m}}{\mathbb{E}}e^{mD_{\textnormal{KL}}(\mathcal{R}_{\hat{d}}(\hat{h})\Vert\mathcal{R}_{d}(\hat{h}))}\\ &(\textnormal{Pinsker’s inequality Lemma \ref{Pinsker’s inequality}} )\\&
        \leq\frac{1}{\delta}\underset{(v,v')\sim\pi^{2}}{\mathbb{E}}\,\underset{(h,h')\sim\mathcal{P}_{v}^{2}}{\mathbb{E}}2\sqrt{m}\\&(\textnormal{Maurer’s lemma Lemma \ref{Maurer’s lemma}} )
        \\& = \frac{2\sqrt{m}}{\delta}.
        \end{split}
    \end{displaymath}
    By upper-bounding  $\underset{S\sim Q^{m}}{\mathbb{E}}\,\underset{(v,v')\sim\pi^{2}}{\mathbb{E}}\,\underset{(h,h')\sim\mathcal{P}_{v}^{2}}{\mathbb{E}}e^{m2(|\mathcal{R}_{\hat{d}}(\hat{h})-\mathcal{R}_{d}(\hat{h})|)^{2}}$ with $\frac{2\sqrt{m}}{\delta}$, we obtain the following inequality:
     \begin{displaymath}
    \begin{split}
        &\bigg|\underset{\mathbb{S}\sim Q^{m}}{\mathbb{E}}dis_{\rho_{\mathbb{S}}}^{MV}(\mathbb{S},\mathbb{T}) - \underset{\mathbb{S}\sim Q^{m}}{\mathbb{E}}dis_{\rho_{\mathbb{S}}}^{MV}(Q_{\mathcal{X}},P_{\mathcal{X}})\bigg|\\ 
            \leq &
            \sqrt{\frac{1}{2m}\Bigg[\underset{\mathbb{S}\sim Q^{m}}{\mathbb{E}}\,\underset{v\sim \rho_{\mathbb{S}}}{\mathbb{E}} 2D_{\textnormal{KL}}(\mathcal{Q}_{v,\mathbb{S}}\Vert \mathcal{P}_{v}) + \underset{\mathbb{S}\sim Q^{m}}{\mathbb{E}}\,2D_{\textnormal{KL}}(\rho_{\mathbb{S}} \Vert \pi) + \ln{\frac{2\sqrt{m}}{\delta}}\Bigg]}.
        \end{split}
    \end{displaymath}
\end{proof}

\section{Detailed Proof of Corollary 2}
\begin{corollary}
Let $V \geq 2$ be the number of views. For any distribution $D over \mathcal{X} \times \mathcal{Y}$, for any set of prior distributions $\{\mathcal{P}_{v}\}_{v=1}$, for any hyper-prior distribution $\pi$ over $V$,$\forall \, c > 0$ we have

\begin{equation}
\begin{split}
\underset{\mathbb{S}\sim Q^{m}}{\mathbb{E}}dis_{\rho_{\mathbb{S}}}^{MV}(Q_{\mathcal{X}},P_{\mathcal{X}}) \leq &\frac{2\alpha}{1-e^{-2\alpha}}\Biggr[\underset{\mathbb{S}\sim Q^{m}}{\mathbb{E}}dis_{\rho_{\mathbb{S}}}^{MV}(\mathbb{S},\mathbb{T})\\
& + \frac{\underset{\mathbb{S}\sim Q^{m}}{\mathbb{E}}\,\underset{v\sim \rho_{\mathbb{S}}}{\mathbb{E}}D_{\textnormal{KL}}(\mathcal{Q}_{v,\mathbb{S}}\Vert\mathcal{P}_{v}) + \underset{\mathbb{S}\sim Q^{m}}{\mathbb{E}}D_{\textnormal{KL}}(\rho_{\mathbb{S}}\Vert\pi) + \ln{\sqrt{\frac{1}{\delta}}}}{m\times \alpha}\Biggr].
    \end{split}
\end{equation}

\end{corollary}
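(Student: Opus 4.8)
The plan is to specialise our general multi-view domain disagreement bound (Theorem~\ref{our general theorem}) to a Catoni-type deviation function, exactly as Corollary~\ref{bound_hennequin_mcallester} was obtained from the quadratic choice $\Delta(a,b)=2(|a-b|)^{2}$. Following Germain et al.\ \cite{GER09,GER15,GER15a}, I would fix $c=2\alpha>0$ and take
\[ \Delta(a,b)\;=\;\mathcal{F}_{\alpha}(b)-2\alpha\,a,\qquad \mathcal{F}_{\alpha}(b)\;=\;-\ln\!\bigl(1-(1-e^{-2\alpha})\,b\bigr), \]
which is a legitimate convex function on $[0,1]^{2}$ since $\mathcal{F}_{\alpha}$ is increasing and convex in $b$ while $a\mapsto-2\alpha a$ is linear. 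As in the proof of Theorem~\ref{our general theorem}, $\mathcal{R}_{d}(\hat{h})$ and $\mathcal{R}_{\hat{d}}(\hat{h})$ play the role of the "true" and "empirical" risks of a paired voter $\hat{h}=(h,h')$ on the joint distribution, and averaging them over $(h,h')\sim\mathcal{Q}_{v,\mathbb{S}}^{2}$, $(v,v')\sim\rho_{\mathbb{S}}^{2}$ and $\mathbb{S}\sim Q^{m}$ recovers $\underset{\mathbb{S}\sim Q^{m}}{\mathbb{E}}\,dis^{\textnormal{MV}}_{\rho_{\mathbb{S}}}(Q_{\mathcal{X}},P_{\mathcal{X}})$ and $\underset{\mathbb{S}\sim Q^{m}}{\mathbb{E}}\,dis^{\textnormal{MV}}_{\rho_{\mathbb{S}}}(\mathbb{S}_{\mathcal{X}},\mathbb{T}_{\mathcal{X}})$ respectively.

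The crux is to control the exponential-moment factor on the right-hand side of Theorem~\ref{our general theorem}. Swapping the order of expectations (the priors $\mathcal{P}_{v}$ and $\pi$ do not depend on the samples), it equals
\[ \underset{(v,v')\sim\pi^{2}}{\mathbb{E}}\,\underset{(h,h')\sim\mathcal{P}_{v}^{2}}{\mathbb{E}}\;e^{\,m\mathcal{F}_{\alpha}(\mathcal{R}_{d}(\hat{h}))}\;\underset{\mathbb{S}\sim Q^{m}}{\mathbb{E}}\;e^{-2\alpha m\,\mathcal{R}_{\hat{d}}(\hat{h})}. \]
Because $\mathcal{L}_{d}(\hat{h},\cdot,\cdot)$ is $\{0,1\}$-valued and, for $m=n$ with the $i$-th source and target instances paired, the $m$ summands of $m\,\mathcal{R}_{\hat{d}}(\hat{h})$ are i.i.d.\ Bernoulli with mean $\mathcal{R}_{d}(\hat{h})$, the inner expectation is the moment generating function of a $\mathrm{Binomial}(m,\mathcal{R}_{d}(\hat{h}))$ variable evaluated at $-2\alpha$, namely $\bigl(1-(1-e^{-2\alpha})\mathcal{R}_{d}(\hat{h})\bigr)^{m}$. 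By the very choice of $\mathcal{F}_{\alpha}$ this cancels the factor $e^{\,m\mathcal{F}_{\alpha}(\mathcal{R}_{d}(\hat{h}))}=\bigl(1-(1-e^{-2\alpha})\mathcal{R}_{d}(\hat{h})\bigr)^{-m}$, so the whole expression reduces to $\underset{(v,v')\sim\pi^{2}}{\mathbb{E}}\,\underset{(h,h')\sim\mathcal{P}_{v}^{2}}{\mathbb{E}}[1]=1$. Applying Markov's inequality to the sample-dependent random variable $\underset{(v,v')\sim\pi^{2}}{\mathbb{E}}\,\underset{(h,h')\sim\mathcal{P}_{v}^{2}}{\mathbb{E}}e^{\,m\Delta(\mathcal{R}_{\hat{d}}(\hat{h}),\mathcal{R}_{d}(\hat{h}))}$, exactly as in the proof of Corollary~\ref{bound_hennequin_mcallester}, gives that with probability at least $1-\delta$ the exponential-moment term in Theorem~\ref{our general theorem} is at most $\tfrac{1}{\delta}$, hence $\ln\sqrt{\,\cdot\,}\le\ln\sqrt{1/\delta}$.

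Substituting this into Theorem~\ref{our general theorem} yields, with probability at least $1-\delta$,
\[ \mathcal{F}_{\alpha}\!\Bigl(\underset{\mathbb{S}\sim Q^{m}}{\mathbb{E}}\,dis^{\textnormal{MV}}_{\rho_{\mathbb{S}}}(Q_{\mathcal{X}},P_{\mathcal{X}})\Bigr)-2\alpha\underset{\mathbb{S}\sim Q^{m}}{\mathbb{E}}\,dis^{\textnormal{MV}}_{\rho_{\mathbb{S}}}(\mathbb{S}_{\mathcal{X}},\mathbb{T}_{\mathcal{X}})\le\frac{2}{m}\Bigl[\underset{\mathbb{S}\sim Q^{m}}{\mathbb{E}}\,\underset{v\sim\rho_{\mathbb{S}}}{\mathbb{E}}D_{\textnormal{KL}}(\mathcal{Q}_{v,\mathbb{S}}\Vert\mathcal{P}_{v})+\underset{\mathbb{S}\sim Q^{m}}{\mathbb{E}}D_{\textnormal{KL}}(\rho_{\mathbb{S}}\Vert\pi)+\ln\sqrt{\tfrac{1}{\delta}}\Bigr]. \]
Writing out $\mathcal{F}_{\alpha}$, this says $1-(1-e^{-2\alpha})\,\underset{\mathbb{S}\sim Q^{m}}{\mathbb{E}}\,dis^{\textnormal{MV}}_{\rho_{\mathbb{S}}}(Q_{\mathcal{X}},P_{\mathcal{X}})\ge\exp\!\bigl(-2\alpha\,\underset{\mathbb{S}\sim Q^{m}}{\mathbb{E}}\,dis^{\textnormal{MV}}_{\rho_{\mathbb{S}}}(\mathbb{S}_{\mathcal{X}},\mathbb{T}_{\mathcal{X}})-\tfrac{2}{m}[\,\cdots\,]\bigr)$; using the elementary inequality $1-e^{-x}\le x$ and dividing through by $1-e^{-2\alpha}$ then rearranges this into precisely the claimed bound, after factoring $\tfrac{2\alpha}{1-e^{-2\alpha}}$ out of the right-hand side (note $\tfrac{2}{m(1-e^{-2\alpha})}=\tfrac{2\alpha}{1-e^{-2\alpha}}\cdot\tfrac{1}{m\alpha}$).

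The step I expect to be the main obstacle is the exponential-moment computation: making rigorous that $m\,\mathcal{R}_{\hat{d}}(\hat{h})$ is genuinely $\mathrm{Binomial}$ (which relies on $m=n$ and on pairing source and target instances so that the per-pair losses are i.i.d.), carrying out the interchange of expectations, and turning the deterministic-looking statement of Theorem~\ref{our general theorem} into a high-probability one via the Markov step; once the cancellation $e^{\,m\mathcal{F}_{\alpha}(p)}\cdot(1-(1-e^{-2\alpha})p)^{m}=1$ is in place, everything else mirrors the routine single-view Catoni algebra of Germain et al.\ \cite{GER09,GER15,GER15a}.
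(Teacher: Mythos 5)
Your proposal follows essentially the same route as the paper's own proof: the same Catoni deviation function $\Delta(a,b)=\mathcal{F}_{\alpha}(b)-2\alpha a$ with $\mathcal{F}_{\alpha}(b)=-\ln\bigl(1-(1-e^{-2\alpha})b\bigr)$, the same binomial moment-generating-function computation that cancels $e^{m\mathcal{F}_{\alpha}(\mathcal{R}_{d}(\hat{h}))}$ to reduce the exponential-moment term to $1$, the same Markov step giving the $1/\delta$ factor, and the same final rearrangement via $1-e^{-x}\le x$. The argument and the bookkeeping of constants are correct and match the paper's proof in all essential respects.
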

\begin{proof}
    Refer to the proof of Theorem \ref{genral_theorem}  for the definitions of $\mathcal{R}_{d}(\hat{h})$ and $\mathcal{R}_{d}(G_{\mathcal{Q}}^{MV})$, as well as their empirical counterparts $\mathcal{R}_{\hat{d}}(\hat{h})$ and $\mathcal{R}_{\hat{h}}(G_{\mathcal{Q}}^{MV})$.
    As $\mathcal{L}_{d}$ defined in Theorem \ref{genral_theorem} lies in $[0, 1]$, we can bound $\mathcal{R}_{d}(G_{Q}^{\textnormal{MV}})$ following the proof process of Theorem 5 in \cite{GER15} (with $c = 2\alpha$). To do so, we take $\Delta(a,b) = \mathcal{F}(b)-a$ and we define the convex function, $\mathcal{F}(b) = -\ln{[1-(1-e^{-2\alpha})b]}$. We also consider the non-negative random variable $\underset{S\sim Q^{m}}{\mathbb{E}}\,\underset{(v,v')\sim\pi^{2}}{\mathbb{E}}\,\underset{(h,h')\sim\mathcal{P}_{v}^{2}}{\mathbb{E}}e^{m\mathcal{F}(\mathcal{R}_{d}(\hat{h}))-2\alpha\mathcal{R}_{\hat{d}}(\hat{h}))}$.\\
    $\forall \delta \in (0,1]$, we have,
    \begin{displaymath}
    \begin{split}
        \underset{S\sim Q^{m}}{\mathbb{E}}\,\underset{(v,v')\sim\pi^{2}}{\mathbb{E}}\,\underset{(h,h')\sim\mathcal{P}_{v}^{2}}{\mathbb{E}}e^{m\mathcal{F}(\mathcal{R}_{d}(\hat{h}))-2m\alpha\mathcal{R}_{\hat{d}}(\hat{h})} &\leq \frac{1}{\delta} \underset{(v,v')\sim\pi^{2}}{\mathbb{E}}\,\underset{(h,h')\sim\mathcal{P}_{v}^{2}}{\mathbb{E}}\, \underset{S\sim Q^{m}}{\mathbb{E}}e^{m\mathcal{F}(\mathcal{R}_{d}(\hat{h}))-2m\alpha \mathcal{R}_{\hat{d}}(\hat{h})},\\&(\textnormal{change of the expectation})\\
        &= \frac{1}{\delta}\underset{(v,v')\sim\pi^{2}}{\mathbb{E}}\Bigg[\underset{(h,h')\sim\mathcal{P}_{v}^{2}}{\mathbb{E}}e^{m\mathcal{F}(\mathcal{R}_{d}(\hat{h}))} \underset{S\sim Q^{m}}{\mathbb{E}}e^{-2m\alpha \mathcal{R}_{\hat{d}}(\hat{h})}\Bigg].
        \end{split}
    \end{displaymath}
    For a classifier $h$, let us define a random variable $X_{\hat{h}}$ that follows a binomial distribution of $m$ trials with a probability of success $\mathcal{R}_{d}(\hat{h})$ denoted by $B(m,\mathcal{R}_{d}(\hat{h}))$. We have:
    \begin{displaymath}
    \begin{split}
        \underset{S\sim Q^{m}}{\mathbb{E}}e^{-2m\alpha \mathcal{R}_{\hat{d}}(\hat{h})} &\leq \underset{X_{\hat{h}}\sim B(m,\mathcal{R}_{d}(\hat{h}))}{\mathbb{E}}e^{-2m\alpha X_{\hat{h}}}\\
        &(\textnormal{Maurer’s lemma Lemma \ref{Maurer’s lemma 1}})\\
        &=\sum_{k=0}^{m} \underset{X_{\hat{h}}\sim B(m,\mathcal{R}_{d}(\hat{h}))}{\mathbb{P}}(X_{\hat{h}} = k)e^{-2\alpha k}\\
        &=\sum_{k=0}^{m}\binom{m}{k}\mathcal{R}_{d}(\hat{h})^{k}(1-\mathcal{R}_{d}(\hat{h}))^{m-k}e^{-2\alpha k}\\
        &=\Big[\mathcal{R}_{d}(\hat{h})e^{-2\alpha} + (1-\mathcal{R}_{d}(\hat{h}))\Big]^{m}.
        \end{split}
    \end{displaymath}
    Then we obtain,
    \begin{displaymath}
        \begin{split}
           \underset{(h,h')\sim\mathcal{P}_{v}^{2}}{\mathbb{E}}e^{m\mathcal{F}(\mathcal{R}_{d}(\hat{h}))} \underset{S\sim Q^{m}}{\mathbb{E}}e^{-2m\alpha \mathcal{R}_{\hat{d}}(\hat{h})} \leq  &\underset{(h,h')\sim\mathcal{P}_{v}^{2}}{\mathbb{E}}e^{m\mathcal{F}(\mathcal{R}_{d}(\hat{h}))}\Big(\mathcal{R}_{d}(\hat{h})e^{-2\alpha} + (1-\mathcal{R}_{d}(\hat{h}))\Big)^{m}\\
           &=\underset{(h,h')\sim\mathcal{P}_{v}^{2}}{\mathbb{E}} 1 = 1.
        \end{split}
    \end{displaymath}
    We can now upper bound Equation simply by
    \begin{displaymath}
        \begin{split}
        \underset{S\sim Q^{m}}{\mathbb{E}}\,\underset{(v,v')\sim\pi^{2}}{\mathbb{E}}\,\underset{(h,h')\sim\mathcal{P}_{v}^{2}}{\mathbb{E}}e^{m\mathcal{F}(\mathcal{R}_{d}(\hat{h}))-2m\alpha\mathcal{R}_{\hat{d}}(\hat{h})} &\leq \frac{1}{\delta} 
        \end{split}
    \end{displaymath}
    \begin{displaymath}
        \begin{split}    
        \mathcal{F}\Big(\underset{\mathbb{S}\sim Q^{m}}{\mathbb{E}}dis_{\rho_{\mathbb{S}}}^{MV}(Q_{\mathcal{X}},P_{\mathcal{X}})\Big)&\leq \underset{\mathbb{S}\sim Q^{m}}{\mathbb{E}} 2\alpha dis_{\rho_{\mathbb{S}}}^{MV}(\mathbb{S}_{\mathcal{X}},\mathbb{T}_{\mathcal{X}})\Big) + \frac{1}{m}\Bigg[\underset{\mathbb{S}\sim Q^{m}}{\mathbb{E}}\,\underset{v\sim \rho_{\mathbb{S}}}{\mathbb{E}}2D_{\textnormal{KL}}(\mathcal{Q}_{v,\mathbb{S}}\Vert\mathcal{P}_{v})\\ &+ \underset{\mathbb{S}\sim Q^{m}}{\mathbb{E}}2D_{\textnormal{KL}}(\rho_{\mathbb{S}}\Vert\pi) + \ln{\frac{1}{\delta}}\Bigg]\\
        \underset{\mathbb{S}\sim Q^{m}}{\mathbb{E}} dis_{\rho_{\mathbb{S}}}^{MV}(Q_{\mathcal{X}},P_{\mathcal{X}}) &\leq \frac{1}{1-e^{-2\alpha}}\Bigg[ 1\\&-e^{-\big[\underset{\mathbb{S}\sim Q^{m}}{\mathbb{E}}2\alpha dis_{\rho_{\mathbb{S}}}^{MV}(\mathbb{S}_{\mathcal{X}},\mathbb{T}_{\mathcal{X}}) + \frac{1}{m}\big(\underset{\mathbb{S}\sim Q^{m}}{\mathbb{E}}\,\underset{v\sim \rho_{\mathbb{S}}}{\mathbb{E}}2D_{\textnormal{KL}}(\mathcal{Q}_{v,\mathbb{S}}\Vert\mathcal{P}_{v}) + \underset{\mathbb{S}\sim Q^{m}}{\mathbb{E}}2D_{\textnormal{KL}}(\rho_{\mathbb{S}}\Vert\pi) + \ln{\frac{1}{\delta}}\big)\big]}\Bigg]\\ &(\textnormal{from the inequality}\;1-e^{-x} \leq x)\\ \underset{\mathbb{S}\sim Q^{m}}{\mathbb{E}}dis_{\rho_{\mathbb{S}}}^{MV}(Q_{\mathcal{X}},P_{\mathcal{X}})\Big) &\leq \frac{1}{1-e^{-2\alpha}}\Bigg[ \underset{\mathbb{S}\sim Q^{m}}{\mathbb{E}} 2\alpha dis_{\rho_{\mathbb{S}}}^{MV}(\mathbb{S}_{\mathcal{X}},\mathbb{T}_{\mathcal{X}}) + \frac{1}{m}\Bigg(\underset{\mathbb{S}\sim Q^{m}}{\mathbb{E}}\,\underset{v\sim \rho_{\mathbb{S}}}{\mathbb{E}}2D_{\textnormal{KL}}(\mathcal{Q}_{v,\mathbb{S}}\Vert\mathcal{P}_{v})\\& + \underset{\mathbb{S}\sim Q^{m}}{\mathbb{E}}2D_{\textnormal{KL}}(\rho_{\mathbb{S}}\Vert\pi) + \ln{\frac{1}{\delta}}\Bigg)\Bigg]\\
        \underset{\mathbb{S}\sim Q^{m}}{\mathbb{E}}dis_{\rho_{\mathbb{S}}}^{MV}(Q_{\mathcal{X}},P_{\mathcal{X}}) &\leq \frac{2\alpha}{1-e^{-2\alpha}}\Biggr[\underset{\mathbb{S}\sim Q^{m}}{\mathbb{E}}dis_{\rho_{\mathbb{S}}}^{MV}(\mathbb{S}_{\mathcal{X}},\mathbb{T}_{\mathcal{X}})\\ &+ \frac{\underset{\mathbb{S}\sim Q^{m}}{\mathbb{E}}\,\underset{v\sim \rho_{\mathbb{S}}}{\mathbb{E}}D_{\textnormal{KL}}(\mathcal{Q}_{v,\mathbb{S}}\Vert\mathcal{P}_{v}) + \underset{\mathbb{S}\sim Q^{m}}{\mathbb{E}}D_{\textnormal{KL}}(\rho_{\mathbb{S}}\Vert\pi) + \ln{\sqrt{\frac{1}{\delta}}}}{m\times \alpha}\Biggr]
        \end{split}
    \end{displaymath}

\end{proof}
\section{Detailed Proof of Corollary 3}
\begin{corollary}\label{bound_hennequin_DKL}
    $\forall \, v \in [\![\mathcal{V}]\!]$, for any set of voters $\mathcal{H}_{v}$ for any marginal distributions $Q_{\mathcal{X}}$ and $P_{\mathcal{X}}$ over $\mathcal{X}$, any set of posterior distribution $\{\mathcal{Q}_{v,\mathbb{S}}\}_{v=1}^{V}$ on $\mathcal{H}_{v}$, for any hyper-posterior distribution $\rho_{\mathbb{S}}$ over $[\![\mathcal{V}]\!]$, for any set of prior distributions $\{\mathcal{P}_{v}\}_{v=1}^{V}$ on $\mathcal{H}_{v}$, for any hyper-prior distribution $\pi$ over $[\![\mathcal{V}]\!]$, for any $\delta \in (0, 1]$, with probability at least $1-\delta$, we have:
    \begin{equation}
    \begin{split}
    &D_{\textnormal{KL}}\bigg(\underset{\mathbb{S}\sim Q^{m}}{\mathbb{E}}dis_{\rho_{\mathbb{S}}}^{MV}(\mathbb{S}_{\mathcal{X}},\mathbb{T}_{\mathcal{X}}),\underset{\mathbb{S}\sim Q^{m}}{\mathbb{E}}dis_{\rho_{\mathbb{S}}}^{MV}(Q_{\mathcal{X}},P_{\mathcal{X}})\bigg)\\
    \leq &\frac{1}{m}\Bigg[\underset{\mathbb{S}\sim Q^{m}}{\mathbb{E}}\,\underset{v\sim \rho_{\mathbb{S}}}{\mathbb{E}}2D_{\textnormal{KL}}(\mathcal{Q}_{v,\mathbb{S}}\Vert\mathcal{P}_{v}) + \underset{\mathbb{S}\sim Q^{m}}{\mathbb{E}}2D_{\textnormal{KL}}(\rho_{\mathbb{S}}\Vert\pi) + \ln{\frac{2\sqrt{m}}{\delta}}\Bigg].
        \end{split}
    \end{equation}
\end{corollary}
\begin{proof}
    Refer to the proof of Theorem \ref{genral_theorem}  for the definitions of $\mathcal{R}_{d}(\hat{h})$ and $\mathcal{R}_{d}(G_{\mathcal{Q}}^{MV})$, as well as their empirical counterparts $\mathcal{R}_{\hat{d}}(\hat{h})$ and $\mathcal{R}_{\hat{d}}(G_{\mathcal{Q}}^{MV})$. we apply Theorem \ref{genral_theorem} with: 
    \begin{displaymath}
        D_{\textnormal{KL}}(q,p) = q\ln{\frac{q}{p}} + (1-q)\ln{\frac{1-q}{1-p}},
    \end{displaymath}
    Now, let us consider the non-negative random variable $\underset{S\sim Q^{m}}{\mathbb{E}}\,\underset{(v,v')\sim\pi^{2}}{\mathbb{E}}\,\underset{(h,h')\sim\mathcal{P}_{v}^{2}}{\mathbb{E}}e^{mD_{\textnormal{KL}}(\mathcal{R}_{\hat{d}}(\hat{h}),\mathcal{R}_{d}(\hat{h}))}$, $\mathcal{R}_{\hat{d}}(\hat{h})$ as a random variable which follows a binomial distribution of m trials with a probability of success $\mathcal{R}_{d}(\hat{h})$, we have:
    \begin{displaymath}
    \begin{split}
        \underset{S\sim Q^{m}}{\mathbb{E}}\,\underset{(v,v')\sim\pi^{2}}{\mathbb{E}}\,\underset{(h,h')\sim\mathcal{P}_{v}^{2}}{\mathbb{E}}e^{mD_{\textnormal{KL}}(\mathcal{R}_{\hat{d}}(\hat{h}),\mathcal{R}_{d}(\hat{h}))} &= \underset{(v,v')\sim\pi^{2}}{\mathbb{E}}\,\underset{(h,h')\sim\mathcal{P}_{v}^{2}}{\mathbb{E}}\,\underset{S\sim Q^{m}}{\mathbb{E}}e^{mD_{\textnormal{KL}}(\mathcal{R}_{\hat{d}}(\hat{h}),\mathcal{R}_{d}(\hat{h}))}\\& (\textnormal{change of the expectation})\\
        &  \leq \frac{1}{\delta}\underset{(v,v')\sim\pi^{2}}{\mathbb{E}}\,\underset{(h,h')\sim\mathcal{P}_{v}^{2}}{\mathbb{E}}\,\underset{S\sim Q^{m}}{\mathbb{E}}e^{mD_{\textnormal{KL}}(\mathcal{R}_{\hat{d}}(\hat{h}),\mathcal{R}_{d}(\hat{h}))}\\&
        \leq\frac{1}{\delta}\underset{(v,v')\sim\pi^{2}}{\mathbb{E}}\,\underset{(h,h')\sim\mathcal{P}_{v}^{2}}{\mathbb{E}}2\sqrt{m}
        \\& = \frac{2\sqrt{m}}{\delta}.
        \end{split}
    \end{displaymath}
    By upper-bounding  $\underset{S\sim Q^{m}}{\mathbb{E}}\,\underset{(v,v')\sim\pi^{2}}{\mathbb{E}}\,\underset{(h,h')\sim\mathcal{P}_{v}^{2}}{\mathbb{E}}e^{mD_{\textnormal{KL}}(\mathcal{R}_{\hat{d}}(\hat{h})\Vert\mathcal{R}_{d}(\hat{h}))}$ with $\frac{2\sqrt{m}}{\delta}$, we obtain the following inequality:
    \begin{displaymath}
    \begin{split}
    &D_{\textnormal{KL}}\bigg(\underset{\mathbb{S}\sim Q^{m}}{\mathbb{E}}dis_{\rho_{\mathbb{S}}}^{MV}(\mathbb{S}_{\mathcal{X}},\mathbb{T}_{\mathcal{X}}),\underset{\mathbb{S}\sim Q^{m}}{\mathbb{E}}dis_{\rho_{\mathbb{S}}}^{MV}(Q_{\mathcal{X}},P_{\mathcal{X}})\bigg)\\
    \leq &\frac{1}{m}\Bigg[\underset{\mathbb{S}\sim Q^{m}}{\mathbb{E}}\,\underset{v\sim \rho_{\mathbb{S}}}{\mathbb{E}}2D_{\textnormal{KL}}(\mathcal{Q}_{v,\mathbb{S}}\Vert\mathcal{P}_{v}) + \underset{\mathbb{S}\sim Q^{m}}{\mathbb{E}}2D_{\textnormal{KL}}(\rho_{\mathbb{S}}\Vert\pi) + \ln{\frac{2\sqrt{m}}{\delta}}\Bigg].
        \end{split}
    \end{displaymath}
\end{proof}

\section{Detailed Proof of Corollary 4}
\begin{proof}
    let us consider the non-negative random variable\\ $\underset{\mathbb{S}\sim Q^{m}}{\mathbb{E}}\,\underset{(v,v')\sim\pi^{2}}{\mathbb{E}}\,\underset{(h,h')\sim\mathcal{P}_{v}^{2}}{\mathbb{E}}e^{2m(\mathcal{R}_{Q_{\mathcal{X}}}(h,h')-\mathcal{R}_{\mathbb{S}_{\mathcal{X}}}(h,h'))^{2}}$, $\mathcal{R}_{\mathbb{S}_{\mathcal{X}}}(h,h')$ as a random variable which follows a binomial distribution of m trials with a probability of success $\mathcal{R}_{Q_{\mathcal{X}}}(h,h')$, we have:
    \begin{displaymath}
    \begin{split}
        \underset{\mathbb{S}\sim Q^{m}}{\mathbb{E}}\,\underset{(v,v')\sim\pi^{2}}{\mathbb{E}}\,\underset{(h,h')\sim\mathcal{P}_{v}^{2}}{\mathbb{E}}e^{2m(\mathcal{R}_{Q_{\mathcal{X}}}(h,h')-\mathcal{R}_{\mathbb{S}_{\mathcal{X}}}(h,h'))^{2}} &= \underset{(v,v')\sim\pi^{2}}{\mathbb{E}}\,\underset{(h,h')\sim\mathcal{P}_{v}^{2}}{\mathbb{E}}\,\underset{S\sim Q^{m}}{\mathbb{E}}e^{2m(\mathcal{R}_{Q_{\mathcal{X}}}(h,h')-\mathcal{R}_{\mathbb{S}_{\mathcal{X}}}(h,h'))^{2}}\\ &(\textnormal{change of the expectation})\\
        &  \leq \frac{1}{\delta}\underset{(v,v')\sim\pi^{2}}{\mathbb{E}}\,\underset{(h,h')\sim\mathcal{P}_{v}^{2}}{\mathbb{E}}\,\underset{S\sim Q^{m}}{\mathbb{E}}e^{mD_{\textnormal{KL}}(\mathcal{R}_{Q_{\mathcal{X}}}(h,h')\Vert\mathcal{R}_{\mathbb{S}_{\mathcal{X}}}(h,h'))}\\ &(\textnormal{Pinsker’s inequality Lemma \ref{Pinsker’s inequality}} )\\&
        \leq\frac{1}{\delta}\underset{(v,v')\sim\pi^{2}}{\mathbb{E}}\,\underset{(h,h')\sim\mathcal{P}_{v}^{2}}{\mathbb{E}}2\sqrt{m}\\&(\textnormal{Maurer’s lemma Lemma \ref{Maurer’s lemma}} )
        \\& = \frac{2\sqrt{m}}{\delta}.
        \end{split}
    \end{displaymath}
    By taking the logarithm on each outermost side of the previous inequality, we obtain
    \begin{displaymath}
        \ln{\underset{S\sim Q^{m}}{\mathbb{E}}\,\underset{(v,v')\sim\pi^{2}}{\mathbb{E}}\,\underset{(h,h')\sim\mathcal{P}_{v}^{2}}{\mathbb{E}}e^{2m(\mathcal{R}_{Q_{\mathcal{X}}}(h,h')-\mathcal{R}_{\mathbb{S}_{\mathcal{X}}}(h,h'))^{2}}} \leq \ln{\frac{2\sqrt{m}}{\delta}}
    \end{displaymath}
    Let us now find a lower bound of the left side of the last equation by using the change of measure inequality (Lemma \ref{Change of measure inequality extended to multi-view learning}), the Jensen inequality (Lemma \ref{Jensen’s inequality}) and Lemma \ref{lemma_2dkl}.
    \begin{displaymath}
    \begin{split}
         \ln{\underset{S\sim Q^{m}}{\mathbb{E}}\,\underset{(v,v')\sim\pi}{\mathbb{E}}\,\underset{(h,h')\sim\mathcal{P}_{v}}{\mathbb{E}}e^{2m(\mathcal{R}_{Q}(h,h')-\mathcal{R}_{\mathbb{S}}(h,h'))^{2}}} &\geq \\&\underset{S\sim Q^{m}}{\mathbb{E}}\,2m\Big(\underset{v\sim \rho}{\mathbb{E}} \, \underset{h\sim \mathcal{Q}}{\mathbb{E}}\;\mathcal{R}_{Q}(h,h')-\underset{v\sim \rho}{\mathbb{E}} \, \underset{h\sim \mathcal{Q}}{\mathbb{E}}\;\mathcal{R}_{\mathbb{S}}(h,h')\Big)^{2}\\&-\underset{S\sim Q^{m}}{\mathbb{E}}\;\underset{v\sim \rho}{\mathbb{E}}2D_{\textnormal{KL}}(\mathcal{Q}_{v,\mathbb{S}} \Vert \mathcal{P}_{v}) - \underset{S\sim Q^{m}}{\mathbb{E}}\;2D_{\textnormal{KL}}(\rho_{\mathbb{S}} \Vert \pi) \\ &\geq \underset{S\sim Q^{m}}{\mathbb{E}}\,2m\Big(\mathcal{R}_{Q}(G_{\rho_{\mathbb{S}}}^{\textnormal{MV}},G_{\rho_{\mathbb{S}}}^{\textnormal{MV}})-\mathcal{R}_{\mathbb{S}}(G_{\rho_{\mathbb{S}}}^{\textnormal{MV}},G_{\rho_{\mathbb{S}}}^{\textnormal{MV}})\Big)^{2}\\&-\underset{S\sim Q^{m}}{\mathbb{E}}\;\underset{v\sim \rho}{\mathbb{E}}2D_{\textnormal{KL}}(\mathcal{Q}_{v,\mathbb{S}} \Vert \mathcal{P}_{v}) - \underset{S\sim Q^{m}}{\mathbb{E}}\;2D_{\textnormal{KL}}(\rho_{\mathbb{S}} \Vert \pi)
         \end{split}
    \end{displaymath}
    We finally obtain,
    \begin{displaymath}
    \begin{split}
        \underset{S\sim Q^{m}}{\mathbb{E}}\,2m\Big(\mathcal{R}_{Q}(G_{\rho_{\mathbb{S}}}^{\textnormal{MV}},G_{\rho_{\mathbb{S}}}^{\textnormal{MV}})-\mathcal{R}_{\mathbb{S}}(G_{\rho_{\mathbb{S}}}^{\textnormal{MV}},G_{\rho_{\mathbb{S}}}^{\textnormal{MV}})\Big)^{2} \leq &\underset{S\sim Q^{m}}{\mathbb{E}}\;\underset{v\sim \rho}{\mathbb{E}}2D_{\textnormal{KL}}(\mathcal{Q}_{v,\mathbb{S}} \Vert \mathcal{P}_{v}) + \underset{S\sim Q^{m}}{\mathbb{E}}\;2D_{\textnormal{KL}}(\rho_{\mathbb{S}} \Vert \pi)\\ & 
 + \ln{\frac{2\sqrt{m}}{\delta}}
        \end{split}
    \end{displaymath}
        \begin{equation}
    \begin{split}
        \underset{S\sim Q^{m}}{\mathbb{E}}\,\Big|\mathcal{R}_{Q}(G_{\rho_{\mathbb{S}}}^{\textnormal{MV}},G_{\rho_{\mathbb{S}}}^{\textnormal{MV}})-\mathcal{R}_{\mathbb{S}}(G_{\rho_{\mathbb{S}}}^{\textnormal{MV}},G_{\rho_{\mathbb{S}}}^{\textnormal{MV}})\Big| \leq &\sqrt{\frac{1}{2m}\Bigg[\underset{S\sim Q^{m}}{\mathbb{E}}\;\underset{v\sim \rho}{\mathbb{E}}2D_{\textnormal{KL}}(\mathcal{Q}_{v,\mathbb{S}} \Vert \mathcal{P}_{v}) + \underset{S\sim Q^{m}}{\mathbb{E}}\;2D_{\textnormal{KL}}(\rho_{\mathbb{S}} \Vert \pi)}\\ & 
 + \overline{\ln{\frac{2\sqrt{m}}{\delta}}}\Bigg]
        \end{split}
    \end{equation}
    Following the exact same proof process with the random variable\\ $\underset{\mathbb{S}\sim Q^{m}}{\mathbb{E}}\,\underset{(v,v')\sim\pi^{2}}{\mathbb{E}}\,\underset{(h,h')\sim\mathcal{P}_{v}^{2}}{\mathbb{E}}e^{2n(\mathcal{R}_{P_{\mathcal{X}}}(h,h')-\mathcal{R}_{\mathbb{T}_{\mathcal{X}}}(h,h'))^{2}}$, we obtain,
    \begin{equation}
        \begin{split}
            \underset{\mathbb{S}\sim Q^{m}}{\mathbb{E}}\,\Big|\mathcal{R}_{P_{\mathcal{X}}}(G_{\rho_{\mathbb{S}}}^{\textnormal{MV}},G_{\rho_{\mathbb{S}}}^{\textnormal{MV}})-\mathcal{R}_{\mathbb{T}_{\mathcal{X}}}(G_{\rho_{\mathbb{S}}}^{\textnormal{MV}},G_{\rho_{\mathbb{S}}}^{\textnormal{MV}})\Big| \leq &\sqrt{\frac{1}{2n}\Bigg[\underset{\mathbb{S}\sim Q^{m}}{\mathbb{E}}\;\underset{v\sim \rho}{\mathbb{E}}2D_{\textnormal{KL}}(\mathcal{Q}_{v,\mathbb{S}} \Vert \mathcal{P}_{v}) + \underset{\mathbb{S}\sim Q^{m}}{\mathbb{E}}\;2D_{\textnormal{KL}}(\rho_{\mathbb{S}} \Vert \pi)}\\ & 
 + \overline{\ln{\frac{2\sqrt{n}}{\delta}}}\Bigg]
        \end{split}
    \end{equation}
    Joining Inequalities (7) and (8) with the union bound (that assure that both results hold simultaneously with probability $1-\delta$), gives the result because,
    \begin{displaymath}
        \begin{split}
             &\underset{S\sim Q^{m}}{\mathbb{E}}\,\Big|\mathcal{R}_{Q_{\mathcal{X}}}(G_{\rho_{\mathbb{S}}}^{\textnormal{MV}},G_{\rho_{\mathbb{S}}}^{\textnormal{MV}})-\mathcal{R}_{P_{\mathcal{X}}}(G_{\rho_{\mathbb{S}}}^{\textnormal{MV}},G_{\rho_{\mathbb{S}}}^{\textnormal{MV}})\Big| = dis_{\rho_{\mathbb{S}}}^{\textnormal{MV}}(Q_{\mathcal{X}},P_{\mathcal{X}}),\\ & \underset{S\sim Q^{m}}{\mathbb{E}}\,\Big|\mathcal{R}_{\mathbb{S}_{\mathcal{X}}}(G_{\rho_{\mathbb{S}}}^{\textnormal{MV}},G_{\rho_{\mathbb{S}}}^{\textnormal{MV}})-\mathcal{R}_{\mathbb{T}_{\mathcal{X}}}(G_{\rho_{\mathbb{S}}}^{\textnormal{MV}},G_{\rho_{\mathbb{S}}}^{\textnormal{MV}})\Big| = dis_{\rho_{\mathbb{S}}}^{\textnormal{MV}}(\mathbb{S}_{\mathcal{X}},\mathbb{T}_{\mathcal{X}}),
        \end{split}
    \end{displaymath}
    and because if $|a_1 - b_1| \leq c_1$ and $|a_2-b_2| \leq c_2$, then $|(a_1-a_2)- (b_1-b_2)| \leq c_1+c_2$.
\end{proof}
\end{document}